\let\NAT@parse\undefined
\newtheorem{lemma}{Lemma}
\newtheorem{theorem}{Theorem}
\newtheorem{corollary}{Corollary}
\newtheorem{remark}{Remark}
\begin{document}
%
% paper title
% Titles are generally capitalized except for words such as a, an, and, as,
% at, but, by, for, in, nor, of, on, or, the, to and up, which are usually
% not capitalized unless they are the first or last word of the title.
% Linebreaks \\ can be used within to get better formatting as desired.
% Do not put math or special symbols in the title.
%\title{Interpreting Channel Importance via Influence Functions for Deep Model Compression Simultaneously}
\title{Connection Sensitivity Matters for Training-free DARTS: From Architecture-Level Scoring to Operation-Level Sensitivity Analysis}

%
%
% author names and IEEE memberships
% note positions of  commas and nonbreaking spaces ( ~ ) LaTeX will not break
% a structure at a ~ so this keeps an author's name from being broken across
% two lines.
% use \thanks{} to gain access to the first footnote area
% a separate \thanks must be used for each paragraph as LaTeX2e's \thanks
% was not built to handle multiple paragraphs
%
%
%\IEEEcompsocitemizethanks is a special \thanks that produces the bulleted
% lists the Computer Society journals use for "first footnote" author
% affiliations. Use \IEEEcompsocthanksitem which works much like \item
% for each affiliation group. When not in compsoc mode,
% \IEEEcompsocitemizethanks becomes like \thanks and
% \IEEEcompsocthanksitem becomes a line break with idention. This
% facilitates dual compilation, although admittedly the differences in the
% desired content of \author between the different types of papers makes a
% one-size-fits-all approach a daunting prospect. For instance, compsoc 
% journal papers have the author affiliations above the "Manuscript
% received ..."  text while in non-compsoc journals this is reversed. Sigh.

\author{Miao Zhang,~\IEEEmembership{Member,~IEEE}, Wei Huang, Li Wang% <-this % stops a space
\IEEEcompsocitemizethanks{
%\IEEEcompsocthanksitem M. Shell was with the Department of Electrical and Computer Engineering, Georgia Institute of Technology, Atlanta, GA, 30332.\protect\\
% note need leading \protect in front of \\ to get a newline within \thanks as
% \\ is fragile and will error, could use \hfil\break instead.
%E-mail: see http://www.michaelshell.org/contact.html
\IEEEcompsocthanksitem  M. Zhang is with School of Computer Science and Technology, Harbin Institute of Technology (Shenzhen), China}% <-this % stops a space
\thanks{Manuscript received April 19, 2005; revised August 26, 2015.}}

% note the % following the last \IEEEmembership and also \thanks - 
% these prevent an unwanted space from occurring between the last author name
% and the end of the author line. i.e., if you had this:
% 
% \author{....lastname \thanks{...} \thanks{...} }
%                     ^------------^------------^----Do not want these spaces!
%
% a space would be appended to the last name and could cause every name on that
% line to be shifted left slightly. This is one of those "LaTeX things". For
% instance, "\textbf{A} \textbf{B}" will typeset as "A B" not "AB". To get
% "AB" then you have to do: "\textbf{A}\textbf{B}"
% \thanks is no different in this regard, so shield the last } of each \thanks
% that ends a line with a % and do not let a space in before the next \thanks.
% Spaces after \IEEEmembership other than the last one are OK (and needed) as
% you are supposed to have spaces between the names. For what it is worth,
% this is a minor point as most people would not even notice if the said evil
% space somehow managed to creep in.

% The paper headers
\markboth{Journal of \LaTeX\ Class Files,~Vol.~14, No.~8, August~2015}%
{Shell \MakeLowercase{\textit{et al.}}: Bare Advanced Demo of IEEEtran.cls for IEEE Computer Society Journals}
% The only time the second header will appear is for the odd numbered pages
% after the title page when using the twoside option.
% 
% *** Note that you probably will NOT want to include the author's ***
% *** name in the headers of peer review papers.                   ***
% You can use \ifCLASSOPTIONpeerreview for conditional compilation here if
% you desire.

% The publisher's ID mark at the bottom of the page is less important with
% Computer Society journal papers as those publications place the marks
% outside of the main text columns and, therefore, unlike regular IEEE
% journals, the available text space is not reduced by their presence.
% If you want to put a publisher's ID mark on the page you can do it like
% this:
%\IEEEpubid{0000--0000/00\$00.00~\copyright~2015 IEEE}
% or like this to get the Computer Society new two part style.
%\IEEEpubid{\makebox[\columnwidth]{\hfill 0000--0000/00/\$00.00~\copyright~2015 IEEE}%
%\hspace{\columnsep}\makebox[\columnwidth]{Published by the IEEE Computer Society\hfill}}
% Remember, if you use this you must call \IEEEpubidadjcol in the second
% column for its text to clear the IEEEpubid mark (Computer Society journal
% papers don't need this extra clearance.)

% use for special paper notices
%\IEEEspecialpapernotice{(Invited Paper)}

% for Computer Society papers, we must declare the abstract and index terms
% PRIOR to the title within the \IEEEtitleabstractindextext IEEEtran
% command as these need to go into the title area created by \maketitle.
% As a general rule, do not put math, special symbols or citations
% in the abstract or keywords.
\IEEEtitleabstractindextext{%
\begin{abstract}

The recently proposed training-free NAS methods abandon the training phase and design various \textit{zero-cost proxies} as scores to identify excellent architectures, arousing extreme computational efficiency for neural architecture search. However, recent works have observed that most training-free NAS, based upon the predictor-based NAS, encounters a \textit{parameter-intensive bias} that makes them prefer larger models when evaluating the whole model by summing up parameter-wise scores of all parameters. Different from predictor-based NAS, another more popular and simpler paradigm, \textit{Differentiable ARchiTecture Search} (DARTS), only compares among candidate operations for each edge, a.k.a. evaluate edge connectivity, after training the supernet to convergence. In this paper, we raise an interesting problem: \textit{{can we properly measure the operation importance in DARTS through a training-free way, with avoiding the parameter-intensive bias?}} We investigate this question through the lens of edge connectivity, and provide an affirmative answer by defining a connectivity concept, \textit{ZERo-cost Operation Sensitivity (\textbf{\textit{ZEROS}})}, to score the importance of candidate operations in DARTS at initialization. By devising an iterative and data-agnostic manner in utilizing \textit{ZEROS} for NAS, our novel trial leads to a framework called \textit{training free differentiable architecture search} (\textbf{FreeDARTS}). Based on the theory of Neural Tangent Kernel (NTK), we show the proposed connectivity score provably negatively correlated with the generalization bound of DARTS supernet after convergence under gradient descent training. In addition, we theoretically explain how \textit{\textit{ZEROS}} implicitly avoid \textit{parameter-intensive bias} in selecting architectures, and empirically show the searched architectures by FreeDARTS are of comparable size. Extensive experiments have been conducted on a series of search spaces, and results have demonstrated that FreeDARTS is a reliable and efficient baseline for neural architecture search. 
\end{abstract}

% Note that keywords are not normally used for peerreview papers.
\begin{IEEEkeywords}
AutoML, neural architecture search, training-free NAS, Neural Tangent Kernel.
\end{IEEEkeywords}}

% make the title area
\maketitle

% To allow for easy dual compilation without having to reenter the
% abstract/keywords data, the \IEEEtitleabstractindextext text will
% not be used in maketitle, but will appear (i.e., to be "transported")
% here as \IEEEdisplaynontitleabstractindextext when compsoc mode
% is not selected <OR> if conference mode is selected - because compsoc
% conference papers position the abstract like regular (non-compsoc)
% papers do!
\IEEEdisplaynontitleabstractindextext
% \IEEEdisplaynontitleabstractindextext has no effect when using
% compsoc under a non-conference mode.

% For peer review papers, you can put extra information on the cover
% page as needed:
% \ifCLASSOPTIONpeerreview
% \begin{center} \bfseries EDICS Category: 3-BBND \end{center}
% \fi
%
% For peerreview papers, this IEEEtran command inserts a page break and
% creates the second title. It will be ignored for other modes.
\IEEEpeerreviewmaketitle

\section{Introduction}\label{sec:introduction}
Neural Architecture Search (NAS) \cite{ren2020comprehensive} automates the neural network design process and has, therefore, received broad attention. However, the early NAS methods are often heavily computational-expensive since they require to evaluate a large number of architectures during the search \cite{real2018regularized,guo2018irlas}. To relieve this computational burden for efficient search, more recent studies \cite{bender2018understanding,zhang2018graph} shift to design performance proxies in evaluating architectures, so as avoiding truly training numerous architectures. A well-known paradigm is the \textit{weight-sharing}, which only trains a supernet as a predictor to evaluate all candidate architectures through inheriting weights from the supernet without extra training. In \cite{white2021powerful}, a performance predictor is trained based on a small portion of architectures along with their ground-truth performance, and the performance of remaining architectures is estimated based on the predictor, to boost the search efficiency, as shown in Fig. \ref{fig:difference} (a). Instead of devising proxies that require training, several recent studies \cite{mellor2020neural,shu2021nasi,chen2021neural} try to reveal how good an architecture is via a training-free way. A representative work is the \textit{zero-cost NAS} \cite{abdelfattah2021zero}, which introduces the training-free proxies in network pruning-at-initialization (\textbf{PaI}) to NAS, through {summing up} parameter-wise sensitivities of all parameters as the \textit{\textbf{architecture-level score}} to justify an architecture. Although this paradigm is extremely efficient, more recent studies \cite{ning2021evaluating,2022blog} empirically find it has a \textit{\textbf{parameter-intensive bias}} to select larger architectures, and those training-free metrics are highly correlated with the number of parameters \cite{yang2023revisiting}. More specific, since these zero-cost proxies are designed as non-negative, the final ``summing-up'' metric of an architecture with more parameters is supposed be higher, leading to an excessive preference for large architectures. Several recent works \cite{ning2021evaluating,2022blog} empirically and theoretically verified this phenomenon. As the large neural networks are generally supposed to have lower test loss \cite{yang2022neural}, the competitive performance from those searched architectures could hardly justify the effectiveness of training-free NAS methods, especially when recent works \cite{shu2022unifying} find that selecting architectures according to the number of parameters can achieve similar competitive performance as most existing training-free NAS methods.

\begin{figure*}[ht]
\centering
 \subfigure[Predictor-based NAS.]{
\includegraphics[height=4cm]{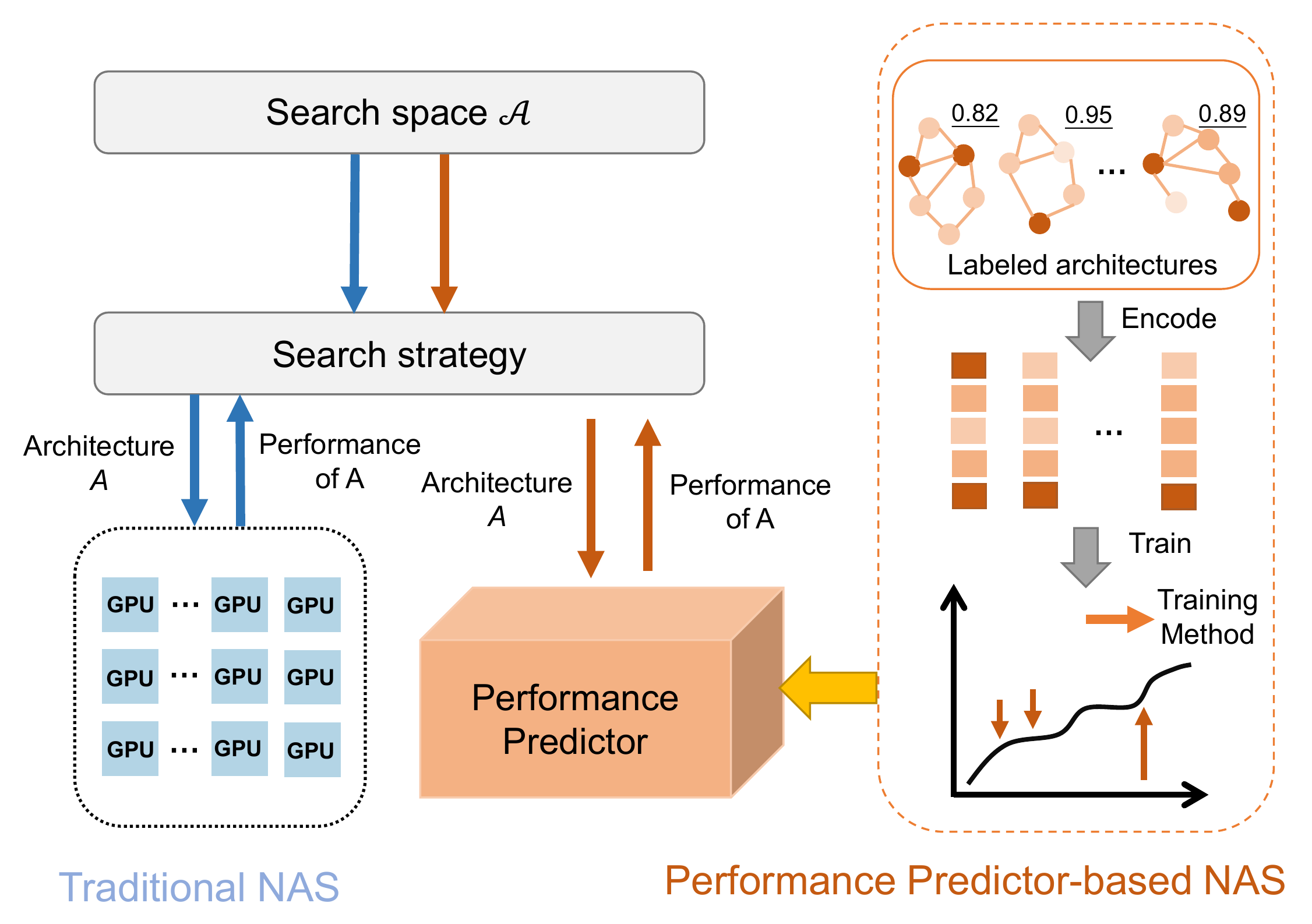}
}
 \subfigure[Differentiable NAS.]{
\includegraphics[height=4cm]{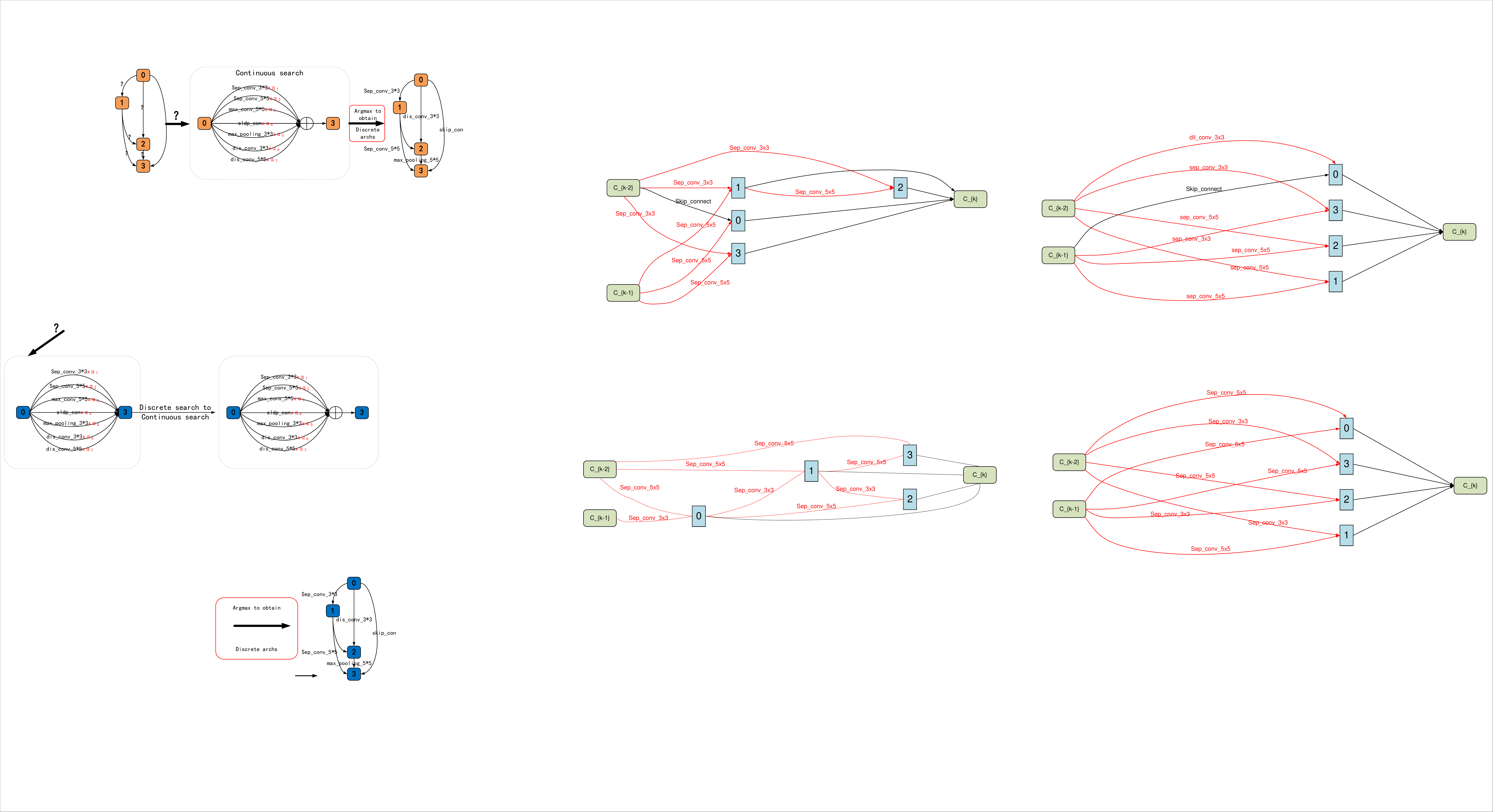}
}
 \caption{The main difference between predictor-based NAS and differentiable NAS. As shown, the predictor-based NAS requires evaluating the whole architecture while differentiable NAS only needs to compare among candidate operations, a.k.a. edge connectivity.}
 \label{fig:difference}
\end{figure*}

Different from the above methods, which can be categorized as the \textit{predictor-based NAS} as they require a predictor or proxy to score the whole architecture, another more popular and much simpler paradigm, \textit{differentiable NAS} or \textit{Differentiable ARchiTecture Search} (DARTS) \cite{liu2018darts}, only compares the strength among candidate operations for each edge. Figure \ref{fig:difference} visualizes the main difference between predictor-based NAS and differentiable NAS. As shown, different from predictor-based NAS that requires a predictor to compare among architectures, differentiable NAS only needs to select the most important operation for each edge. With continuous relaxation, DARTS converts the operation selection into a magnitude optimization problem, whereby the optimized magnitude is treated as the \textit{\textbf{operation-level score}} to measure the operation strength, as shown in Fig. \ref{fig:difference} (b). However, recent efforts \cite{Rethinking2021} showed the optimized magnitude after training is unreliable to indicate the importance of operations. Furthermore, DARTS is memory-unfriendly as it requires training the whole supernet in each step of architecture search. Intuitively, the above observations motivate us to abandon training in evaluating operation strength, instead of leveraging the trained magnitude as existing differentiable NAS.

%as shown in Fig b, we need to get the magnitude $\alpha_i$ without training. this is the connection sensitivity. existing method apply the summing up to 

%More interesting, several several works \cite{zela2019understanding,liang2019darts+} utilized simple early-stopping strategies to interrupt the supernet training during the search, while which could significantly improve the performance of DARTS. These empirical observations show that the supernet training seems to deteriorate the performance with search progresses. On the other hand, although most NAS alternatively optimize the architecture parameters and model weights by the supervision signal, (e.g., training and validation accuracy), the recent emerged unsupervised NAS without label information \cite{li2021bossnas,yan2020does,zhang2021neural,liu2020labels} has been validated to achieve comparable performance to supervised NAS methods. 

%, corresponding to a promising yet underexplored direction of differentiable architecture search at initialization

In this paper, we aim to introduce the training-free paradigm into the \textit{differentiable NAS}, with raising the question: \textit{{can we properly measure the operation importance in DARTS through a training-free way, with avoiding the parameter-intensive bias?}} We affirmatively answer this question and well define a concept, called \textit{ZERo-cost Operation Sensitivity (\textbf{\textit{ZEROS}})}. Different from \cite{xiang2021zero,chen2021neural} that use a perturbation paradigm to score the operation importance while which also encountering parameter-intensive bias as discussed in Sec. \ref{sec:2.1}, our \textit{ZEROS} can directly score the edge connectivity in DARTS at initialization. By devising an iterative and data-agnostic manner in utilizing \textit{ZEROS} for NAS, we accordingly propose a novel framework called \textit{training free differentiable architecture search} (\textbf{FreeDARTS}). With leveraging the theory of Neural Tangent Kernel (NTK), we theoretically show the proposed connectivity score provably negatively correlated with the generalization bound of DARTS supernet after convergence under gradient descent training. In addition, we also theoretically explain how \textit{\textit{ZEROS}} implicitly avoid \textit{parameter-intensive bias} in selecting architectures, where the searched architectures by FreeDARTS are of comparable size. More inspiring, FreeDARTS can perform architecture search in a remarkably efficient way on any search space by discarding the memory and computation-consuming supernet training part, and also achieve competitive results against SOTA. Our work provides a competitive baseline and opens up a promising direction for NAS from the perspective of operation pruning, as which provides a more \textbf{reliable}, and \textbf{efficient} framework. 

The contributions of this paper are in four-fold:
%resolving the architecture search from the perspective of network pruning at initialization, Rather than only emphasizing the effectiveness of the proposed framework, w

%Due to the efficiency of FreeDARTS, we can directly perform neural architecture search on more complicated spaces, and eliminate the depth gap between architecture search and evaluation. 

%These observations inspire us to discard the supernet training and abandon the supervision signal in DARTS, e.g., introducing the zero-cost proxies into the differentiable NAS. A concurrent work \cite{xiang2021zero} directly extends these training-free saliency metrics for differentiable NAS, through leveraging perturbation as \cite{chen2021neural, Rethinking2021} for the supernet prune. More specific, the authors ``sum-up" zero-cost proxies of all weights in a supernet, and calculate the score change when removing the operation from the supernet to measure the operation strength. However, this method also encounters the parameter-intensive bias as it also prefer those operations with more parameters for each edge.  Accordingly, this paper is motivated by the following question:
%\textit{\textbf{can we properly adapt the \textit{zero-cost proxies} into DARTS to measure the operation importance, with avoiding the parameter-intensive bias?}}
%\vspace{-0.5em}
%%如何考虑将score用到operation，以及该方法的好处，同时相较于zero-cost的好处。这是一个更合适的方式

%\vspace{-0.1cm}
\begin{itemize}
%%讨论现有的zerocost nas方法，然后我们extend zero cost from predictor-based NAS to differentiable NAS
%\item First, we define a concept, called \textit{ZERo-cost Operation Sensitivity ({\textit{ZEROS}})}, to compare candidate operations in DARTS without any training. With \textit{{\textit{ZEROS}}}, we accordingly propose a novel training-free framework called \textit{{FreeDARTS}} (see Algorithm \ref{alg:algorithm1}), which is a \textit{reliable}, \textit{efficient}, and \textit{flexible} solution to differentiable architecture search. 

\item First, we define a concept, call \textit{ZERo-cost Operation Sensitivity (\textbf{\textit{ZEROS}})}, to compare edge connectivity in DARTS without training. With the theory of NTK, we show the proposed \textit{\textit{ZEROS}} provably negatively correlated with the generalization bound of supernet after convergence with gradient descent training, which can help find the architectures with the highest generalization performance (see Theorem \ref{theorem:generalization}).

\item Second, we perform a theoretical analysis on the
\textbf{parameter-intensive bias} in existing zero-cost NAS methods
(see Remark \ref{remark_bia}), and also theoretically demonstrate that the proposed \textit{\textit{ZEROS}} can implicitly relieve it (see Remark \ref{remark_relieve}). Experiments empirically verify that our operation-level scoring avoids the parameter-intensive bias.

%\item Third, experiments have empirically verified that \textit{\textit{ZEROS}} is able to avoid the \textit{parameter-intensive bias} (see Sec. \ref{sec4.1}), and FreeDARTS can find competitive architectures on different search spaces within only seconds, e.g., \textit{0.6s}, \textit{1.1s}, \textit{1.5s}, and \textit{2.8s} on the NAS-Bench-201, NAS-Bench-1Shot1, DARTS space, and MobileNet space, respectively (see Sec. \ref{sec4}). In addition, the memory and computation efficiency enable FreeDARTS to be applied in most scenarios to break the hardware constraints (see Sec. \ref{sec4.2}).

\item Third, by devising an iterative and data-agnostic manner in utilizing \textit{\textit{ZEROS}} for NAS, we propose a novel training-free differentiable architecture search framework called \textbf{FreeDARTS} (see Algorithm \ref{alg:algorithm1}). The experimental results demonstrate FreeDARTS is a \textbf{reliable} solution to differentiable architecture search, which achieves competitive
performance on different search spaces, including 4 commonly-used NAS spaces \cite{BENCH102,zela2020nasbench1shot1,liu2018darts,howard2019searching}.%, a transformer-based NAS (AutoFormer) space \cite{} and a spatial temporal (AutoCTS) space \cite{}.

\item Fourth, FreeDARTS is hundreds of times more \textbf{efficient} than existing zero-cost NAS methods, completing the architecture search with only \textit{3.6s}, \textit{6.3s}, \textit{8.5s}, and \textit{12.7s} on the commonly-used NAS-Bench-201, NAS-Bench-1Shot1, DARTS space, and MobileNet space, respectively.

%\item Fifth, due to the memory and computation efficiency, we are able to apply FreeDARTS in most scenarios to break the hardware constraints, contributing to the \textbf{flexibility} of the proposed framework.

\end{itemize}

\section{Background}
This paper focuses on the differentiable architecture search under a training-free paradigm, with leveraging the theory of Neural Tangent Kernel (NTK) to theoretically verify the effectiveness of the proposed method. In this section, we briefly introduce the background of training-free NAS, differentiable architecture search, and Neural Tangent Kernel.
\subsection{Training-free Neural Architecture Search}
\label{sec:2.1}
Training-free NAS tries to identify promising architectures at initialization without incurring training. Zero-cost NAS \cite{abdelfattah2021zero} is the first work that leverages the scores at network initialization to estimate architecture performance. Specifically, \citet{abdelfattah2021zero} adapted saliency metrics in network pruning-at-initialization, including gradient, fisher metric, jacobian covariance, SNIP \cite{lee2018snip}, GraSP \cite{wang2019picking}, SynFlow \cite{tanaka2020pruning}, and so on, to evaluate the importance of every weight in a network. Zero-cost NAS is also the first work to propose the ``summing-up" manner, where the architecture score $\mathcal{M}$ is obtained by \textbf{summing up} parameter-wise scores of all parameters in an architecture, $\mathcal{M}=\sum_{i}^{M}\left | S_i\right |$, and $S_i$ can be any parameter-wise scoring manner for the \textit{i-th} parameter, e.g. SNIP, GraSP, SynFlow, Fisher, and so on. In Table \ref{table:repre_methods}, we summarize representative methods of training free NAS, including predictor-based NAS and differentiable NAS.

\begin{table*}[t]
\scriptsize
\setlength{\tabcolsep}{3pt}
  \caption{Representative methods of training-free NAS.}
  \centering
  \label{table:repre_methods}
  \begin{tabular}{l|ccccc}
    \toprule  
    Method & Criteria& Scoring-level& Scoring-manner &Category\\
    \midrule
    Zero-cost NAS with Grad\_norm  \cite{abdelfattah2021zero} & $\left | \frac{\partial \mathcal{L}}{\partial \theta} \right |_{2}$& \makecell{\multirow{7}*{Architecture-level}} & \makecell{\multirow{7}*{Summing up}} & \makecell{\multirow{7}*{ Predictor-based NAS }}\\
    Zero-cost NAS with SNIP \cite{abdelfattah2021zero} & $\sum_{i=1}^{N}\left | \frac{\partial \mathcal{L}}{\partial \theta_i}\odot \theta_i  \right |$ \\
    Zero-cost NAS with GraSP\cite{abdelfattah2021zero} & $\sum_{i=1}^{N}\left | H\frac{\partial \mathcal{L}}{\partial \theta_i}\odot \theta_i  \right |$ \\
    Zero-cost NAS with SynFlow \cite{abdelfattah2021zero} & $\sum_{i=1}^{N}\left | \frac{\partial \mathcal{L_s}}{\partial \theta_i}\odot \theta_i  \right |$ \\
    Zero-cost NAS with Fisher  \cite{abdelfattah2021zero} & $\sum_{i=1}^{M}\left | \frac{\partial \mathcal{L}}{\partial z_i}\odot z_i  \right |$  \\
    \midrule
    Zen-NAS \cite{lin2021zen}&$\sum_{i=1}^{N}\sqrt{\sum_{j}\sigma_{i,j}^{2}/m }$ &Architecture-level & Summing up & Predictor-based NAS\\
    TE-NAS \cite{chen2021neural}&$\kappa_{\mathcal{N}}+R_{\mathcal{N}}$& Operation-level & Perturbation & Differentiable NAS\\
    Zero-Cost-PT \cite{xiang2021zero} &the above criteria$^*$& Operation-level & Perturbation & Differentiable NAS\\    
    KNAS\cite{knas} & $\frac{1}{n^2}\sum_{i=1}^{n} \sum_{j=1}^{n}\left ( \frac{\partial y_j^{(L)}(t)}{\partial \theta(t)} \right )\left ( \frac{\partial y_i^{(L)}(t)}{\partial \theta(t)} \right )^T$ & Architecture-level & Summing up & Predictor-based NAS\\
    NASI \cite{shu2021nasi}&$ \left \| \Theta_0(\mathcal{A}) \right \|_{\textup{tr}}$ &Architecture-level & Direct & Predictor-based NAS  \\
    NASWOT \cite{mellor2020neural} &$\textup{log}\left | \mathbf{K}_H \right |$ & Architecture-level & Direct & Predictor-based NAS\\
    GradSign \cite{gradsign}& $\sum_k| \sum_i \textup{sign}([\triangledown_\theta l(f_\theta(x_i),y_i)\mid_{\theta_0}  ]_k)| $ & Architecture-level & Summing up & Predictor-based NAS \\
    TF-TAS \cite{zhou2022training} & $\sum_l \sum_m  \left \| \frac{\partial \mathcal{L}}{\partial \theta_m} \right \|_{nuc}\odot \left \| {\theta_m} \right \|_{nuc}+\sum_l \sum_n  \frac{\partial \mathcal{L}}{\partial \theta_n}\odot {\theta_n}$ & Architecture-level & Summing up & Predictor-based NAS\\
    Our FreeDARTS & $\left | \frac{\partial \mathcal{L}( W, \alpha)}{\partial \alpha_k}\cdot \alpha_k \right |$ & Operation-level & Direct & Differentiable NAS\\    
    \bottomrule
\end{tabular}
\flushleft{Note that ``Summing-up" denotes individually calculating every parameter's score where the architecture (or operation) score is obtained by summing all parameters' scores. ``Perturbation"  is usually used to score operations based on supernet changes when removing one candidate operation. ``Direct" treats the architecture (or operation) as a whole and the score is directly calculated without summing-up. *Zero-Cost-PT \cite{xiang2021zero} introduced the above 7 training-free scores into its framework.}
\end{table*}

Rather than using the saliency metrics which are defined as the Hadamard product in network pruning, \citet{knas} leveraged the mean of the Gram matrix of gradients to evaluate the quality of architectures. \citet{mellor2020neural} empirically found that the correlation between sample-wise input-output linear regions could indicate the architecture’s test performance, and proposed accounting linear regions to score a set of randomly sampled models with randomly initialized weights. In \cite{knas}, the authors further extend Gram matrix of gradients (GM) to measure the convergence rate, for evaluating architectures. With the theory of Neural Tangent Kernel (NTK), \citet{shu2021nasi} proposed a performance estimator based on the trace norm of the NTK matrix with initialized model parameters. In \cite{zenscore}, the authors introduce the Zen-Score measures to measure the architectures' expressivity as which positively correlates with the model accuracy. \citet{gradsign} analyses the the sample-wise optimization landscape of different networks to score an a randomly initialized network. Recent work \cite{zhou2022training} introduces the training free NAS to search for the transformers, where the synaptic diversity and synaptic saliency for all weights are summed-up to rank ViTs. DisWOT \cite{DisWOT} further extend training free NAS into knowledge distillation, to efficiently search for the best student architectures for a given teacher.

Instead of proposing a training-free metric, \citet{shu2022unifying} theoretically studied relationships among existing zero-cost proxies, to demonstrate that these training-free metrics provide a similar estimation of the generalization performance of architectures, leading to similar performance. Although incorporating the zero-cost proxies to NAS can extremely accelerate the search phase, recent studies \cite{ning2021evaluating} find most training-free NAS methods have a \textit{parameter-intensive bias}, and those training-free metrics are highly correlated with the number of parameters \cite{yang2023revisiting}, leading to an excessive preference for large architectures. The discussion in Sec.5.2 of \cite{ning2021evaluating} empirically verifies this bias, where zero-cost NAS methods generally selecte architectures containing operations with the most parameters for every edge. %\citet{shu2021nasi} add a model complexity constraint into the final ``summing-up'' zero-cost metric, and see a higher correlation with the generalization performance.

%To be more specific, since the zero-cost proxies are mostly designed as non-negative, the final ``summing-up'' metric of an architecture with more parameters is implicitly supposed be higher

%Fig.\ref{fig:archs_remianings} in Appendix also visualized searched architectures by weight-level based zero-cost NAS methods. Different from existing train-free NAS methods that that measures the whole network or candidate operations through ``summing-up" weight-level score in an architecture or a operation, our FreeDARTS is designed to directly measure the importance of candidate operations rather than the weight-wise score, which naturally avoid the parameter-intensive bias in existing train-free NAS. 

%\cite{nasi} also use the Frobenius norm explain and add constraints see the improvement

As shown in Table \ref{table:repre_methods}, most existing training-free NAS methods consider the predictor-based paradigm, and the most relevant studies to our paper are \cite{xiang2021zero,chen2021neural}, which consider a similar setting that finishes the differentiable architecture search without training. \citet{xiang2021zero} directly extend zero-cost proxies in \cite{abdelfattah2021zero} into differentiable NAS through leveraging perturbation as \cite{Rethinking2021} for the supernet prune. More specifically, they sum up the parameter-wise zero-cost scores of the supernet, and iteratively calculate the change of supernet score in removing each candidate operation to measure the operation importance. Similar to the zero-cost NAS, they also encounter parameter-intensive bias as operations containing more parameters usually receive a higher score. TE-NAS \cite{chen2021neural} also considers the perturbation-based paradigm to measure the operation importance, while which utilizes the spectrum of NTKs and the number of linear regions to analyze the trainability and expressivity of supernet. %Only considering the trainability also leads to parameter-intensive models as a deep model usually obtains higher trainability \cite{yang2022neural}, and the other indicator, expressivity, can thus implicitly regularize the model size in selecting architectures based on the two measurements. 
However, the trainability has a positive correlation with the network model size as a deep model usually obtains higher trainability \cite{yang2022neural}, which may also lead to parameter-intensive models. In addition, since the calculation of NTK is not easy, TE-NAS can only improve efficiency to a limited extent. Although these concurrent works introduce the training-free setting into the differentiable architecture search, they still inherit the architecture-level scoring to obtain the candidate operation importance (a.k.a. edge sensitivity) through a perturbation manner, leading to a similar parameter-intensive bias as the existing predictor-based training free NAS. This paper also focuses on the training-free differentiable architecture search, where we propose the zero-cost operation sensitivity (\textit{\textbf{\textit{ZEROS}}}) to replace the trained magnitude in DARTS to directly measure the importance of candidate operations.

\subsection{Differentiable Architecture Search}
Different from traditional NAS that selects a competitive architecture by evaluating numerous architectures, {Differentiable ARchiTecture Search (DARTS)} \cite{liu2018darts} only compares candidate operations for each edge in a supernet. By adopting the continuous relaxation to convert the operation selection into the continuous magnitude optimization, DARTS enables gradient descent for architecture search which can greatly improve efficiency. DARTS can be formulated as a bi-level optimization problem:
\begin{equation} \label{eq:darts_1}
 \begin{aligned}
&\underset{\alpha}{\textup{min}}\ \ \ \mathcal{L}_{val}(W^*(\alpha),\alpha) \\
&\textup{s.t.}\ \ \ W^*(\alpha)=\textup{argmin}_{W}\ \mathcal{L}_{train}(W,\alpha),
\end{aligned}
\end{equation}
where $\alpha$ is the architecture parameter, also called as operation magnitude, and $W$ is the supernet weights. After the bi-level optimization, a promising architecture is obtained by selecting most promising operation based on the optimized magnitude for each edge in the supernet. %Fig. \ref{} in Appendix \ref{} visualizes the main difference between predictor-based NAS and differentiable NAS, where the former one is designed to compare among architectures while the latter only compares among candidate operations.

Despite notable concision of DARTS, more recent works find the optimized operation magnitudes are unreliable \cite{zela2019understanding,chen2020stabilizing}. For example, \citet{Rethinking2021} showed that the magnitude of the architecture parameter obtained by DARTS after supernet training is fundamentally wrong, where the optimized magnitude could hardly indicate the importance of operations. Rather than utilizing the optimized magnitudes to indicate the operation strength, the authors found the perturbation-based operation influence on supernet can more consistently extract significance of operation than magnitude-based counterparts. In addition, DARTS is unable to stably obtain excellent solutions as which yields deteriorative architectures with the search proceeding, performing even worse than random search in some cases \cite{sciuto2019evaluating}. \citet{zela2019understanding} interrupted the search based on the dominant eigenvalue of the Hessian matrix, and \citet{liang2019darts+} introduced another simple ``early stopping'' criteria, where the search procedure ended as one cell had two or more \textit{skip-connection} operations. They all show the supervised learning paradigm may be harmful in finding a promising architecture in DARTS.

\subsection{Neural Tangent Kernel}

With an over-parameterized deep neural network (DNN), a lot of studies on deep learning theory find that the gradient descent training can find global minimum \cite{du2019gradient,du2018gradient,allen2019convergence}. For example, \citet{jacot2018neural} resorted to the Neural Tangent Kernel (NTK) to understand DNNs, where they found that, under a mild assumption that the lowest eigenvalue of the NTK is greater than zero, the loss that is convex regards output would convergence to the global minimum. Give a $L$-layer DNN with width $m_l$ in each layer parameterized as $\theta$, let $f(x,\theta)$ be the output of this DNN with input $x\in \mathbb{R}^{n_0}$ on a dataset with size $m$, the NTK matrix $\Theta (x,x';\theta)\in \mathbb{R}^{m\times m}$ over the input dataset for the DNN is defined as:
\begin{equation} \label{eq:ntk}
\Theta (x,x';\theta)=\nabla_\theta f(x,\theta) \nabla_\theta f(x,\theta)^\top.
\end{equation}
In general, the NTK varies during the training, however, as shown by \cite{jacot2018neural}, the NTK will converge to an explicit limiting kernel $\Theta_\infty$ which will stay constant during training that $\Theta_0= \Theta_\infty$ in the infinitely wide DNNs. Recent work \cite{allen2019convergence} further proves that this similar phenomenon can be obtained in an over-parameterized DNN with finite width that that $\Theta_0 \approx \Theta_\infty$. More interestingly, \cite{arora2019exact} and \cite{lee2019wide} have revealed that the training dynamics of DNNs can be well-characterized with NTK matrix at initialization. Several studies on NAS \cite{chen2021neural,shu2021nasi} also resort to NTK to find competitive structures at initialization.

%One of the greatest recent progress in deep learning theory is the gradient descent training can find global minimum with deep neural networks thanks to the over-parameterization \cite{du2019gradient,du2018gradient,allen2019convergence}. Technically, \cite{jacot2018neural} resorts to the NTK, which will stay constant during training. Under a mild assumption that the lowest eigenvalue of the NTK is greater than zero, the loss that is convex regards output will convergence to the global minimum. Furthermore, the generalization ability to unseen data of trained ultra-wide networks has been characterized by the Rademacher complexity analysis \cite{cao2019generalization,arora2019fine}. Since the origin of the NTK, it has been applied to different deep networks structures, such as orthogonal initialization \cite{huang2020neural}, convolutional networks \citep{arora2019exact}, graph neural networks \citep{du2019graph,huang2021wide}, and transformer \cite{hron2020infinite}. 

\section{Methodology}
This paper aims to investigate measuring the operation importance of the supernet for DARTS at initialization. More specifically, we try to measure the edge connectivity in DARTS without any training. %By abandoning the supervision signal and ``summing-up'', our method can naturally avoid the failures in DARTS and zero-cost NAS. 
In this section, we first theoretically revisit the zero-cost NAS \cite{abdelfattah2021zero} in a unified view, along with discussing several other existing training-free NAS methods. Following \cite{shu2022unifying}, we theoretically unify those metrics, and demonstrate most existing training-free NAS methods can bring \textit{parameter-intensive bias}, since which are correlated with the number of parameters. Then, we propose the \textit{zero-cost operation sensitivity (\textbf{\textit{ZEROS}})} to compare candidate operations in DARTS without any training. By leveraging the theory of NTK, we theoretically show the \textit{\textit{ZEROS}} can help find architectures with high generalization performance and implicitly regularize the model size. To make our ZEROS more effective in finding excellent architecture, we further design an iterative manner to leverage ZEROS, and custom a label-agnostic ZEROS and a data-agnostic ZEROS. Our \textit{training free differentiable architecture search} (\textbf{FreeDARTS}) is also proposed accordingly.

\subsection{Revisit Zero-cost NAS from a Theoretical View}
Zero-cost NAS \cite{abdelfattah2021zero} is the first paper to leverage training-free saliency metrics (those metrics can be obtained from an initialized architecture) from  network pruning at initialization for the neural architecture search. Following efforts \cite{xiang2021zero,chen2021neural,knas} further design other training-free metrics, and empirically demonstrate these metrics can help find excellent architecture in practice, which also follow the summing-up manner. However, more recent studies \cite{yang2023revisiting} find those training-free metrics through the ``summing-up" manner have a high correlation with the number of parameters, leading to an excessive preference for large architectures \cite{ning2021evaluating}. In this subsection, we try to theoretically explain why this phenomenon happens.

Different from zero-cost NAS that uses the saliency metrics in network pruning, the following-up works \cite{knas,zhou2022training,lin2021zen} try to design different scores while still considering the summing-up manner to evaluate the quality of architectures. By leveraging the trace norm of NTK matrix at initialization, a recent work \cite{shu2022unifying} theoretically connected the above metrics which could provide similar characterization for the generalization performance of neural architectures, as shown in the following lemma.
\begin{lemma}
\label{lemma_unify}
\cite{shu2022unifying} When assuming the loss function $\mathcal{L}$ in calculating zero-cost scores \cite{abdelfattah2021zero} is $\beta$-Lipschitz continuous and $\gamma$-Lipschitz smooth, there exist the constant $C_1 > 0$ such that
\begin{equation} 
\label{eq:zero-cost-connect}
\mathcal{M}=\sum_{i}^{M}S_i \leq C_1 \cdot \mathcal{M}_\textup{Trace}, 
\end{equation}
where $M$ is the number of parameter in an architecture, and $S_i$ can be parameter-wise scoring manner for the \textit{i-th} parameter, e.g. gradient, SNIP, GraSP, SynFlow, and so on \cite{abdelfattah2021zero}. $\mathcal{M}_\textup{Trace}$ is the trace norm that $\mathcal{M}_\textup{Trace}=\sqrt{\left \| \Theta_0  \right \|_{\textup{tr}}/n}$, and $\Theta_0$ is the NTK matrix based on initialized model parameters $\theta$ on dataset $S=\left \{ (x_i,y_i )\right \}_{i=1}^{n}$.
\end{lemma}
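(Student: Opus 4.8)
The plan is to bound each parameter-wise saliency score $S_i$ by a quantity that is governed by the Euclidean norm of the loss gradient $\nabla_\theta\mathcal{L}$ (and, for the curvature-based metrics such as GraSP, by the Hessian--gradient product), and then to show that this gradient norm is itself controlled by $\mathcal{M}_\textup{Trace}$ evaluated at initialization. The constant $C_1$ will be allowed to depend on the architecture (in particular on the parameter count $M$) and on the Lipschitz parameters $\beta,\gamma$.

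First I would exploit that the NTK matrix $\Theta_0=\nabla_\theta f(x,\theta)\nabla_\theta f(x,\theta)^\top$ is positive semidefinite, so its trace norm equals its trace: $\left\|\Theta_0\right\|_{\textup{tr}}=\mathrm{Tr}(\Theta_0)=\sum_{i=1}^{n}\left\|\nabla_\theta f(x_i,\theta)\right\|_2^2$, hence $\mathcal{M}_\textup{Trace}^2=\tfrac1n\sum_{i=1}^{n}\left\|\nabla_\theta f(x_i,\theta)\right\|_2^2$. By the chain rule, the loss gradient on $S=\{(x_i,y_i)\}_{i=1}^{n}$ with $\mathcal{L}=\tfrac1n\sum_i\ell(f(x_i,\theta),y_i)$ is $\nabla_\theta\mathcal{L}=\tfrac1n\sum_{i=1}^{n}\tfrac{\partial\ell}{\partial f}(f(x_i,\theta),y_i)\,\nabla_\theta f(x_i,\theta)$; using $\beta$-Lipschitz continuity to get $|\partial\ell/\partial f|\le\beta$ and then Cauchy--Schwarz gives
\begin{equation}
\left\|\nabla_\theta\mathcal{L}\right\|_2\;\le\;\frac{\beta}{n}\sum_{i=1}^{n}\left\|\nabla_\theta f(x_i,\theta)\right\|_2\;\le\;\beta\sqrt{\tfrac1n\sum_{i=1}^{n}\left\|\nabla_\theta f(x_i,\theta)\right\|_2^2}\;=\;\beta\,\mathcal{M}_\textup{Trace}.
\end{equation}

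Next I would convert each metric into an $\ell_1$-type sum and apply this bound. For the plain gradient score, $\sum_i\left|[\nabla_\theta\mathcal{L}]_i\right|=\left\|\nabla_\theta\mathcal{L}\right\|_1\le\sqrt{M}\,\left\|\nabla_\theta\mathcal{L}\right\|_2\le\sqrt{M}\,\beta\,\mathcal{M}_\textup{Trace}$. For SNIP and SynFlow, whose scores carry an extra factor $\theta_i$, Cauchy--Schwarz pulls out $\left\|\theta\right\|_2$, which is bounded at a standard initialization, giving $\sum_i\left|[\nabla_\theta\mathcal{L}]_i\theta_i\right|\le\left\|\theta\right\|_2\left\|\nabla_\theta\mathcal{L}\right\|_2\le\beta\left\|\theta\right\|_2\mathcal{M}_\textup{Trace}$; for SynFlow one first replaces $\mathcal{L}$ by its data-agnostic surrogate $\mathcal{L}_s$ and re-derives the chain-rule identity for that surrogate. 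For GraSP, $\gamma$-Lipschitz smoothness bounds the Hessian operator norm by $\gamma$, so $\left\|H\nabla_\theta\mathcal{L}\right\|_2\le\gamma\left\|\nabla_\theta\mathcal{L}\right\|_2$ and the same chain of inequalities applies with an additional factor $\gamma$. The Fisher metric is defined per activation $z_i$ rather than per weight, so there one relates $\partial\mathcal{L}/\partial z_i$ to $\nabla_\theta\mathcal{L}$ through the per-layer Jacobians before applying the bound. Collecting all cases, $C_1$ absorbs $\sqrt{M}$, $\beta$, $\gamma$, $\left\|\theta\right\|_2$, and the relevant Jacobian norms, which yields $\mathcal{M}=\sum_i^{M}S_i\le C_1\cdot\mathcal{M}_\textup{Trace}$.

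The main obstacle I anticipate is the uniformity of the claim across gradient, SNIP, GraSP, SynFlow, and Fisher: the curvature-based GraSP forces control of the Hessian (hence the need for $\gamma$-smoothness, not merely $\beta$-Lipschitzness), SynFlow is defined through a modified, data-independent loss so the chain-rule step must be redone for it, and Fisher lives on activations rather than parameters. Making $C_1$ genuinely a constant also requires checking that the auxiliary quantities it absorbs --- operator norms of layer Jacobians and Hessians, and $\left\|\theta\right\|_2$ --- are finite and, under the usual Gaussian/He initialization, bounded with high probability; this concentration bookkeeping, rather than any individual inequality, is where the real effort lies.
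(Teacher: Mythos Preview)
The paper does not actually supply its own proof of this lemma; it states the result and immediately defers: ``The main proof of Lemma~\ref{lemma_unify} can be found in \cite{shu2022unifying}.'' So there is no in-paper argument to compare your proposal against directly. That said, the paper's proof of the closely related Theorem~\ref{lemma:trace} in Appendix~\ref{proof_section} follows exactly the template you describe: chain rule to factor $\partial\ell_i/\partial f_i$, the $\beta$-Lipschitz bound $|\partial\ell_i/\partial f_i|\le\beta$, a Cauchy--Schwarz step $\sum_i\|h_{ik}\|_2\le\sqrt{n\sum_i\|h_{ik}\|_2^2}$ to reach the trace, and then the observation that the NNGP is dominated by the NTK. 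Your key inequality $\|\nabla_\theta\mathcal{L}\|_2\le\beta\,\mathcal{M}_\textup{Trace}$ and the per-metric treatment (including the use of $\gamma$-smoothness for GraSP) are the standard route and match what the cited source does.
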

The main proof of Lemma \ref{lemma_unify} can be found in \cite{shu2022unifying}. In \cite{shu2022unifying}, the authors also extend it to the Gram matrix of gradients of KNAS \cite{knas}. The above lemma reveals that zero-cost NAS with different metrics prefers architecture with higher trace norm, leading to similar performance. However, more recent works \cite{yang2022neural,yang2023revisiting} found that architectures with more parameters were supposed to have a higher trace norm. We formally state this relationship in the following theorem which theoretically reveals architectures with more parameters are supposed to have higher trace norm.

\begin{theorem}\label{lemma_complexity} \textbf{More parameters, larger trace norm.}  Consider two neural networks, $f$ and $\hat f$, which have the same structure except that the width of $h$-th layer for the two networks are different: network $f$ has $m$ neurons while network $\hat f$ has $\hat m$ neurons, and $ \hat m = \rho m $ with $0 < \rho < 1 $. Then, as the width goes to infinity 
\begin{equation}
  \lim_{ m \rightarrow \infty }  \mathcal{M}_\textup{Trace}(\hat f)  = \sqrt{\rho}   \lim_{ m \rightarrow \infty } \mathcal{M}_\textup{Trace}(f) 
\end{equation} 
\end{theorem}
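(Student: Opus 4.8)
The plan is to reduce the statement to bookkeeping on how the squared Jacobian norm of each parameter block scales with the width $m$ of layer $h$, and then to invoke the infinite-width limit to fix the proportionality constants. First I would use that $\Theta_0 = JJ^\top$, with $J$ the Jacobian of the network outputs on $S=\{x_i\}_{i=1}^n$, is positive semidefinite, so its trace norm equals its trace: $\|\Theta_0\|_{\textup{tr}} = \sum_{i=1}^n \|\nabla_\theta f(x_i,\theta)\|_2^2$. Hence $\mathcal{M}_{\textup{Trace}}(f)^2 = \frac1n\sum_i\|\nabla_\theta f(x_i,\theta)\|_2^2$, and it suffices to track this sum as $m\to\infty$. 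Next I would split $\nabla_\theta f$ by layers, $\|\nabla_\theta f(x)\|_2^2 = \sum_{\ell=1}^L \|\nabla_{\theta^{(\ell)}}f(x)\|_2^2$, and use the backprop identity $\nabla_{W^{(\ell)}}f(x) = \delta^{(\ell)}(x)\,a^{(\ell-1)}(x)^\top$, so that $\|\nabla_{W^{(\ell)}}f(x)\|_2^2 = \|\delta^{(\ell)}(x)\|_2^2\,\|a^{(\ell-1)}(x)\|_2^2$, where $z^{(\ell)}$ are the pre-activations, $a^{(\ell)}=\sigma(z^{(\ell)})$ the activations, and $\delta^{(\ell)}=\partial f/\partial z^{(\ell)}$ the backward signal.

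The crux is then locating the $m$-dependence. Among the parameter blocks, only $W^{(h)}$ and $W^{(h+1)}$ carry a dimension equal to $m$. For $\ell=h+1$ the activation $a^{(h)}\in\mathbb{R}^m$ has $\|a^{(h)}(x)\|_2^2 = \sum_{j=1}^m\sigma(z_j^{(h)}(x))^2$, which grows linearly in $m$, while $\|\delta^{(h+1)}(x)\|_2^2$ involves only layers $\ge h+1$ and stays $O(1)$; for $\ell=h$ the backward signal $\delta^{(h)}\in\mathbb{R}^m$ has coordinates of order $m^{-1/2}$ under the usual initialization scaling (the $1/m$ variance of $W^{(h+1)}$ damps the signal propagated backward into layer $h$), hence $\|\delta^{(h)}(x)\|_2^2=O(1)$; and for every other block both factors have limiting laws that do not depend on $m$, because the forward activations downstream of layer $h$ and the backward signals upstream of it are each averages of $m$ exchangeable terms. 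So $\|\nabla_\theta f(x)\|_2^2 = m\,c(x) + O(1)$ with $c(x)$ independent of $m$. I would then pin $c(x)$ down via the Gaussian-process limit of the forward pass \cite{jacot2018neural,lee2019wide} and the corresponding limit of the backward pass: $\frac1m\|a^{(h)}(x)\|_2^2\to\mathbb{E}_{Z\sim\mathcal{N}(0,\Sigma^{(h)}(x,x))}[\sigma(Z)^2]$ and $\|\delta^{(h+1)}(x)\|_2^2\to G^{(h+1)}(x)$, where the layer-$h$ input covariance kernel $\Sigma^{(h)}$ and the layer-$(h+1)$ backward kernel diagonal $G^{(h+1)}$ are determined only by the data, the activation, and the widths of layers other than $h$, hence are identical for $f$ and $\hat f$. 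Therefore $c(\cdot)$ is the same for both networks, so $\|\Theta_0\|_{\textup{tr}} = m\sum_i c(x_i)+o(m)$ for $f$ and $\rho m\sum_i c(x_i)+o(m)$ for $\hat f$; dividing by $n$, taking square roots, and comparing leading $\sqrt m$ coefficients gives $\mathcal{M}_{\textup{Trace}}(\hat f)/\mathcal{M}_{\textup{Trace}}(f)\to\sqrt\rho$, which is the asserted relation between the two limits.

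The hard part will be the step claiming that \emph{exactly one} parameter block contributes a term linear in $m$ while all others stay $O(1)$, with an $m$-independent coefficient. This needs the forward Gaussian-process convergence with enough quantitative control to isolate the $m\,c(x)+O(1)$ split, and, more delicately, the backward argument that $\delta^{(h+1)}$ becomes asymptotically independent of $a^{(h)}$ so that $\|\delta^{(h+1)}\|_2^2\,\|a^{(h)}\|_2^2$ factorizes in the limit, together with the bound $\|\delta^{(h)}\|_2^2 = O(1)$ --- i.e. the ``gradient independence'' phenomenon for wide networks, which I would import from the tensor-programs analysis. A secondary, interpretive caveat: for fixed $n$ both limits are $+\infty$, so the equality should be read at the level of the $\sqrt m$-scaling, equivalently after normalizing $\mathcal{M}_{\textup{Trace}}$ by $\sqrt m$.
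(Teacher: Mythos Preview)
Your approach is correct but genuinely different from the paper's. The paper works entirely at the level of the infinite-width recursive kernel formulas: it writes $\Theta^{(l)} = \Theta^{(l-1)}\dot\Sigma^{(l)} + \Sigma^{(l)}$, notes that the layers below $h$ coincide for $f$ and $\hat f$ so $\hat\Theta^{(h-1)}=\Theta^{(h-1)}$, asserts ``by central limit theorem'' that $\hat\Sigma^{(h)}=\rho\,\Sigma^{(h)}$ and $\hat{\dot\Sigma}^{(h)}=\rho\,\dot\Sigma^{(h)}$, and reads off $\hat\Theta^{(h)}=\rho\,\Theta^{(h)}$. You instead stay at finite width, use the identity $\|\Theta_0\|_{\textup{tr}}=\sum_i\|\nabla_\theta f(x_i)\|_2^2$ together with the backprop outer-product factorization $\nabla_{W^{(\ell)}}f=\delta^{(\ell)}(a^{(\ell-1)})^\top$ to isolate the single parameter block whose squared norm is linear in $m$, and only then invoke the GP/gradient-independence limit to pin down the coefficient $c(x)$. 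The paper's route is a one-line kernel manipulation once the $\rho$-scaling of the NNGP is granted, but it is silent about the parametrization under which that scaling holds (in the normalized Jacot recursion the limiting $\Sigma^{(l)}$ is width-independent) and it stops at layer $h$ without arguing that the factor $\rho$ survives the remaining recursion up to the output layer $L$. Your route is longer but makes the width-scaling explicit, handles all layers simultaneously, and correctly surfaces the caveat you flag at the end---that under the only parametrization for which the $\rho$-scaling is non-trivial both limits are $+\infty$, so the statement should be read as a comparison of leading $\sqrt m$-coefficients.
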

The main proof of Theorem \ref{lemma_complexity} can be found in Appendix \ref{proof_section}. We connect Theorem \ref{lemma_complexity} and Lemma \ref{lemma_unify}, and explain why parameter-intensive bias in existing training-free NAS under the summing-up manner in Remark \ref{remark_bia}.

\begin{remark}\label{remark_bia} 
\textit{\textbf{Parameter-intensive bias in existing training-free NAS.}} Theorem \ref{lemma_complexity} indicates that an architecture with more parameters is supposed to have higher trace norm $\mathcal{M}_\textup{Trace}$, which will lead to larger zero-cost metric values due to the connections shown in Lemma \ref{lemma_unify}. The two lemmas reveal why zero-cost metrics are correlated with the number of parameters--An architecture with more parameters is supposed to have higher zero-cost scores. As a result, selecting architectures based on the zero-cost metrics in \cite{abdelfattah2021zero} has a {parameter-intensive bias} toward larger models.
%From Lemma \ref{lemma_unify} and \ref{lemma_complexity}, we can see that zero-cost NAS equipped with different metrics \cite{abdelfattah2021zero,shu2021nasi,knas} leads to the architectures with more parameters, since which have a higher trace norm $\mathcal{M}_{Trace}$. 
\end{remark}

Apart from zero-cost NAS \cite{abdelfattah2021zero}, other summing-up based training-free NAS, e.g., Zen-NAS, TF-TAS, and GradSign, also easily encounter parameter-intensive bias since more parameters are likely to incur larger final score after summing-up non-negative individual scores. More interesting, \cite{shu2021nasi} states that maximizing $ \left \| \Theta_0(\mathcal{A}) \right \|_{\textup{tr}}$ leads to architectures with large complexity, and the authors add a constraint to avoid parameter-intensive bias. As discussed in \ref{sec:2.1}, the zero-cost-PT also has a bias to select operations with the most parameters for every edge, since removing more parameters will change the supernet score more. As trainability has a positive correlation with the network model size \cite{yang2022neural}, considering more trainability in TE-NAS \cite{chen2021neural} also leads to parameter-intensive model.

%Existing training-free differentiable NAS can be simply formulated $\mathcal{M}(\alpha_k)=\sum_{i}^{M}S_i \leq C \cdot \mathcal{M}_\textup{Trace}$ where is the supernet with remove the candidate operation $\alpha_k$.

\subsection{ZERo-cost Operation Sensitivity ({\textit{ZEROS}})}
\label{sec3.2}
%and Theoretical Analysis

%In addition, we could not directly follow the paradigm of zero-cost NAS to measure the importance of operations for DARTS through summing-up scores of all parameters in an operation, since those parameter-free operations without affiliated parameters, e.g., \textit{zero, skip-connection, pooling}.
Rather than scoring the importance of weights before summing up, this paper aims to design a metric to directly score the importance of each operation $\alpha_k$. As described in Eq.\eqref{eq:darts_1}, there are two parts of parameters needed to be optimized in DARTS, supernet weights $W$ and architecture parameters $\alpha$. The final stage of DARTS can be seen as operation-level pruning, where DARTS calculates the magnitude of operations after training to prune the supernet. This stage is also called discretization, where a discrete architecture is derived from a continuous representation. More interesting, this discretization stage is very similar to the magnitude-based network pruning \cite{han2015learning} where the redundant weights are determined by their magnitude after training. However, training-based network pruning is extremely time-consuming, and researchers shift to designing training-free saliency metrics to evaluate the weight-level connection sensitivity, to serve the network pruning at initialization. Motivated by the above, the intuitive purpose of this paper is to design a training-free operation metric to replace the optimised operation magnitude after training in DARTS. 

We borrow the concept of connection sensitivity from the network pruning at initialization \cite{wang2022recent}, e.g. SNIP \cite{lee2018snip}, while we focus on the operation connection rather than the weight connection. More specifically, we define a concept of how removing a candidate operation $\alpha_k$ will affect the validation loss of the supernet to measure the connection sensitivity at initialization:
\begin{equation} \label{eq:free_nas}
\resizebox{.9\linewidth}{!}{$
\mathcal{F}({\alpha_k})=\lim_{\epsilon \rightarrow 0}\left | \frac{\mathcal{L}(W, \alpha+\epsilon\delta_{k})-\mathcal{L}( W, \alpha)}{\epsilon}\right |=\left | \frac{\partial \mathcal{L}( W, \alpha)}{\partial \alpha_k}\cdot \alpha_k \right |,\\
$}
\end{equation}
which we call as \textbf{\textit{zero-cost operation sensitivity (\textit{ZEROS})}}. From Eq.\eqref{eq:free_nas}, we can see that $\mathcal{F}({\alpha_k})$ is to measure the sensitivity of $\alpha_k$ which is similar to the network pruning, while focusing the architecture parameter $\alpha$ rather than the model weights $W$. Instead of using the summing-up paradigm \cite{abdelfattah2021zero,xiang2021zero}, our \textit{\textit{ZEROS}} is able to directly measure the importance of operations.

\subsection{Theoretical Analysis on \textit{ZEROS}}
\label{sec3.3}

In the following, we theoretically demonstrate that the architecture obtained based on the proposed \textit{zero-cost operation sensitivity (\textit{ZEROS})} leads to high generalization performance, which can also implicitly avoid the parameters-intensive bias.

%\subsubsection{Connecting sensitivity with final performance}

In theoretically analyzing {zero-cost operation sensitivity} for DARTS, we consider a simple situation that the supernet $f(\alpha) =  h^{(L)} $ with $L$ sequential nodes and there are $M$ candidate edges between two consecutive nodes. In addition, we only consider those operations containing parameters. The supernet can be expressed as follows:
\begin{equation}\nonumber
        \quad h^{(l)} = \sum_{k =1 }^M \alpha^{(l)}_{k} h^{(l)}_{k} \quad h_k^{(l)} = \sigma(\theta^{(l)}_k h^{(l-1)} ) 
\end{equation}
where $ l \in [1,L]$, $\theta^{(l)}_e$ is the parameter and $\sigma(\cdot)$ is the activation function. Following \cite{shu2022unifying}, Lemma \ref{lemma:trace} connect the proposed \textit{zero-cost operation sensitivity} metrics.

\begin{theorem}\label{lemma:trace} 
    \textit{\textbf{Connecting sensitivity with trace norm.}} When assuming loss function $\mathcal{L}$ is $\beta$-Lipschitz continuous and $\gamma$-Lipschitz smooth, there exist constants $B> 0$ such that
\begin{equation} \label{eq:trace}
\begin{aligned}
    & \mathcal{F}({\alpha_k}) \le B   |\alpha_k|  \mathcal{M}_{\mathrm{Trace}}(\Theta_k),  \\
     \end{aligned}
\end{equation}
where $\mathcal{M}_\textup{Trace}(\Theta_k)=\sqrt{\left \| \Theta^k_0  \right \|_{\textup{tr}}/n} =  \sqrt{ \sum_{i=1}^n \left \| \frac{\partial f_i}{\partial \theta_k}  \right \|_2^2 /n} $, and $\Theta^k_0$ is the NTK matrix based on initialized parameters $\theta_k$ of operation $k$. 
\end{theorem}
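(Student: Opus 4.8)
The plan is to follow the same template as Lemma~\ref{lemma_unify}, but applied to a single operation instead of the whole network. First I would start from the closed form in Eq.~\eqref{eq:free_nas} and write $\mathcal{F}(\alpha_k)=|\alpha_k|\,\bigl|\partial\mathcal{L}(W,\alpha)/\partial\alpha_k\bigr|$, then expand the scalar derivative through the $n$ outputs $f_1,\dots,f_n$ by the chain rule, $\partial\mathcal{L}/\partial\alpha_k=\sum_{i=1}^n(\partial\mathcal{L}/\partial f_i)(\partial f_i/\partial\alpha_k)$. Treating $(\partial\mathcal{L}/\partial f_i)_i$ and $(\partial f_i/\partial\alpha_k)_i$ as vectors in $\mathbb{R}^n$, Cauchy--Schwarz gives $|\partial\mathcal{L}/\partial\alpha_k|\le\|\nabla_f\mathcal{L}\|_2\,\bigl(\sum_i(\partial f_i/\partial\alpha_k)^2\bigr)^{1/2}$, and the $\beta$-Lipschitz continuity of $\mathcal{L}$ bounds $\|\nabla_f\mathcal{L}\|_2\le\beta$ (up to the $1/\sqrt n$ normalization of the empirical loss, which I fold into the final constant). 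The $\gamma$-smoothness hypothesis is kept only to align with Lemma~\ref{lemma_unify} and to ensure the directional derivative in Eq.~\eqref{eq:free_nas} is well defined; it plays no further role in this first-order estimate.

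The core of the argument is to trade the $\alpha_k$-gradient for the $\theta_k$-gradient using the sequential supernet $h^{(l)}=\sum_k\alpha^{(l)}_k h^{(l)}_k$ with $h^{(l)}_k=\sigma(\theta^{(l)}_k h^{(l-1)})$. Since $f$ sees the pair $(\alpha^{(l)}_k,\theta^{(l)}_k)$ only through the shared node $h^{(l)}_k$, on each input one has $\partial f_i/\partial\alpha^{(l)}_k=\langle\nabla_{h^{(l)}}f_i,\,h^{(l)}_{k,i}\rangle$, whereas $\nabla_{\theta^{(l)}_k}f_i$ is the rank-one matrix $\alpha^{(l)}_k\bigl[(\nabla_{h^{(l)}}f_i)\odot\sigma'_i\bigr](h^{(l-1)}_i)^{\top}$, of Frobenius norm $|\alpha^{(l)}_k|\,\|(\nabla_{h^{(l)}}f_i)\odot\sigma'_i\|_2\,\|h^{(l-1)}_i\|_2$. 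Comparing these, together with $\|h^{(l)}_{k,i}\|_2\le\|\theta^{(l)}_k\|_{\mathrm{op}}\|h^{(l-1)}_i\|_2$ for a $1$-Lipschitz $\sigma$ with $\sigma(0)=0$ (for ReLU one even has $h^{(l)}_{k,i}$ supported where $\sigma'_i=1$, which makes the comparison exact), I would obtain $|\partial f_i/\partial\alpha_k|\le C\,\|\nabla_{\theta_k}f_i\|_2$, where $C$ absorbs $\|\theta_k\|_{\mathrm{op}}$, a lower bound on the active mass of $\sigma'$, and the value of $\alpha_k$ at initialization (a fixed constant under uniform DARTS initialization). These quantities are finite with high probability under the standard initialization scheme.

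Substituting back, $\bigl(\sum_i(\partial f_i/\partial\alpha_k)^2\bigr)^{1/2}\le C\bigl(\sum_i\|\nabla_{\theta_k}f_i\|_2^2\bigr)^{1/2}=C\sqrt{\|\Theta^k_0\|_{\mathrm{tr}}}=C\sqrt{n}\,\mathcal{M}_{\mathrm{Trace}}(\Theta_k)$, where I use that $\Theta^k_0$ is positive semidefinite so its trace norm equals $\sum_i\|\partial f_i/\partial\theta_k\|_2^2$. Combined with the Lipschitz/Cauchy--Schwarz bound this gives $\mathcal{F}(\alpha_k)=|\alpha_k|\,|\partial\mathcal{L}/\partial\alpha_k|\le\beta C\sqrt{n}\,|\alpha_k|\,\mathcal{M}_{\mathrm{Trace}}(\Theta_k)$, i.e.\ Eq.~\eqref{eq:trace} with $B:=\beta C\sqrt{n}$.

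The step I expect to be the real obstacle is the operation-level chain-rule comparison in the second paragraph: the clean ReLU support argument does not carry over to general activations, so one needs either $\sigma'\ge c>0$ or two-sided control of the pre-activations, and the constant $C$ has to be made uniform over all $n$ inputs and all layers $l$ rather than input-dependent. The remaining ingredients --- Cauchy--Schwarz, the Lipschitz bound, and the identification of $\|\Theta^k_0\|_{\mathrm{tr}}$ with $\sum_i\|\partial f_i/\partial\theta_k\|_2^2$ --- are routine.
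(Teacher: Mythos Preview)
Your route is genuinely different from the paper's, and the detour through $\nabla_{\theta_k}f_i$ creates a real problem with the constant. By your own accounting, the comparison $|\partial f_i/\partial\alpha_k|\le C\,\|\nabla_{\theta_k}f_i\|_2$ forces $C$ to absorb a factor $1/|\alpha_k|$ (since $\nabla_{\theta_k}f_i$ carries an explicit $\alpha_k$ via the chain rule through $h^{(l)}=\sum_k\alpha_k h_k^{(l)}$). That makes your final $B=\beta C\sqrt{n}$ scale like $1/|\alpha_k|$, so the bound you actually prove is $\mathcal{F}(\alpha_k)\le(\text{const})\cdot\mathcal{M}_{\mathrm{Trace}}(\Theta_k)$ with the $|\alpha_k|$ factor cancelled. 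But the $|\alpha_k|$ factor is precisely the content of the theorem --- it is what Remark~\ref{remark_relieve} uses to argue that \textit{ZEROS} is trace norm times a random noise $|\alpha_k|$. A constant $B$ that secretly depends on $1/|\alpha_k|$ defeats this. Your remark that $\alpha_k$ is ``a fixed constant under uniform DARTS initialization'' does not help here, since the paper explicitly takes $\alpha$ to be randomly initialized.

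The paper avoids this entirely by never touching $\nabla_{\theta_k}f$. It uses the elementary identity $\partial h^{(l)}/\partial\alpha_k^{(l)}=h_k^{(l)}$, so the chain rule gives $\partial f_i/\partial\alpha_k=\langle\partial f_i/\partial h^{(l)}_i,\,h^{(l)}_{k,i}\rangle$ with no $\alpha_k$ appearing on the right. Then $|\partial\ell_i/\partial f_i|\le\beta$ and a uniform bound $\|\partial f_i/\partial h^{(l)}_i\|_2\le B$ reduce everything to $\sum_i\|h^{(l)}_{k,i}\|_2$. Cauchy--Schwarz over $i$ turns this into $\sqrt{n\sum_i\|h^{(l)}_{k,i}\|_2^2}=\sqrt{n\,\|\Sigma_k\|_{\mathrm{tr}}}$, where $\Sigma_k$ is the NNGP kernel $[\Sigma_k]_{ij}=\langle h^{(l)}_{k,i},h^{(l)}_{k,j}\rangle$. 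The final step is the kernel inequality $\Theta_k\succeq\Sigma_k$ from the recursive NTK formula, which gives $\mathcal{M}_{\mathrm{Trace}}(\Sigma_k)\le\mathcal{M}_{\mathrm{Trace}}(\Theta_k)$. This route needs no lower bound on $\sigma'$, no operator-norm control of $\theta_k$, and --- crucially --- keeps $B$ independent of $\alpha_k$. I would drop the $\theta_k$-gradient comparison and go through the NNGP instead.
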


The proof is in Appendix \ref{proof_section}. Different from Lemma \ref{lemma_unify} that the zero-cost scores are only related to the trace norm, the proposed \textit{\textit{ZEROS}} metrics are related to the product of the trace norm and the randomly initialized magnitude $\alpha_k$. In the following Remark \ref{remark_relieve}, we state how \textit{ZEROS} avoids
parameter-intensive bias.
\begin{remark}\label{remark_relieve} 
\textit{\textbf{Avoiding parameter-intensive bias.}} Different from zero-cost NAS that selects architectures only based on the trace norm (Lemma \ref{lemma_unify}) which will bring the parameter-intensive bias (Remark \ref{remark_bia}), \textit{ZEROS} assigns a random noise (since $\alpha$ is randomly initialized) on the trace norm in selecting operations, so it can implicitly relieve the parameter-intensive bias as a by-product which avoid always selecting the operation with the highest trace norm (or the most parameters).
\end{remark}

In the following theorem, we derive the convergence rate of $\mathcal{L}$ for the supernet performance, which is correlated with the {zero-cost operation sensitivity}.

\begin{theorem} \label{theorem:generalization}
 Suppose dataset $S = \{ (x_i,y_i)\}_{i=1}^n$ are i.i.d. samples from a non-degenerate distribution $\mathcal{D}(x,y)$, and $m \ge {\rm poly}(n, \lambda_0^{-1}, \delta^{-1})$, where $\lambda_0 = \lambda_{\min}(\Theta) > 0$. Consider any loss function $\ell: \mathbb{R} \times \mathbb{R} \rightarrow [0,1]$ that is $1$-Lipschitz, then with probability at least $1-\delta$ over the random initialization, the supernet trained by gradient descent for $T \ge \Omega(\frac{1}{\eta \lambda_0} \log \frac{n}{\delta})$ iterations has population risk $\mathcal{L}_\mathcal{D} = \mathbb{E}_{(x,y)\sim \mathcal{D}(x,y)}[\ell(f_T(x; \theta),y)]$ that is bounded as:
\begin{equation}\label{eq:gap}
\begin{small}
\begin{aligned} 
\mathcal{L}_{\mathcal{D}} \le \mathcal{L}_{\mathcal{S}} + \frac{C}{\mathcal{M}_\textup{Trace}(\Theta, \alpha)} + O \bigg(\sqrt{\frac{\log \frac{n}{\lambda_0 \delta}}{n}} \bigg) .
\end{aligned}
\end{small}
\end{equation}
where $C$ is a constant, $\mathcal{M}^2_\textup{Trace}(\Theta, \alpha)=\sum_{l=1}^{L}\sum_{k=1}^{M} \alpha^2_{lk} \mathcal{M}^2_{\mathrm{Trace}}(\Theta_{lk})$ and $(\Theta_{lk})$ is NTK matrix for the $m$-th operation in the $l$-th layer. 
\end{theorem}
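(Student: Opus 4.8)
The plan is to reduce \eqref{eq:gap} to a classical Neural‑Tangent‑Kernel optimization‑and‑generalization guarantee for networks trained by gradient descent --- in the style of \cite{arora2019exact,du2019gradient,allen2019convergence}, whose hypotheses $m\ge\mathrm{poly}(n,\lambda_0^{-1},\delta^{-1})$ and $T\ge\Omega(\tfrac1{\eta\lambda_0}\log\tfrac n\delta)$ are exactly the ones assumed here --- and then to rewrite the complexity term of that guarantee in terms of $\mathcal{M}_\textup{Trace}(\Theta,\alpha)$ by exploiting the additive structure of the supernet NTK. Concretely, the standard bound reads $\mathcal{L}_{\mathcal D}\le\mathcal{L}_{\mathcal S}+\sqrt{2\,y^\top(\Theta^\infty)^{-1}y/n}+O(\sqrt{\log(n/\lambda_0\delta)/n})$, so it suffices to prove
\[
\sqrt{\frac{2\,y^\top(\Theta^\infty)^{-1}y}{n}}\ \le\ \frac{C}{\mathcal{M}_\textup{Trace}(\Theta,\alpha)} .
\]

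The first step is the NTK decomposition of the supernet. From $h^{(l)}=\sum_k\alpha^{(l)}_k h^{(l)}_k$ with $h^{(l)}_k=\sigma(\theta^{(l)}_k h^{(l-1)})$, the chain rule gives $\partial f_i/\partial\theta^{(l)}_k=\alpha^{(l)}_k\,g_{i,lk}$, where $g_{i,lk}$ is the Jacobian backpropagated through the $k$-th branch of layer $l$ with the relaxation weight removed. Summing the rank‑one blocks over all parameter groups,
\[
\Theta_0=\sum_{l=1}^{L}\sum_{k=1}^{M}\nabla_{\theta^{(l)}_k}f\,\nabla_{\theta^{(l)}_k}f^\top=\sum_{l=1}^{L}\sum_{k=1}^{M}\alpha_{lk}^2\,\Theta_{lk},
\]
and taking traces yields $\mathcal{M}^2_\textup{Trace}(\Theta,\alpha)=\|\Theta_0\|_\textup{tr}/n=\sum_{l,k}\alpha_{lk}^2\,\|\Theta_{lk}\|_\textup{tr}/n=\sum_{l,k}\alpha_{lk}^2\,\mathcal{M}^2_\textup{Trace}(\Theta_{lk})$, i.e. the multi‑layer extension of Theorem~\ref{lemma:trace} and exactly the identity asserted in the statement. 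By the over‑parameterization hypothesis one moreover has $\Theta_0\approx\Theta_\infty$ \cite{jacot2018neural,allen2019convergence}, so this trace identity transfers to $\Theta^\infty$ up to a $\mathrm{poly}$‑controlled error.

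Next I would invoke the NTK machinery proper: for $m$ polynomially large, gradient descent with step size $\eta$ remains in an $O(1/\sqrt m)$-ball of the initialization, the empirical risk contracts geometrically as $(1-\eta\lambda_0)^t$ so that $\mathcal{L}_{\mathcal S}$ is negligible after $T\ge\Omega(\tfrac1{\eta\lambda_0}\log\tfrac n\delta)$ steps, and the trained predictor equals, up to $o(1)$, the kernel interpolant of RKHS norm $\sqrt{y^\top(\Theta^\infty)^{-1}y}$; feeding this norm into a Rademacher‑complexity bound for the associated function class produces the standard inequality quoted above (this is where the $1$-Lipschitz, $[0,1]$-valued loss and the high‑probability‑over‑initialization statement come from). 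It then remains to bound $y^\top(\Theta^\infty)^{-1}y\le C^2 n^2/\|\Theta_0\|_\textup{tr}$: with bounded targets ($\|y\|_2^2\le n$) this follows once the non‑degenerate spectrum of the supernet NTK is shown to be well‑conditioned, since then $y^\top(\Theta^\infty)^{-1}y\lesssim\|y\|_2^2\cdot n/\|\Theta^\infty\|_\textup{tr}$; substituting $\|\Theta_0\|_\textup{tr}=n\,\mathcal{M}^2_\textup{Trace}(\Theta,\alpha)$ and absorbing constants into $C$ finishes the proof.

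The hard part is precisely this quadratic‑form‑to‑trace step: $y^\top(\Theta^\infty)^{-1}y\le C^2 n^2/\|\Theta_0\|_\textup{tr}$ is not unconditional --- a label vector concentrated on the least eigendirection of $\Theta^\infty$ violates it --- so making it rigorous requires quantifying the ``non‑degenerate distribution'' assumption into an eigenvalue‑spread (or effective‑dimension) bound for the sequential‑nodes / $M$-parallel‑edges NTK. Subsidiary points that still need care: propagating the additive decomposition $\Theta_0=\sum_{l,k}\alpha_{lk}^2\Theta_{lk}$ and the couplings $\Theta_t\approx\Theta_0\approx\Theta_\infty$ through the finite‑width regime when summing over $L\times M$ operations, the almost‑everywhere treatment of the ReLU non‑smoothness, treating the relaxation weights $\alpha_{lk}$ as fixed constants in the weight‑only gradient descent analyzed here, and checking that the positivity $\lambda_0>0$ posited in the hypothesis is non‑vacuous for this supernet (it holds whenever the inputs are in general position). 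None of these affects the form of \eqref{eq:gap}; they only determine the constant $C$ and the hidden polynomial in the over‑parameterization condition.
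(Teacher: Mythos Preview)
Your proposal is correct and follows essentially the same route as the paper: invoke the Arora--Du--Hu--Li--Wang NTK generalization bound $\mathcal{L}_{\mathcal D}\lesssim\sqrt{2\,y^\top\Theta^{-1}y/n}+O(\sqrt{\log(n/\lambda_0\delta)/n})$, derive the additive decomposition $\Theta=\sum_{l,k}\alpha_{lk}^2\Theta_{lk}$ (hence the trace identity for $\mathcal{M}^2_\textup{Trace}(\Theta,\alpha)$), and convert the quadratic form into $C/\mathcal{M}_\textup{Trace}(\Theta,\alpha)$. The ``hard part'' you flag is handled in the paper exactly as you anticipate---via $y^\top\Theta^{-1}y\le\|y\|_2^2\,\|\Theta^{-1}\|_\textup{tr}$ together with the elementary spectral inequality $\|\Theta\|_\textup{tr}\cdot\|\Theta^{-1}\|_\textup{tr}\le(\lambda_{\max}/\lambda_{\min})\,n^2$, so that the constant is $C=\sqrt{2\lambda_{\max}\|y\|_2^2/\lambda_{\min}}$; your caveat that this depends on the NTK condition number (and is therefore not a universal constant) is well taken and in fact sharper than what the paper states.
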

The proof can be found in Appendix \ref{proof_section}.The second term $\frac{C}{\mathcal{M}_{\mathrm{Trace}}(\Theta, \alpha)}$ in Eq.\eqref{eq:gap} represents the generalization gap of supernet, which is correlated to the product of operation magnitude and trace norm. As shown in Lemma \ref{lemma:trace}, our \textit{ZEROS} $\mathcal{F}$ is correlated to this product, so as provides an explicit theoretical connection between $\mathcal{F}$ and the supernet performance, where preserving those operations with large \textit{ZEROS} score $\mathcal{F}$ can guarantee a lower supernet loss bound.

\begin{algorithm}[t]
\caption{FreeDARTS}
\label{alg:algorithm1}

\begin{algorithmic}[1]
\STATE \textbf{input}: Initialized supernet weights $W$ and architecture parameters $\alpha$; Set of edges $\mathcal{E}$ and candidate $\mathcal{O}$.
%\STATE Randomly select an edge $e\in \mathcal{E}$;
\WHILE{An architecture is not obtained}
\STATE Initilized supernet.
\FOR{\textbf{all} edge $e\in \mathcal{E}$}
\FOR{\textbf{all} operations $o\in \mathcal{O}_e$}
\STATE Calculate the operation saliency score for each operation $\alpha_{e,o}$ based on Eq. \eqref{eq:free_nas}.
\ENDFOR
\ENDFOR
\STATE Prune the candidate operation from the supernet with the lowest \textit{ZEROS} score $\mathcal{F}(\alpha_{e,o})$ for those edges contain more than one candidate operation;
\STATE Update the pruned supernet;
\ENDWHILE
\STATE \textbf{output:} Obtain a valid architecture $\alpha^*$.
\end{algorithmic}
\end{algorithm}

\subsection{Training Free Differentiable Architecture Search}
\label{sec3.4}

%and prune the supernet by one-shot as the original DARTS in the discretization stage. 
 
After defining the proposed connection sensitivity \textit{ZEROS}, we also raise two concerns when we leverage it to find competitive architectures:

\begin{enumerate}
    \item In the discretization stage, differentiable NAS needs to remove most candidate operations except one for each edge. However, our \textit{ZEROS} only evaluates the effect of removing each operation individually, without considering the dependencies between different candidate operations. 
    \item \textit{ZEROS} mainly focuses on edge sensitivity, especially how it impacts the information flow through the supernet when removing a candidate operation. This raises the question, of whether the input data are necessary since we can observe the information flow with any inputs \cite{tanaka2020pruning}. 
\end{enumerate}

\noindent\textbf{Iterative Operation Pruning with \textit{ZEROS}.}
To resolve the first concern, we consider an iterative manner to progressively remove inferior operations, where several recent works \cite{verdenius2020pruning,tanaka2020pruning} also empirically confirm that
iteration improves the performance of network pruning at initialization. Specifically, rather than pruning most operations by one shot as DARTS, we iteratively remove the least promising operation at each step. After each operation removal, we re-evaluate all remaining candidate operations in the supernet based on the \textit{ZEROS}. Since our \textit{ZEROS} score is extremely efficient to be obtained, this iterative way would not incur too much computational cost. With \textit{ZEROS} and the iterative manner, we can now easily implement our approach, which we call \textit{training free differentiable architecture search} (\textbf{FreeDARTS}). Given the initialized supernet $W$ and architecture parameter $\alpha$, we can calculate the operation strength based on Eq.\eqref{eq:free_nas} for all candidate operations. Then, we remove $\alpha_{e,o}$ with the least $\mathcal{F}({\alpha_{e,o}})$ and update $\alpha$. This process is looped when a discrete architecture is obtained. Algorithm \ref{alg:algorithm1} outlines our FreeDARTS for the differentiable neural architecture search. \textbf{The main difference between our FreeDARTS and DARTS includes that FreeDARTS replaces the trained magnitude $\alpha$ in DARTS with the proposed \textbf{\textit{\textit{ZEROS}}} $\mathcal{F}(\alpha)$, and the discrete architecture is obtained by iterative pruning operations instead of applying \textit{argmax}}.

%After defining the score metrics in Eq.\eqref{eq:free_nas}, we are able to calculate the operation strength based on the initialized supernet $W$ and architecture parameter $\alpha$ without any training. As a result, our approach, which we call \textit{training free differentiable architecture search} (\textbf{FreeDARTS}), can be implemented easily. Algorithm \ref{alg:algorithm1} outlines our FreeDARTS for the differentiable neural architecture search. The main difference is that FreeDARTS replaces the trained magnitude $\alpha$ in DARTS with the proposed \textbf{\textit{zero-cost operation sensitivity}} $\mathcal{F}(\alpha)$, and the discrete architecture can be also obtained through apply \textit{argmax} on $\mathcal{F}(\alpha)$ as DARTS.

%To resolve the first concern, we can simply leverage the Hessian matrix $H$ to capture the dependencies between different operations as GraSP \cite{}: $\mathcal{F}^{M}({\alpha_k})=\left | H\frac{\partial \mathcal{L}( W, \alpha)}{\partial \alpha_k}\cdot \alpha_k \right |$, where $H$ is the Hessian matrix of .... Although $\mathcal{F}^{M}$ has the potential to capture dependencies and help predict the effect of removing multiple operations, we empirically find that it performs poorly in our FreeDARTS (more discussions can be found in Sec.\ref{}), and a similar phenomenon also exists in zero-cost NAS \cite{}. One potential reason is that the Hessian matrix is hard to calculate in a large 

\vspace{0.2cm}
\noindent\textbf{Label-agnostic \textit{FreeDARTS}.} As discussed in Sec. \ref{sec3.2}, our \textit{ZEROS} aims to calculate the connection sensitivity, a.k.a. how the information flow change with $\alpha$. Although \textit{ZEROS} in Eq.\eqref{eq:free_nas} seemingly depends on the labels of a dataset, \textit{ZEROS} can, however, be derived using random labels. In more detail, we can assign random labels to the inputs, and our \textit{ZEROS} score is also able to be obtained, and this label-agnostic training free differentiable architecture search is accordingly denoted as \textbf{FreeDARTS-L}. Since the aim of NAS is to observe the connection sensitivity and find excellent architectures, rather than those trained models, the labels are not compulsively needed. Several NAS methods have also developed a similar label-agnostic search \cite{shu2021nasi, liu2020labels}, which implies the reasonableness of the label-agnostic search. In Sec.\ref{sec4.2}, we also empirically verify its effectiveness.

\vspace{0.2cm}
\noindent\textbf{Data-agnostic \textit{FreeDARTS}.} Besides being label-agnostic, our \textit{ZEROS} can also be designed as data-agnostic. Motivated by SynFlow \cite{tanaka2020pruning}, we introduce a new loss function:
\begin{equation}\label{eq:free_nas_synflow}
\mathcal{L}_{\textup{SF}}=\mathbf{1}^{T}f(\alpha,\left | W \right |)\mathbf{1},
\end{equation}
where $\mathbf{1}$ is the all ones vector, $f$ is the supernet and $\left | W \right |$ denotes putting the element-wise absolute on all parameters. As shown in Eq.\eqref{eq:free_nas_synflow}, we replace the input data with all ones vector, and the loss is the sum of the model's output. Accordingly, our data-agnostic \textit{ZEROS} is defined as:
\begin{equation} \label{eq:free_nas_data}
\mathcal{F}({\alpha_k})=\left | \frac{\partial \mathcal{L}_{\textup{SF}}( W, \alpha)}{\partial \alpha_k}\cdot \alpha_k \right |,\\
\end{equation}
and the according training free differentiable architecture search is denoted as \textbf{FreeDARTS-D}. This perspective implies that our \textit{ZEROS} can monitor the information flow strengths for all candidate operations without knowing the dataset. In Sec.\ref{sec4.2}, we empirically find that the data-agnostic \textit{ZEROS} even achieves the best performance, compared with label-agnostic \textit{ZEROS} and the vanilla \textit{ZEROS} in our FreeDARTS. One potential reason is that, the \textit{ZEROS} score is obtained based on only several batches of datasets, and it is hard to catch the statistics of the whole dataset. In addition, our \textit{ZEROS} should be robust to the input data, and is able to find the important operation when the input change. In the zero-cost NAS \cite{abdelfattah2021zero} and network pruning at initialization \cite{tanaka2020pruning}, the data-agnostic SynFlow also achieves the best results comparing other saliency metrics, supporting the reasonableness of our label-agnostic \textit{ZEROS}.

\begin{corollary}\label{remark_grasynflow} 
\textit{\textbf{Label-agnostic and data-agnostic \textit{ZEROS} can also avoid parameter-intensive bias and achieve a similar convergence rate.}} When we assume the loss functions in label-agnostic and data-agnostic \textit{ZEROS} are also $\beta$-Lipschitz continuous and $\gamma$-Lipschitz smooth, we can obtain the simialr theoretical results as Theorem \ref{lemma:trace} and Theorem \ref{theorem:generalization}.
\end{corollary}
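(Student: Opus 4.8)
The plan is to show that neither modification---replacing the true labels by random labels (FreeDARTS-L), nor replacing the data-dependent loss by the SynFlow surrogate $\mathcal{L}_{\mathrm{SF}}$ (FreeDARTS-D)---touches the two ingredients that actually drive Theorem \ref{lemma:trace} and Theorem \ref{theorem:generalization}: (i) the chain-rule factorization $\partial\mathcal{L}/\partial\alpha_k=(\partial\mathcal{L}/\partial f)(\partial f/\partial\alpha_k)$ together with a uniform bound on $|\partial\mathcal{L}/\partial f|$ coming from $\beta$-Lipschitz continuity and $\gamma$-Lipschitz smoothness; and (ii) the fact that the per-operation NTK blocks $\Theta_{lk}$, and hence the aggregate $\mathcal{M}_{\mathrm{Trace}}(\Theta,\alpha)$, depend only on the architecture, the random initialization, and the inputs fed to the network, not on the targets.

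First, for the label-agnostic case, assigning random labels $\tilde y_i$ leaves the network and all of its parameter gradients $\partial f_i/\partial\theta_k$ untouched, so every $\Theta^k_0$ and the aggregate $\mathcal{M}_{\mathrm{Trace}}(\Theta,\alpha)$ are literally unchanged; only the backpropagated scalar $\partial\mathcal{L}/\partial f$ changes. By the assumption in the corollary the random-label loss is still $\beta$-Lipschitz continuous and $\gamma$-Lipschitz smooth (for the standard MSE or cross-entropy losses the relevant constants are uniform over any bounded target set, so this is automatic), hence the derivation of Theorem \ref{lemma:trace} reproduces $\mathcal{F}(\alpha_k)\le B|\alpha_k|\,\mathcal{M}_{\mathrm{Trace}}(\Theta_k)$ verbatim with the same constant $B$. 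Because $\alpha_k$ is still a fresh random variable, Remark \ref{remark_relieve} applies unchanged, so FreeDARTS-L inherits the avoidance of parameter-intensive bias. For Theorem \ref{theorem:generalization}, the supernet generalization bound in Eq.\eqref{eq:gap} is a statement about the NTK spectrum $\lambda_0$ and the label-free quantity $\mathcal{M}_{\mathrm{Trace}}(\Theta,\alpha)$ at initialization, so it holds as stated; and the ZEROS-based selection rule---keeping operations with large $\mathcal{F}$, i.e.\ large $\alpha_k^2\mathcal{M}^2_{\mathrm{Trace}}(\Theta_k)$ up to the bound of Theorem \ref{lemma:trace}---still pushes $\mathcal{M}_{\mathrm{Trace}}(\Theta,\alpha)$ up and the gap $C/\mathcal{M}_{\mathrm{Trace}}(\Theta,\alpha)$ down.

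Second, for the data-agnostic case, I would treat $\mathcal{L}_{\mathrm{SF}}=\mathbf{1}^\top f(\alpha,|W|)\mathbf{1}$ as the loss evaluated on a single artificial ``sample'' (the all-ones input) with weights replaced by $|W|$. This loss is \emph{linear} in the network output, hence trivially $1$-Lipschitz continuous and $0$-Lipschitz smooth, so the hypotheses of Theorem \ref{lemma:trace} hold with the sharpest possible constants, and the same factorization gives $\mathcal{F}(\alpha_k)\le B|\alpha_k|\,\mathcal{M}_{\mathrm{Trace}}(\Theta_k)$, where now $\Theta_k$ is the NTK block computed at the all-ones input with initialization $|W|$. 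The parameter-intensive-bias argument and the convergence/generalization argument then transfer exactly as before, after noting that the NTK concentration results underpinning Theorem \ref{theorem:generalization} require only an i.i.d.\ initialization with bounded moments, a property preserved by taking absolute values of Gaussian weights.

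The main obstacle I anticipate is the bookkeeping needed to make $\mathcal{M}_{\mathrm{Trace}}(\Theta,\alpha)$ meaningful in the data-agnostic setting: one must pin down what the ``dataset'' $S$ and its associated NTK are once the true inputs are discarded, and verify that replacing $W$ by $|W|$ does not break the infinite-width behaviour $\Theta_0\approx\Theta_\infty$ invoked in Theorem \ref{theorem:generalization}. Everything else is a verbatim replay of the earlier proofs once the Lipschitz constants are checked, so I would isolate that check as a short lemma and keep the rest at the level of ``the argument of Theorem \ref{lemma:trace} / Theorem \ref{theorem:generalization} applies mutatis mutandis.''
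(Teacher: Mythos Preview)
The paper gives no separate proof of this corollary; it is stated as an immediate consequence once the Lipschitz hypotheses on the new loss are granted, since the proofs of Theorem~\ref{lemma:trace} and Theorem~\ref{theorem:generalization} use the loss only through the scalar factor $\partial\mathcal{L}/\partial f$ bounded by $\beta$, while every NTK quantity is label-free. Your proposal identifies exactly this mechanism and is therefore aligned with the paper's (implicit) argument; your extra remarks---that $\mathcal{L}_{\mathrm{SF}}$ is linear in $f$ and hence automatically $1$-Lipschitz/$0$-smooth, and that taking $|W|$ preserves the i.i.d.\ bounded-moment structure needed for NTK concentration---go beyond what the paper writes but are consistent with it.
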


%is that DARTS utilizes the trained magnitude of $\alpha$ to indicate the importance of operations while FreeDARTS calculate the \textbf{\textit{zero-cost operation sensitivity}} $\mathcal{F}(\alpha)$ to indicate the importance of the operations. So, our FreeDARTS only replaces the trained magnitude of $\alpha$ with $\mathcal{F}(\alpha)$, and the discrete architecture can be obtained through apply \textit{argmax} on $\mathcal{F}(\alpha)$.

\section{Experiments}\label{sec4}
In the above, we designed a zero-cost operation sensitivity to measure the operation importance at initialization, and a simple framework \textbf{FreeDARTS} for the architecture search without any training is accordingly proposed. In this section, we conduct a series of experiments to verify the foundational question: \textit{{can we find high-quality architectures through our FreeDARTS}}? We consider five cases to analyze the proposed framework, including three benchmark datasets \cite{BENCH102,zela2020nasbench1shot1,siems2020bench}, DARTS search space \cite{liu2018darts}, and MobileNet search space \cite{howard2019searching}.% More descriptions of benchmark datasets and DARTS search space and experimental settings can be found in the Appendix A.

%We first analyze the FreeDARTS on the NAS benchmark dataset, along with ablation studies and discussions of hyperparameter settings. Then we compare FreeDARTS with state-of-the-art NAS methods on the common DARTS search space. 

\subsection{Experimental Setting}

In our experiments, we consider two scenarios, NAS benchmark datasets, including NAS-Bench-101, NAS-Bench-1shot1, NAS-Bench-201, and NAS-Bench-301 \cite{ying2019bench,BENCH102,zela2020nasbench1shot1,siems2020bench}, and the common DARTS space \cite{liu2018darts} and the MobileNet space \cite{zhang2021neural} , to analyze the proposed framework FreeDARTS. The search spaces of NAS-Bench-101, NAS-Bench-1shot1, and NAS-Bench-201 are much smaller than the real-world DARTS and MobileNet space, while the ground-truth for all candidate architectures in the benchmark datasets is known. The NAS-Bench-301 shares the same search space with DARTS space, while the performance of candidate architectures are obtained by a predictor fitted with $\sim$60k architecture ground-truths. 

The search space in NAS-Bench-201 \cite{BENCH102} contains four nodes with five associated operations, resulting in 15,625 cell candidates, where the performance of CIFAR-100, CIFAR-100, and ImageNet for all architectures in this search space are reported. The NAS-Bench-101 \cite{ying2019bench} is another famous NAS benchmark dataset, which is much larger than NAS-Bench-201 while only the CIFAR-10 performance for all architectures are reported. More important, the architectures in NAS-Bench-101 contain different number of nodes, which makes it impossible to build a generalized supernet for one-shot nor differential NAS methods. To leverage the NAS-Bench-101 for analyzing the differentiable NAS methods, NAS-Bench-1Shot1 \cite{zela2020nasbench1shot1} builds from the NAS-Bench-101 benchmark dataset by dividing all architectures in NAS-Bench-101 into 3 different unified cell-based search spaces, which contain 6240, 29160, and 363648 architectures, respectively. The architectures in each search space have the same number of nodes and connections, making the differentiable NAS could be directly applied to each search space. We choose the third search space in NAS-Bench-1Shot1 to analyse FreeDARTS, since it is much more complicated than the remaining two search spaces. 

As to the most common search space in NAS, DARTS needs to search for two types of cells: a normal cell $\alpha_{normal}$ and a reduction cell $\alpha_{reduce}$. Cell structures are repeatedly stacked to form the final CNN structure. There are seven nodes in each cell: two input nodes, four operation nodes, and one output node. Each input node will select one operation from $|\mathcal{O}|=8$ candidate operations, including: $3\times 3$ max pooling and average pooling operation, $3\times 3$ and $5\times 5$ separable convolution operation, $3\times 3$ and $5\times 5$ dilated separable convolutions operation, identity, and $zero$. The common practice in DARTS is to search on CIFAR-10, and the best searched cell structures are directly transferred to CIFAR-100 and ImageNet. We conduct the architecture search with 5 different \textit{random seeds}, and the best one is selected after the evaluation on CIFAR-10. The best one is then transferred to CIFAR-100 and ImageNet. %Since the sizes of searched architectures are in a range, we adjust the number of filter in the evaluation to make the model sizes similar for fair comparison. 

The NAS-Bench-301 \cite{siems2020bench} shares the same search space with DARTS, which contains about $10^{18}$ architectures, making it impossible to report the ground-truths for all architectures. Rather than training from the scratch to get the ground-truths for all architectures, NAS-Bench-301 fits a Graph Isomorphism Network based on the ground-truths of $\sim$60k architecture to predict the performance of all remaining architectures. The prediction usually could hardly indicate the the true performance in practice. For example, the performance of an architecture containing all parameter-free operations still receive competitive predictive performance, in contrast to the extremely poor true performance. However, the authors showed that the prediction shows a positive correlation with the ground truth, that the resulting search trajectories by the prediction closely resemble the ground truth trajectories when evaluating a differentiable NAS method.

We follows existing works \cite{zhang2021neural} to build the MobileNet architecture space. More specifically, different from the above search spaces, the MobileNet search only need to select the kernel size and expansion ratios, which uses the MobileNetV2-shape backbone. In this way, we need to select mobile inverted bottleneck convolution (MBConv) with different kernel size $\left \{ 3,5,7 \right \}$ and expansion ratios $\left \{ 3,6 \right \}$ for each block.%\footnote{The reproducible codes on benchmark datasets and the trained models of CIFAR-10, CIFR-100, and ImageNet for searched models could be found in \href{https://anonymous.4open.science/r/FreeDARTS-D7D6}{https://anonymous.4open.science/r/FreeDARTS-D7D6}.}. %Figure \ref{fig:search_mobile} visualizes the searched architecture by our FreeDARTS\footnote{The reproducible codes on benchmark datasets and the trained models of CIFAR-10, CIFR-100, and ImageNet for searched models could be found in \href{https://anonymous.4open.science/r/FreeDARTS-D7D6}{https://anonymous.4open.science/r/FreeDARTS-D7D6}.}.

\begin{table}
\caption{Comparisons with NAS baselines on NAS-Bench-201. %``*" indicates that we reproduce the results with same random seeds as our approaches.
}
\scriptsize
\setlength{\tabcolsep}{3pt}
{\centering
\begin{tabular}
{lcccc}
\toprule

Method&CIFAR-10&CIFAR-100&ImageNet&Search Cost\\
\midrule
SETN&87.64$\pm$0.00&59.05$\pm$0.24&32.52$\pm$0.21&31010s\\
GDAS&93.40$\pm$0.49&70.33$\pm$0.87&41.47$\pm$0.21&28925.91s\\
%SNAS \cite{xie2018snas}&90.10$\pm$1.04&92.77$\pm$0.837&69.69$\pm$2.39&69.345$\pm$1.98&42.84$\pm$1.79&43.16$\pm$2.64\\
DARTS (1st)&54.30$\pm$0.00&15.61$\pm$0.00&16.32$\pm$0.00&10889.87s\\
DARTS (2nd)&54.30$\pm$0.00&15.61$\pm$0.00&16.32$\pm$0.00&29901.67s\\
PC-DARTS &93.41$\pm$0.30&67.48$\pm$0.89&41.31$\pm$0.22&10023s\\
SNAS &92.77$\pm$0.83&69.34$\pm$1.98&43.16$\pm$2.64&32345s\\
%DSNAS \cite{hu2020dsnas}&89.66$\pm$0.29&93.08$\pm$0.13&30.87$\pm$16.40&31.01$\pm$16.38&40.61$\pm$10.09&41.07$\pm$0.09&\\
%RLNAS \cite{liu2018darts}&89.94&93.45&70.98&70.71&43.86&43.70\\
\midrule
Random baseline&86.61$\pm$13.46&60.83$\pm$12.58&33.13$\pm$9.66&-\\
NASWOT &92.45$\pm$1.12&68.66$\pm$2.02&41.35$\pm$4.08&30.01s\\
Zero-Cost NAS &93.45$\pm$0.28&70.73$\pm$1.36&43.64$\pm$2.42&115.2s\\
%Zero-Cost-PT &94.03$\pm$0.17&72.53$\pm$0.28&46.18$\pm$0.77&647.5s\\
Zero-Cost-PT $\ddagger$ &93.75$\pm$0.00&71.11$\pm$0.00&41.43$\pm$0.00&647.5s\\
%TE-NAS* &93.28$\pm$0.25&69.62$\pm$0.71&44.29$\pm$1.97&1558s\\
TE-NAS &\textbf{93.90$\pm$0.47}&71.24$\pm$0.56&42.38$\pm$0.46&1558s\\
NASI&93.55$\pm$0.10&71.20$\pm$0.14&44.84$\pm$1.41&120s\\
GradSign&93.31$\pm$0.47&{70.33$\pm$1.28}&42.42$\pm$2.81&-\\
KNAS &93.43&71.05&45.05&20000s\\
\textbf{FreeDARTS} &93.64$\pm$0.24&\textbf{71.30$\pm$1.01}&\textbf{45.62$\pm$0.39}&\textbf{3.6s}\\

\midrule

\textbf{optimal}&94.37&73.51&47.31&-\\
\bottomrule
\end{tabular}
\par
}
%FreeDARTS's best single run achieves \textbf{93.78\%}, \textbf{71.11\%}, and \textbf{46.53\%} test accuracy on three datasets, respectively. ``*" indicates the results reproduced with the same \textit{seeds}. 
We consider the iterative paradigm with the data-agnostic \textit{ZEROS} for our FreeDARTS in this experiment. Our best single run achieves \textbf{93.91\%}, \textbf{72.40\%}, and \textbf{46.16\%} test accuracy on three datasets, respectively. ``$\ddagger$" indicates Zero-Cost-PT considers the same SynFlow metric as our FreeDARTS.
\label{tab:nasbench201}
\end{table}

\subsection{Experiments on Benchmark Datasets}
\subsubsection{Experimental results on NAS-Bench-201}
\label{sec4.1}
\noindent\textbf{Reproducible comparison with existing works on NAS-Bench-201.} The results for FreeDARTS and the weight-sharing NAS baselines on the NAS-Bench-201 are provided in Table \ref{tab:nasbench201}. FreeDARTS produced competitive results on all three datasets, significantly outperforming the DARTS and other elaborately designed methods. Moreover, the best single-run of FreeDARTS achieves a performance of \textbf{93.91\%} on CIFAR-10, \textbf{72.40\%} on CIFAR-100, and \textbf{46.16\%} on ImageNet, which are very close to the optimal test accuracies in the NAS-Bench-201 dataset. The second block in Table \ref{tab:nasbench201} contains the comparison results of FreeDARTS with existing train-free NAS methods. The Random baseline is to randomly generate architectures without training. NASWOT \cite{mellor2020neural} uses the Jacobian to score architectures, and TE-NAS \cite{mellor2020neural} uses the spectrum of NTKs and the number of linear regions to rank the architectures. Zero-cost NAS \cite{abdelfattah2021zero}, Zero-cost-PT NAS \cite{xiang2021zero}, and our FreeDARTS all consider Synflow metric in this experiment, where our FreeDARTS obtains competitive results, especially on the large ImageNet dataset. As shown, FreeDARTS outperforms the random baseline with a large margin, showing the effectiveness of FreeDARTS. Moreover, compared with the two elaborately designed train-free NAS, NASWOT and TE-NAS, FreeDARTS also achieves more competitive results, further showing the \textbf{reliability} of the proposed operation saliency metric. Table \ref{tab:nasbench201} also summarizes the search cost of several weight-sharing NAS baselines and training-free NAS methods. As shown, a significant advantage of our FreeDARTS is the \textbf{efficiency}, which only costs \textbf{3.6s} to find competitive architectures. %More interesting, we can see train-free methods can even achieve more competitive results, suggesting that the supernet training may be unnecessarily required in NAS.

\begin{table}[t]
\scriptsize
\centering
\caption{Comparison on different scoring paradigms.}
\begin{tabular}
{lccccccc}
\toprule

{Method}&\multicolumn{1}{c}{CIFAR-10}&\multicolumn{1}{c}{CIFAR-100}&\multicolumn{1}{c}{ImageNet}\\
\midrule
%Zero-cost with Grad&\\
%FreeDARTS with Grad&92.63$\pm$1.85&69.62$\pm$2.21&43.59$\pm$4.54\\
One-shot FreeDARTS &92.53$\pm$0.84&69.27$\pm$1.78&43.31$\pm$3.39\\
FreeDARTS&93.04$\pm$0.54&70.10$\pm$1.48&43.55$\pm$1.56\\
One-shot FreeDARTS-L&92.46$\pm$0.75&68.47$\pm$1.77&42.57$\pm$2.19\\
FreeDARTS-L&92.86$\pm$0.60&69.67$\pm$0.87&43.17$\pm$1.34\\
One-shot FreeDARTS-D &93.63$\pm$0.06&70.04$\pm$0.60&44.77$\pm$1.81\\
FreeDARTS-D &\textbf{93.64$\pm$0.24}&\textbf{71.30$\pm$1.01}&\textbf{45.62$\pm$0.39}\\
%FreeDARTS with Synflow&89.86$\pm$0.60&93.58$\pm$0.32&70.57$\pm$0.24&70.83$\pm$0.48&40.38$\pm$0.593&40.89$\pm$0.68\\
% FreeDARTS with SynFlow &\underline{93.52$\pm$0.11}&\underline{70.70$\pm$0.08}&\textbf{45.28$\pm$1.73}\\
\bottomrule
\end{tabular}
\label{tab:paradigm}
\end{table}

\begin{figure*}[t]
\centering
 \subfigure[CIFAR-10]{
  \begin{minipage}{4.2cm}
      \includegraphics[width=5.2cm,height=3.8cm]{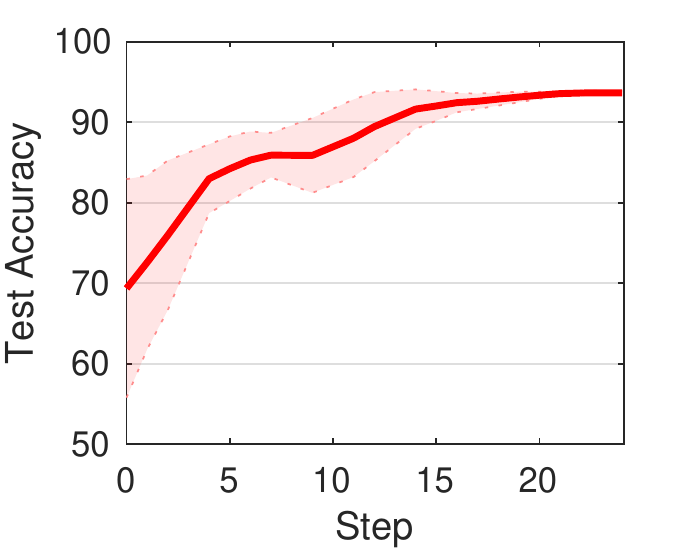}
  \end{minipage}
 }
  \subfigure[CIFAR-100]{
  \begin{minipage}{5.2cm}
       \includegraphics[width=5.2cm,height=3.8cm]{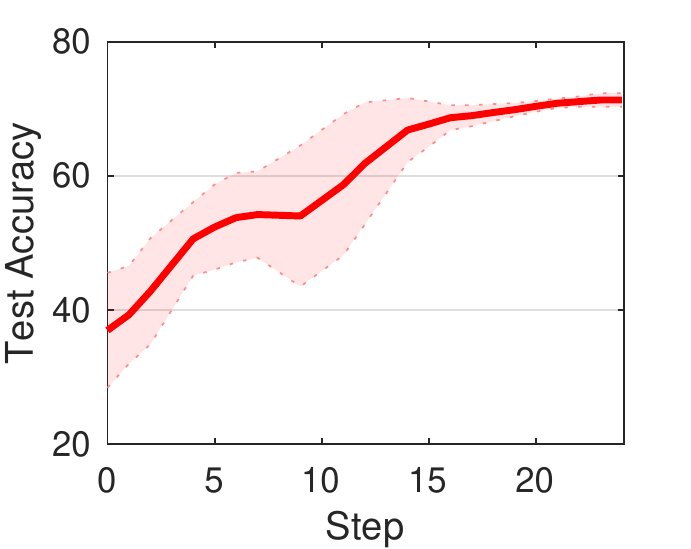}
  \end{minipage} 
  }
  \subfigure[ImageNet]{
  \begin{minipage}{5.2cm}
       \includegraphics[width=5.2cm,height=3.8cm]{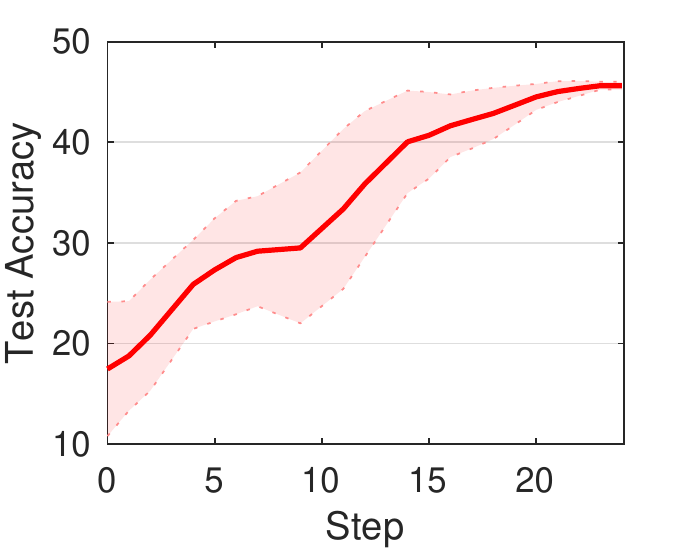}
  \end{minipage} 
  }  
 \caption{Track of test accuracy during the pruning for FreeDARTS on the NAS-Bench-201.}
 \label{fig:Track_nas201}
\end{figure*}

\vspace{0.2cm}
\noindent\textbf{Comparison among different scoring paradigms.} In Sec.\ref{sec3.4}, we discussed different implementations of our FreeDARTS, including one-shot FreeDARTS, label-agnostic FreeDARTS (FreeDARTS-L), and data-agnostic FreeDARTS (FreeDARTS-L), where one-shot FreeDARTS prune the supernet by one-shot rather than iterative, FreeDARTS-L assigns random labels to input data and FreeDARTS-D replaces the input data with all-ones vector. Table \ref{tab:paradigm} summarizes their performance comparison. Interestingly, we can find that FreeDARTS-L and FreeDARTS-D achieve similar or even better results than FreeDARTS, which confirms the label-agnostic and data-agnostic search achieved by FreeDARTS. Compared with FreeDARTS which considers an iterative way to prune supernet, one-shot FreeDARTS obtains slightly worse results, which therefore also further validates the effectiveness of the iterative pruning by FreeDARTS. Since our FreeDARTS-D is the most efficient and achieves competitive performance, we default consider the data-agnostic paradigm for our FreeDARTS in the remaining experiments.

\vspace{0.2cm}
\noindent\textbf{Search performance during the pruning.} To investigate whether our FreeDARTS is effective in removing inferior operations, we consider only pruning one inferior operation from the supernet in each step based on the proposed \textit{zero-cost operation sensitivity}. Figure \ref{fig:Track_nas201} tracks the quality of the pruned supernet, by applying the \textit{argmax} on the architecture parameter $\alpha$ of the pruned supernet after each step of operation pruning to get the architecture (the same as randomly selecting an operation from remaining operations as $\alpha$ is randomly generated). With the pruning proceeding, the performance increases, verifying that the FreeDARTS can effectively remove inferior operations.
%In the beginning, applying the \textit{argmax} on the unpruned $\alpha$ is the same as random sampling as $\alpha$ is randomly generated. 
\begin{table}
%\begin{table}
\footnotesize
\setlength{\tabcolsep}{3pt}
\caption{Search results of CIFAR-10 on NAS-Bench-1shot1.}
\setlength{\tabcolsep}{2.5pt}
\centering
\begin{tabular}{lccccc}
\toprule
\makecell[l]{\multirow{2}*{Method}}&\multicolumn{2}{c}{Average (\%)}&\multicolumn{2}{c}{Best (\%)}&{Search}\\
~&Valid&Test&Valid&{Test}&{Cost}\\
\midrule
%Random \cite{GDAS}&xx&3.4\\
GDAS&6.8$\pm$0.1&6.1$\pm$0.2&6.7&5.9&11425s\\
%ENAS \cite{GDAS}&2.93&18.38&26.0&xx\\
PC-DARTS&6.7$\pm$0.1&6.2$\pm$0.2&6.6&5.9&14760s\\
DARTS (1st)&6.8$\pm$0.05&6.1$\pm$0.2&6.6&5.9&8280s\\
DARTS (2nd)&6.8$\pm$0.05&6.2$\pm$0.05&6.6&6.2&19800s\\
\midrule
Random&24.4$\pm$32.8&24.1$\pm$33.2&7.8&7.5&N/A\\
Zero-Cost-PT& 8.05$\pm$1.24&7.45$\pm$0.94&6.8&6.5&718s\\
FreeDARTS&7.8$\pm$2.4&7.3$\pm$2.4&\textbf{6.0}&\textbf{5.3}&\textbf{6.3s}\\
\bottomrule
\end{tabular}
\label{tab:results_nas101}
\end{table}

%In this subsection, we investigate whether our FreeNAS is effective in removing inferior operations. As described in Algorithm \ref{alg:algorithm1}, FreeNAS prunes one inferior operation from the supernet in each step based on the saliency metrics. Figure \ref{fig:Track_nas201} tracks the quality of the pruned supernet, by applying the \textit{argmax} on the architecture parameter of the pruned supernet after each step of pruning to get the architecture. As shown, the performance of the architectures increases with the pruning proceeding, verifying that the FreeNAS can effectively remove inferior operations.

\vspace{0.2cm}
\noindent\noindent\textbf{Hyperparameter study.} As described in Algorithm \ref{alg:algorithm1}, FreeDARTS is train-free, making our method simple and concise to implement without tuning too many hyperparameters. As described in Eq.\eqref{eq:free_nas}, the operation saliency score is the product of the value of $\alpha$ and the gradient of $\alpha$, while $\alpha$ is transformed by softmax before conducting the forward to calculate the gradient. Generally, the $\alpha$ is initialized with $a*randn$, where $a$ is a weighted scale which is the only hyperparameter in our FreeDARTS denoting the trade-off between $\frac{\partial \mathcal{R}_\textup{FF}}{\partial \alpha}$ (or $\frac{\partial \mathcal{L}}{\partial \alpha}$) and $ \alpha$. Figure \ref{fig:H_nas201} analyzes the hyperparameter $a$ with summarizing the performance of FreeDARTS with different $a$ on the NAS-Bench-201. In general, our FreeDARTS is robust to this hyperparameter, which with different $a$ in a lager range ($1e^{-5} \sim 1e^{-2}$) all achieve competitive results.

\subsubsection{Experimental results on NAS-Bench-1shot1.} We also conduct the experiments on the NAS-Bench-1shot1 space, where the comparison results for FreeDARTS and the weight-sharing NAS baselines are provided in Table \ref{tab:results_nas101}. We report not only the average results but also the best results after several independent runs with different random seeds. As show, our FreeDARTS outperforms the Random baseline by large margins, showing the effectiveness of the proposed framework. As verified before, the most attractive advantage of our FreeDARTS is the efficiency, and it also completes the architecture search NAS-Bench-1shot1 space within much less time compared with the common differentiable NAS baselines, with only \textbf{6.3s}. Although these differentiable NAS baselines achieve better results according to the average test error, our FreeDARTS could find more competitive architectures based on the best test error. %We need to notice that the ability to find the superior architecture is the core in the NAS compared with obtaining stable results. In addition, all differentiable NAS baselines achieve very similar results, and one possible reason is that the differentiable NAS is very easy to be tracked in the local optimal in this space. 

\subsubsection{Experimental Results on NAS-Bench-301} 

We also conduct the experiments on the NAS-Bench-301, the largest existing benchmark dataset, where the performance of our FreeDARTS with different operation saliency and several differentiable NAS baselines are provided in Table \ref{tab:results_nas301}. The ``Average" reports the average predictive performance from the benchmark after several independent runs, and the ``Best" reports the predictive performance of the best searched architectures. The ``Ground-Truth" is the validation results based on the train-from-the-scratch. As show, our FreeDARTS also outperforms the Random baseline by large margins in the NAS-Bench-301, again showing the effectiveness of the proposed framework. As we can observed, the predictive results in NAS-Bench-301 are not very consistent with the ground truth in Table \ref{tab:results_nas301}. For example, although GDAS achieves much lower ground truth performance, it obtains the best predictive performance from the NAS-Bench-301. Although the predictive performance from NAS-Bench-301 could not exactly indicate the true performance, the results still present several consistent results with our previous observation in other benchmark datasets. For example, our FreeDARTS or one-shot FreeDARTS both outperform ``Random" baseline by large margins, verifying the effectiveness of the proposed method. Compared with several training based NAS baselines, our FreeDARTS without any training also achieves comparable results.

\begin{table}[t]
\centering
\caption{Statistic search results (test error) on NAS-Bench-301.}
\begin{tabular}{lccc}
\toprule
{Method}&{Average}&{Best}&{Ground-True}\\
\midrule
%Random \cite{GDAS}&xx&3.4\\
GDAS\cite{GDAS}&6.52$\pm$0.62 (\%)&5.38\%&3.07$\pm$0.16 (\%)\\
PC-DARTS\cite{xu2019pcdarts}&6.42$\pm$0.43 (\%)&5.46\%&2.57$\pm$0.07 (\%)\\
DARTS (2nd) \cite{liu2018darts}&6.74$\pm$0.58 (\%)&5.87\%&2.76$\pm$0.09 (\%)\\
\midrule
Random&7.11$\pm$0.58 (\%)&6.21\%&3.29$\pm$0.15 (\%)\\
One-shot FreeDARTS&6.60$\pm$0.47 (\%)&5.71\%&2.69$\pm$0.08 (\%)\\
FreeDARTS&6.65$\pm$0.52 (\%)&5.50\%&2.50$\pm$0.05 (\%)\\
\bottomrule
\end{tabular}
\label{tab:results_nas301}
\end{table}

%In addition, Table \ref{tab:results_sample_sizie_nas301} shows how the performance of our FreeDARTS varies with different random seeds. We report the average and best performance of FreeDARTS under different number of seeds. The ``Random" baseline also reports the average performance in NAS-Bench-301 with varying the number of seeds. We can see that the ``Best" performance increase when we evaluate more randomly sampled points with different seeds, which makes sense as we randomly select architecture from the search space. In contrast, our FreeDARTS with different saliency metrics are more robust, where the best performance only changes slightly with different random seeds. As shown, compared with the random baseline, our FreeDARTS presents more robust results. 

\begin{figure*}
\centering
 \subfigure[CIFAR-10]{
  \begin{minipage}{5.2cm}
      \includegraphics[width=5.2cm,height=3.8cm]{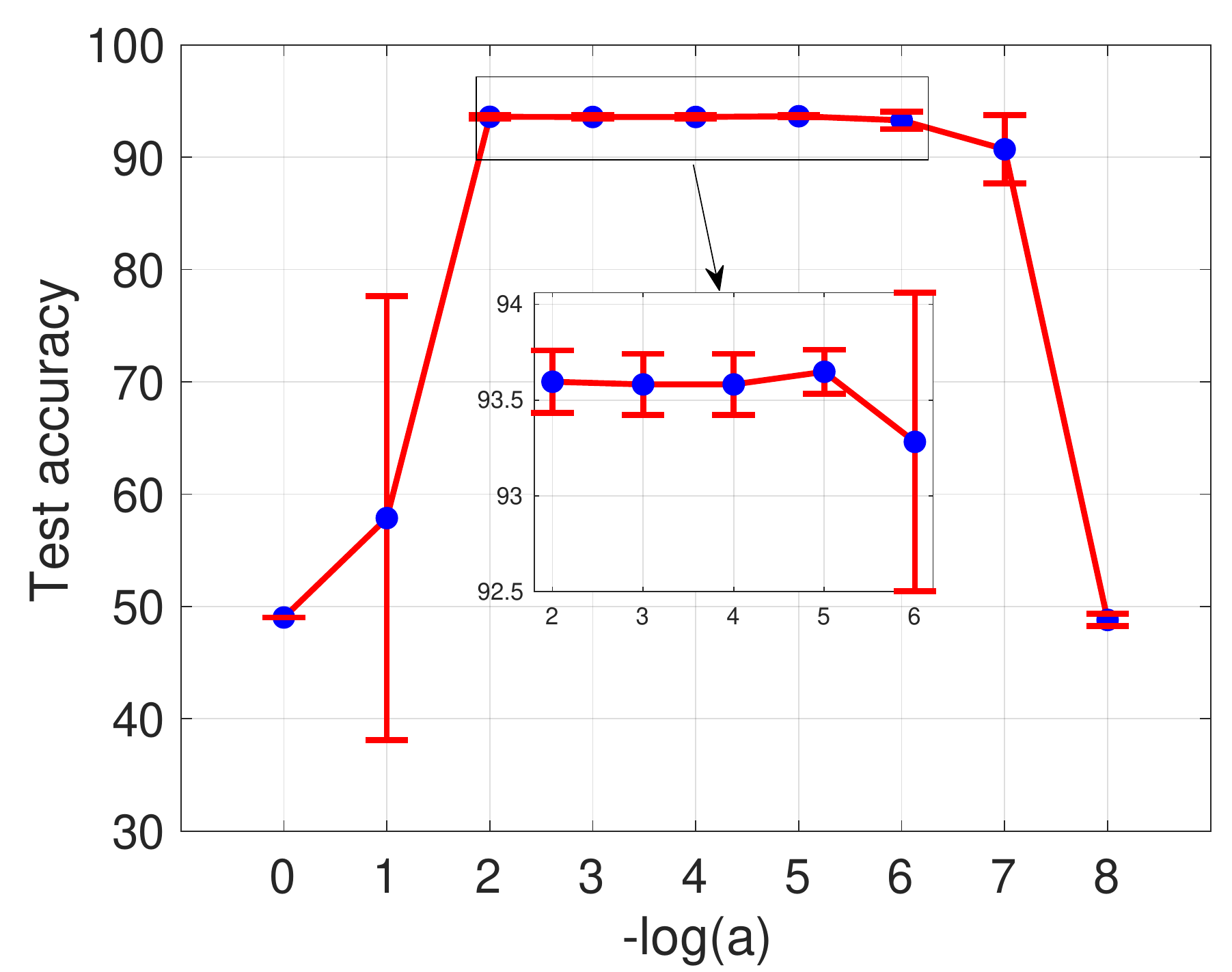}
  \end{minipage}
 }
  \subfigure[CIFAR-100]{
  \begin{minipage}{5.2cm}
       \includegraphics[width=5.2cm,height=3.8cm]{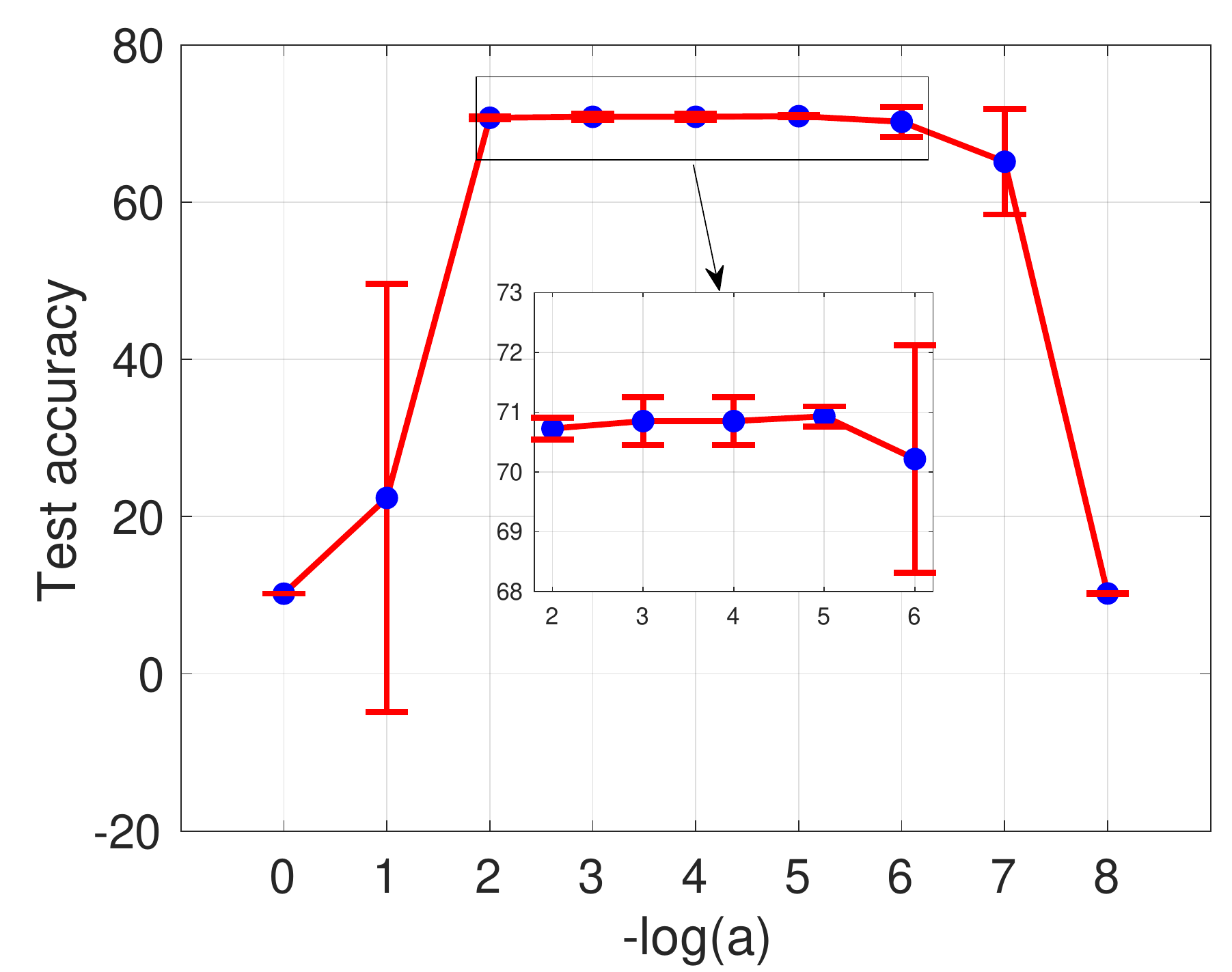}
  \end{minipage} 
  }
  \subfigure[ImageNet]{
  \begin{minipage}{5.2cm}
       \includegraphics[width=5.2cm,height=3.8cm]{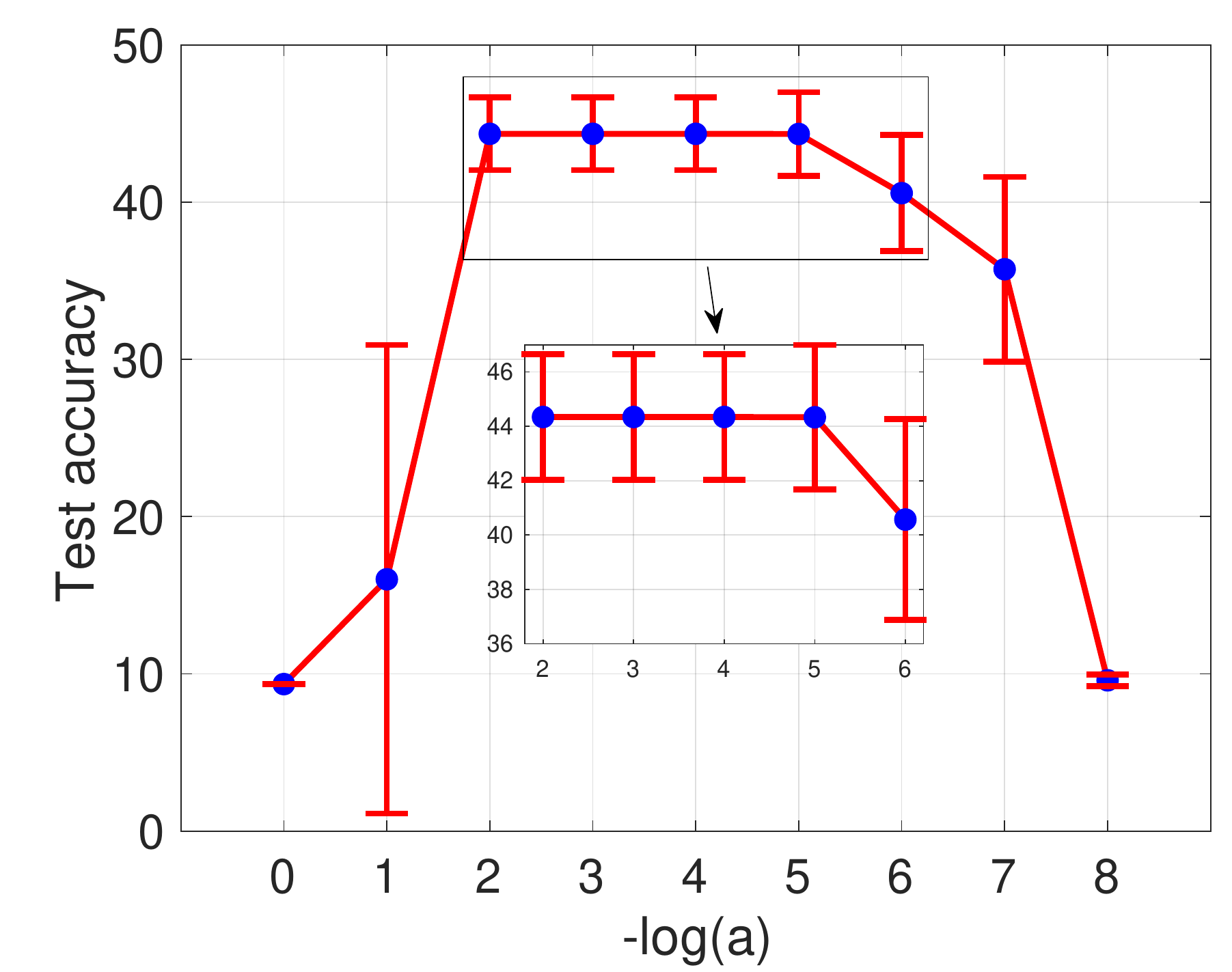}
  \end{minipage} 
  }  

 \caption{Hyperparameter analysis of FreeDARTS on the NAS-Bench-201 benchmark dataset. }
 \label{fig:H_nas201}
 \vspace{-2mm}
\end{figure*}

\begin{table*}[t]
\caption{Comparison results with state-of-the-art NAS approaches on DARTS search space.}
\setlength{\tabcolsep}{3pt}
{\centering

\begin{tabular}{lccccccccc}
\toprule
\multirow{2}*{Method}&\multicolumn{3}{c}{Test Error (\%)}&\multicolumn{1}{c}{Param}&\multicolumn{1}{c}{$+ \times$}&\multicolumn{1}{c}{Train}&{Data}&{Search}\\
~&\multicolumn{1}{c}{CIFAR-10}&\multicolumn{1}{c}{CIFAR-100}&\multicolumn{1}{c}{ImageNet}&{(M)}&{(M)}&{Free}&{Agnostic}&{Cost}\\
\midrule
%NASNet-A \cite{zoph2016neural}&2.65&17.81&26.0 / 8.4&3.3&564&RL\\
%PNAS \cite{liu2018progressive}&3.41$\pm$0.09 &17.63&25.8 / 8.1&3.2&588&SMBO\\
%AmoebaNet-A \cite{real2018regularized}&3.34$ \pm$0.06&-&25.5 / 8.0&3.2&555&EA\\
%ENAS \cite{pham2018efficient}&2.89&18.91&-&4.6&-&RL\\
%EN$^2$AS \cite{miaoijcai2020}&2.61$\pm$0.06&16.45&26.7 / 8.9&3.1&506&EA\\
RandomNAS \cite{li2019random}&2.85$\pm$0.08&17.63 &27.1&4.3&613&$\times$ &$\times$&0.4d\\
%NSAS \cite{miaozhangcvpr}&2.59$\pm$0.06&17.56 &25.5 / 8.2 &3.1&506&random\\
%\hline
%NAO-WS \cite{luo2018neural}&3.53&-&-&-&2.5&gradient\\
%DSO-NAS \cite{zhang2019search}&2.84$\pm$0.07&-&26.2&3.0&-&gradient\\
%PARSEC \cite{casale2019probabilistic}&2.86$\pm$0.06 &-&26.3&3.6&509&$\times$ &$\times$&0.6d\\
SNAS \cite{xie2018snas}&2.85$\pm$0.02&20.09 &27.3 / 9.2&2.8&474&$\times$ &$\times$ &1.5d\\
%SETN \cite{dong2019one}&2.69&17.25&25.7 / 8.0&4.6&610&$\times$ &$\times$&2d\\
BayesNAS \cite{zhou2019bayesnas}&2.81$\pm$0.04&-&26.5 / 8.9&3.4&-&$\times$&$\times$&0.2d\\
%RENAS \cite{chen2019renas}&2.88$\pm$0.02&- &24.3&3.5&RL\&EA\\
%MdeNAS \cite{zheng2019multinomial}&2.55&17.61&25.5 / 7.9&3.6&506&$\times$&$\times$&0.16d\\
%DSO-NAS \cite{zhang2019search}&2.84$\pm$0.07&-&26.2&3.0&-&1&gradient\\
GDAS \cite{GDAS}&2.93&18.38&26.0 /  8.5&3.4&545&$\times$&$\times$&0.2d\\
%XNAS \cite{nayman2019xnas}&1.81&13.6&23.9&3.7&590&0.3&gradient\\
%XNAS* \cite{nayman2019xnas}&2.57$\pm$0.09&16.34&24.7 / 7.5&3.7&600&gradient\\
PDARTS \cite{chen2019progressive}&2.50&16.63& 24.4 / 7.4&3.4&557&$\times$&$\times$&0.3d\\
PC-DARTS \cite{xu2019pcdarts}&2.57$\pm$0.07&17.11&25.1 / 7.8&3.6&586&$\times$&$\times$&0.3d\\
DrNAS \cite{chen2020drnas}&2.54$\pm$0.03&16.30&24.2 / 7.3&4.0&644&$\times$&$\times$&0.4d\\
%DARTS (1st) \cite{liu2018darts}&2.94&17.76&-&2.9&513&$\times$&$\times$&1.5d\\
DARTS \cite{liu2018darts}&2.76$\pm$0.09&17.54&26.9 / 8.7&3.4&574&$\times$&$\times$&4d\\
\midrule
TE-NAS \cite{chen2021neural}&2.63 & 17.83&26.2 / 8.3&3.8&610&\checkmark&$\times$&0.17d\\
Zero-Cost-PT \cite{xiang2021zero}&2.68$\pm$0.17& 17.53&24.4 / 7.5&4.7&817&\checkmark&\checkmark&0.018d\\
%FreeDARTS&\textbf{2.73} & \textbf{17.48}&\textbf{25.1 / 7.8}&3.6&592&\checkmark&\checkmark&79s\\
FreeDARTS&2.78$\pm$0.06 & 18.03&26.1 / 8.2&3.6&634&\checkmark&\checkmark&\textbf{8.5s}\\
FreeDARTS$\dagger$&\textbf{2.50$\pm$0.05} & 17.08&25.4 / 7.8&3.6&577&\checkmark&\checkmark&\textbf{8.5s}\\
FreeDARTS$\ddagger$&2.67$\pm$0.04& \textbf{16.35}&\textbf{24.4 / 7.3}&4.1&655&\checkmark&\checkmark&\textbf{8.5s}\\
\bottomrule
%E$^2$NAS + E&\textbf{ 2.49} &-&-&3.3&-&0.4&gradient\\
%E$^2$NAS + F&\textbf{2.48}&\textbf{16.45}&-&6.3&-&0.4&gradient\\
%E$^2$NAS +EF&\textbf{ 2.40} &\textbf{ 15.71}&-&6.3&-&0.4&gradient\\
%17.68(3.1)17.70
\end{tabular}
\par
}
``Param" is the model size on CIFAR-10, while ``$+\times$" is calculated on ImageNet dataset. ``d" is the GPU days and ``s" is the GPU seconds. We consider Synflow in this space. The best single run of FreeDARTS$\dagger$ in CIFAR-10 is 2.45\%.%$\dagger$" indicates the backbone in the architecture search on CIFAR-10 contains the same number of cells as the architecture evaluation. $\ddagger$ indicates directly search on ImageNet with the same number of cells as the architecture evaluation. %We do not reproduce those methods with ``-" since those approaches are with different search spaces or do not report their best structures.based metric, which is label agnostic,
\label{tab:results_CIFAR}
\end{table*}

\begin{table}[t]
%\begin{table}
\footnotesize
\caption{Statistic search results on MobileNet space.}
\centering
\begin{tabular}{lcccc}
\toprule
\multirow{2}*{Method}&{Paras}&{Top-1}&{Search Cost}\\
~&(M)&(\%)&{(GPU days)}\\
\midrule
%Random \cite{GDAS}&xx&3.4\\
%Random&4.5&24.7&-\\
NASNet-A\cite{zoph2018learning}&5.3&74.0&1800\\
MnasNet-A2\cite{tan2019mnasnet}&4.8&75.6&4000\\
MobileNetV2\cite{sandler2018mobilenetv2}&3.4&72.0&-\\
MobileNetV3\cite{howard2019searching}&5.4&75.2&3000\\
Efficienet-B0\cite{tan2019efficientnet}&5.3&76.3&3000\\
Single-Path NAS\cite{stamoulis2019single}&4.3&75.0&1.25\\
BigNAS-S\cite{yu2020bignas}&4.5&76.5&48\\
%MixNet-M\cite{tan2019mixconv}&5.0&77.0&3000\\
Proxyless-R\cite{cai2018proxylessnas}&4.1&74.6&8.3\\
Proxyless-GPU\cite{cai2018proxylessnas}&7.1&75.1&8.3\\
FairNAS-A\cite{chu2019fairnas}&4.6&75.3&12\\
FairNAS-B\cite{chu2019fairnas}&4.5&75.1&12\\
FairNAS-C\cite{chu2019fairnas}&4.4&74.7&12\\
%RLNAS&24.4$\pm$32.8&24.1&\\
Zero-Cost NAS\cite{abdelfattah2021zero}&8.4&76.1&-\\
Zero-Cost-PT\cite{xiang2021zero}&8.1&76.2&0.041\\
\midrule
FreeDARTS&6.2&76.4&12.7s$^*$\\
\bottomrule
\end{tabular}
\flushleft{$^*$FreeDARTS finish searching in seconds while others in days.}
\label{tab:results_mobilenet}
\end{table}

\begin{figure*}[t]
\centering
 \subfigure[Normal cells by FreeDARTS with seed4,  FreeDARTS$\dagger$ with seed1, and FreeDARTS$\ddagger$ with seed3.]{
  \begin{minipage}{4.5cm}
      \includegraphics[width=4.5cm,height=3cm]{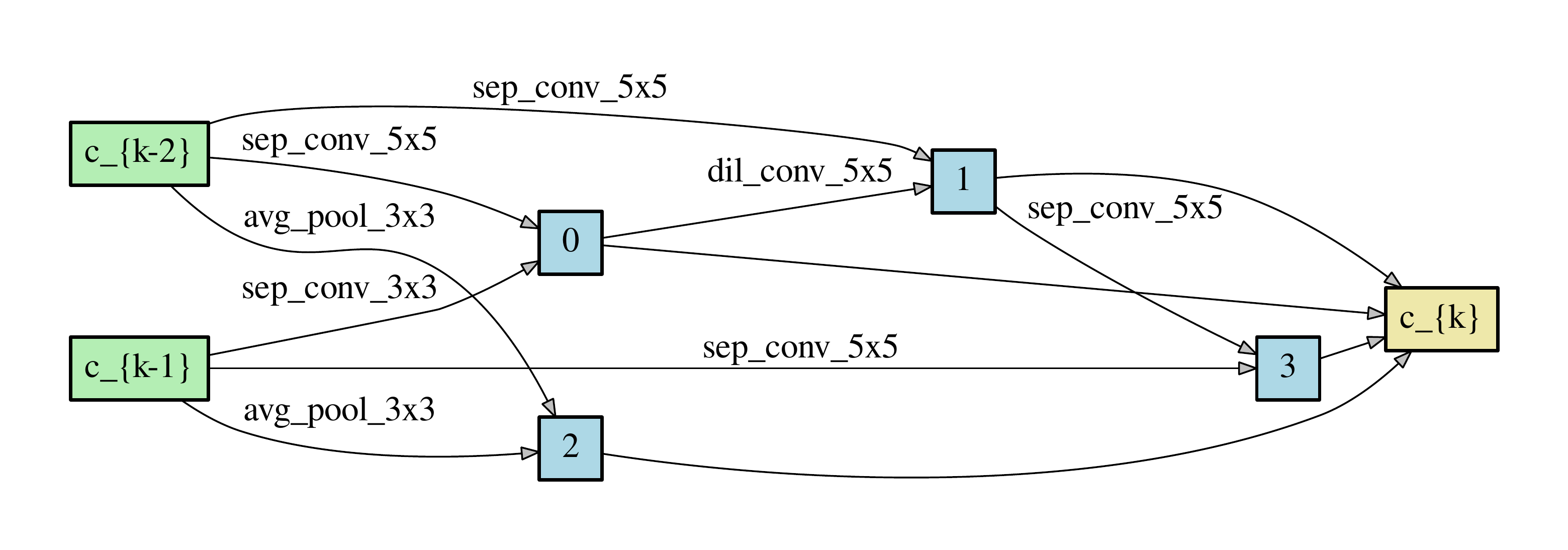}
  \end{minipage}
  \begin{minipage}{4.5cm}
      \includegraphics[width=4.5cm,height=3cm]{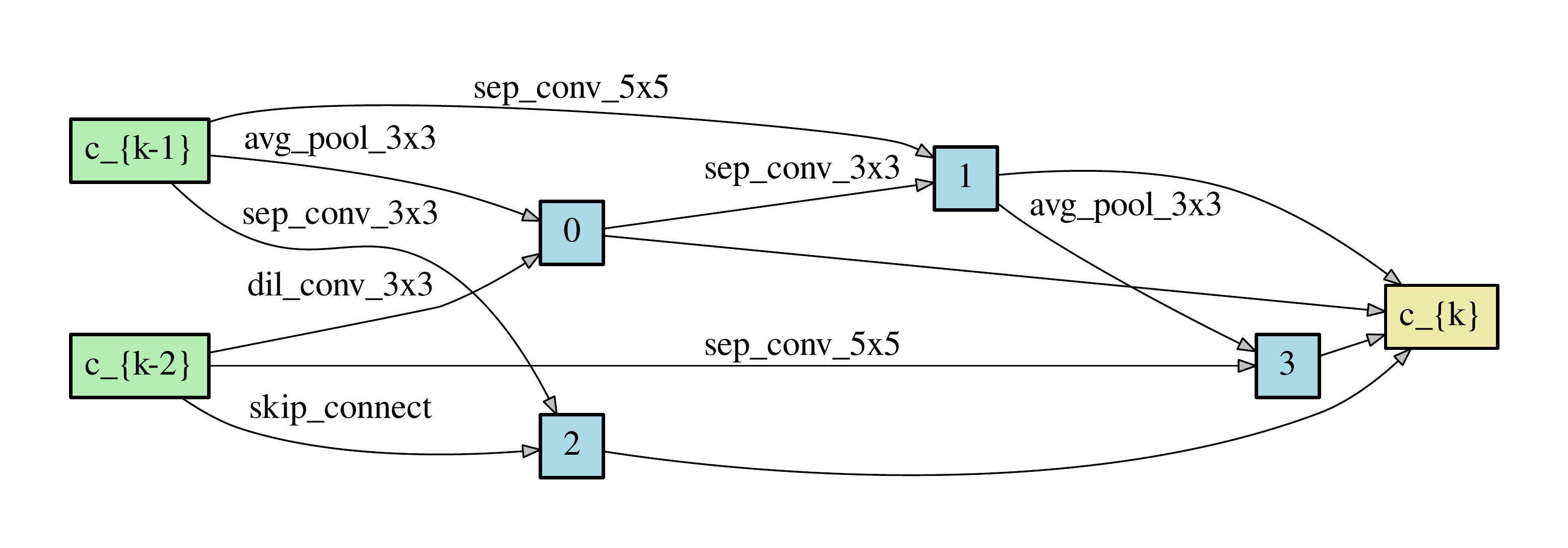}
  \end{minipage}
  \begin{minipage}{4.5cm}
      \includegraphics[width=4.5cm,height=3cm]{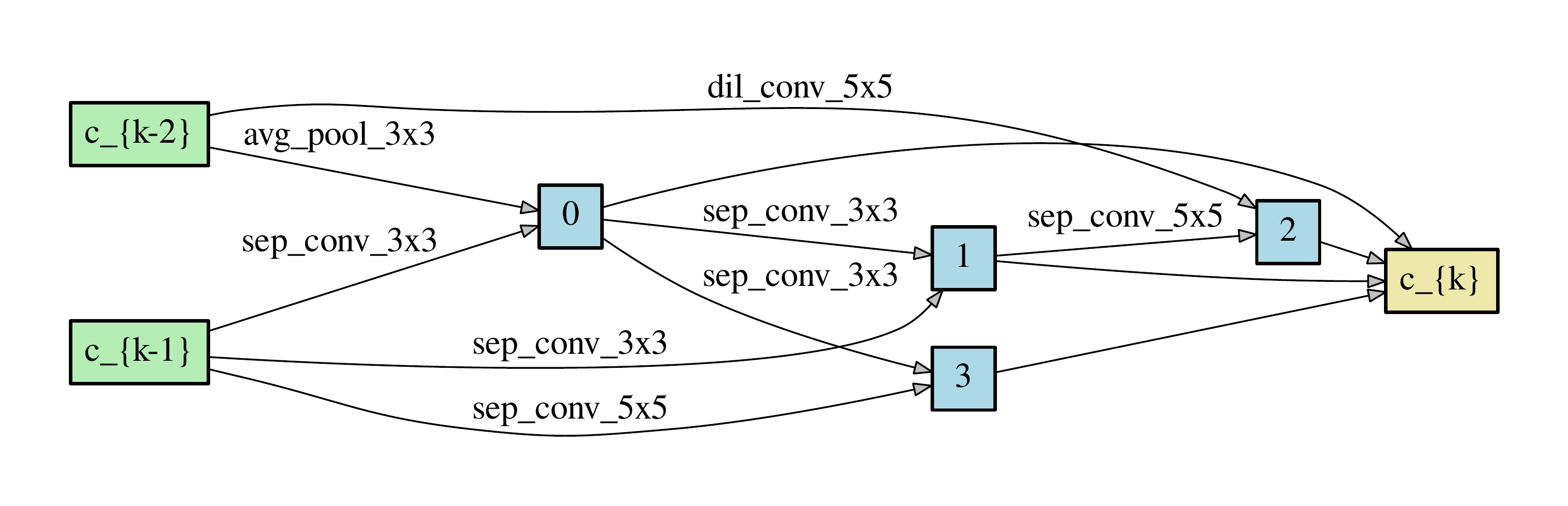}
  \end{minipage}  
  }
  
   \subfigure[Reduction cells by FreeDARTS with seed4,  FreeDARTS$\dagger$ with seed1, and FreeDARTS$\ddagger$ with seed3.]{
    \begin{minipage}{4.5cm}
       \includegraphics[width=4.5cm,height=3cm]{figures/reduction_SYNFLOW_oneshot_seed2}
  \end{minipage} 
  \begin{minipage}{4.5cm}
       \includegraphics[width=4.5cm,height=3cm]{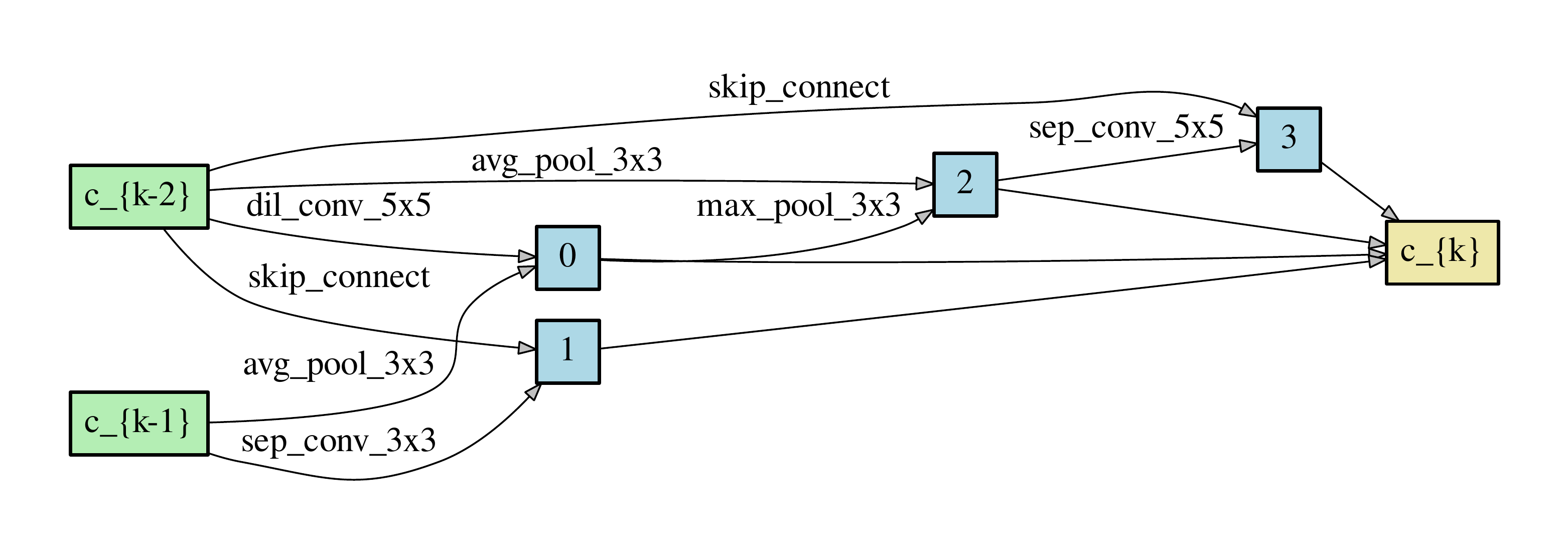}
  \end{minipage} 
  \begin{minipage}{4.5cm}
       \includegraphics[width=4.5cm,height=3cm]{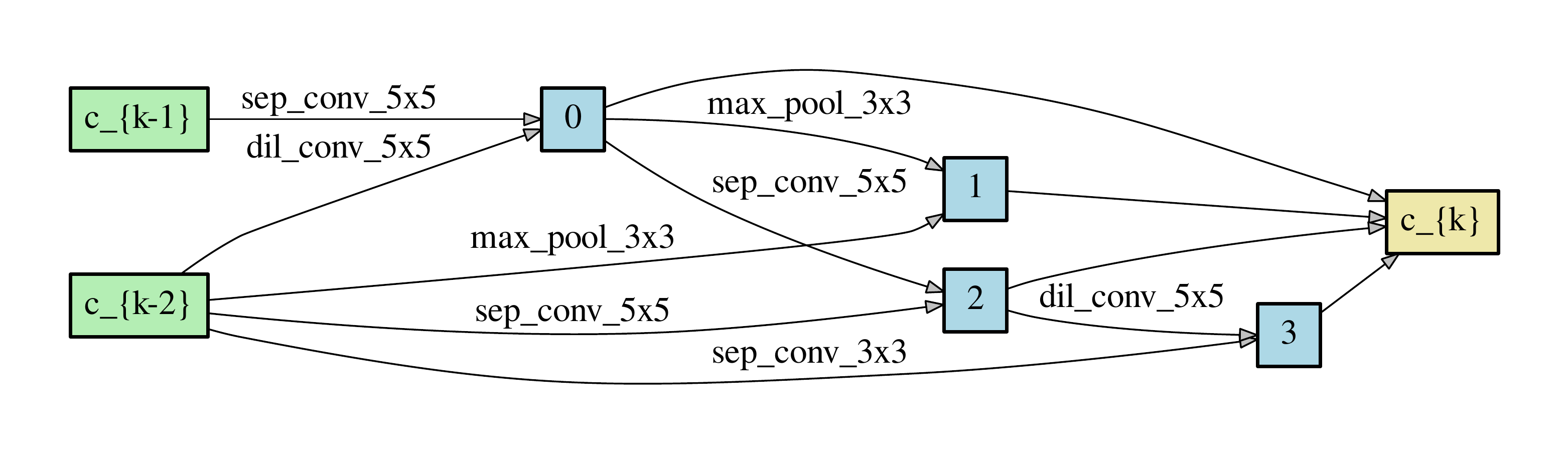}
  \end{minipage}   
  }
 \caption{The best cells discovered by FreeDARTS,  FreeDARTS$\dagger$, and  FreeDARTS$\ddagger$(from top to bottom) on the DARTS search space. }
 \label{fig:archs_darts}
\end{figure*}

\subsection{Experiments on DARTS Search Space}
\label{sec4.2}

%All competitors in Table \ref{tab:results_CIFAR} are following the standard setting \cite{liu2018darts}, searching on CIFAR-10 and transferring to CIFAR-100 and ImageNet. 
Due to the high memory consumption, the vanilla DARTS and its variants  \cite{liu2018darts,xu2019pcdarts} only searches on a shallow supernet with the CIFAR-10 dataset, and the best-found cell is repeatedly stacked to form a deeper architecture for evaluation on CIFAR-10, CIFAR-100, and ImageNet datasets. In this way, there are \textit{depth gap} and \textit{dataset gap} between search and evaluation. Thanks to the memory and computation efficiency of our FreeDARTS, we are able to directly search with a larger backbone supernet on a larger ImageNet dataset, showing the flexibility of FreeDARTS. In this experiment, our FreeDARTS is the same as DARTS \cite{liu2018darts} during the search, while FreeDARTS$\dagger$ stacks 20 cells to form a larger backbone to eliminate \textit{depth gap}, and FreeDARTS$\ddagger$ directly searches on the ImageNet to eliminate \textit{dataset gap}. The comparison results on the DARTS space are presented in Table \ref{tab:results_CIFAR}. As shown, the best architecture searched by our FreeDARTS achieves a 2.78$\pm$0.06 \% test error on CIFAR-10, which is on par with the DARTS while only costing \textbf{8.5} seconds. More interesting, after eliminating the \textit{depth gap} by our FreeDARTS$\dagger$, we achieve much better results with a 2.45 \% test error on CIFAR-10, which outperforms baselines by large margins, again demonstrating the effectiveness of the proposed method. When directly searching on the ImageNet to further eliminate \textit{dataset gap}, FreeDARTS$\ddagger$ obtains competitive 16.35\% test error on the CIFAR-100, and 24.8 / 7.5 \% top1 / top5 test error on the ImageNet. %Our FreeDARTS is also extremely efficient in DARTS space which completes the architecture search with only \textbf{1.5s}, while all existing NAS methods cost GPU days or hours. 

%Thanks to the memory efficiency by our FreeDARTS, we can eliminate the \textit{depth gap}, and our FreeDARTS$\dagger$ achieves much better results when we conduct architecture search and evaluation in the same space.

%our FreeDARTS still delivers competitive results with 16.29\% test error on the CIFAR-100 dataset, and 24.8 / 7.5 \% top1 / top5 test error on the ImageNet dataset, both outperforming baselines by large margins. Similarly, 

\begin{figure*}
\centering
  \begin{minipage}{14cm}
      \includegraphics[width=14cm,height=2cm]{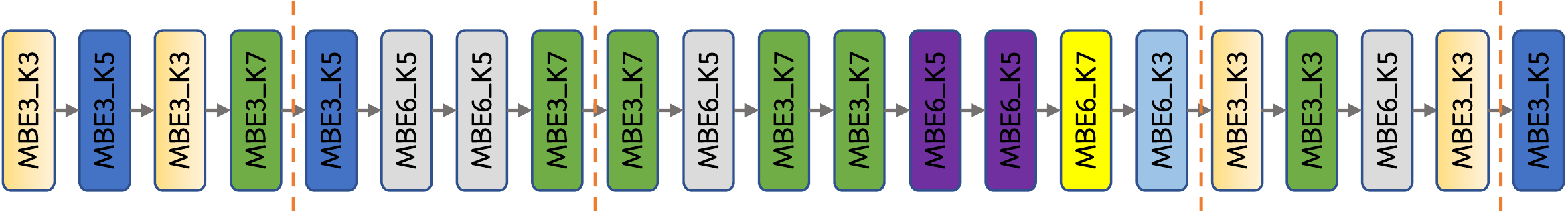}
  \end{minipage}
 \caption{The searched architecture by FreeDARTS searched with \textit{seed0} on MobileNet search space. }
 \label{fig:search_mobile}
\end{figure*}

\subsection{Experiments on MobileNet Search Space}
\label{sec4.3}

Apart from DARTS and several NAS benchmark search spaces, we further conduct the experiments on the MobileNet search space \cite{howard2019searching}, which contains 21 blocks and each block is with 7 candidate options: $3\times 2=6$ inverted bottleneck residual (MBConv) blocks, with 3 different kernel sizes $\left \{ 3, 5, 7 \right \}$ and 2 expansion ratio $\left \{ 3, 6 \right \}$, or the \textit{skip-connection}. Similar to FreeDARTS$\ddagger$, our search is conducted on the ImageNet dataset, and the experimental setting for evaluation phase as ProxylessNAS \cite{howard2019searching} with only changing batch size to 512 on 2 NVIDIA V100 GPUs and initial learning rate to 0.25. Table \ref{tab:results_mobilenet} compares FreeDARTS with baselines, whose results can be also found in \cite{chu2019fairnas}. We can see that, FreeDARTS can marginally outperform other baselines with only several GPU seconds. These results again verify the versatility of our FreeDARTS which can easily obtain competitive results in MobileNet space. More interesting, zero-cost NAS and zero-cost-PT prefer larger models, since they both select the most parameter-intensive operation for each block. Differently, our FreeDARTS can find architectures with moderate model size. %However, it is unable to select \textit{skip-connection} operation in this search space for our FreeDARTS, as which means we select a \textit{none} operation for the corresponding block while whose score is always zero. In this way, FreeDARTS usually has more parameters compared with baselines, and how to incorporating \textit{none} operation in our FreeDARTS to search more compact model is also among directions for future work.

\subsection{Discussion on Avoiding Parameter-intensive bias}
\label{sect_avoid}
As discussed in Sec.\ref{sec3.3}, our FreeDARTS can avoid parameter-intensive bias, which is a common issue in existing training-free NAS methods. For better visualization, zero-cost-PT \cite{xiang2021zero} and zero-cost NAS \cite{abdelfattah2021zero}, which are representative ``summing-up" based training-free NAS methods, in Figure \ref{fig:zero-pt} and Figure \ref{fig:zerocost_archs}. We can clearly see that these architectures are with very large model sizes. For example, the architecture search by zero-cost-PT on MobileNet space selects 18 ``MBE6-K7" out of 21 blocks, where ``MBE6-K7" contains the most parameters among all candidate operations. As to the DARTS space, we can see that zero-cost-PT almost selects ``sep\_conv\_5$\times$5" for every edge, while which are also those most parameter-intensive operations. Figure \ref{fig:zerocost_archs} visualizes the architectures searched by zero-cost NAS on NAS-Bench-201 and DARTS space. Similar to zero-cost-PT, we can see that the architecture selects the most parameter-intensive operation ``nor\_conv\_3$\times$3" for every edge on NAS-Bench-201, and the most parameter-intensive operation ``sep\_conv\_5$\times$5" on DARTS space. These results verified the parameter-intensive bias in existing zero-cost NAS methods. In contrast, those architectures searched by our FreeDARTS in Fig.\ref{fig:archs_darts} and Fig.\ref{fig:search_mobile} are more diverse, which contain different types of operations rather than only selecting those architectures with the most parameters. This phenomenon empirically confirms that our FreeDARTS has the ability to avoid parameter-intensive bias.

\begin{figure*}[t]
\centering
 \subfigure[Normal and reduction cell using CIFAR10.]{
\includegraphics[width=8cm,height=2.5cm]{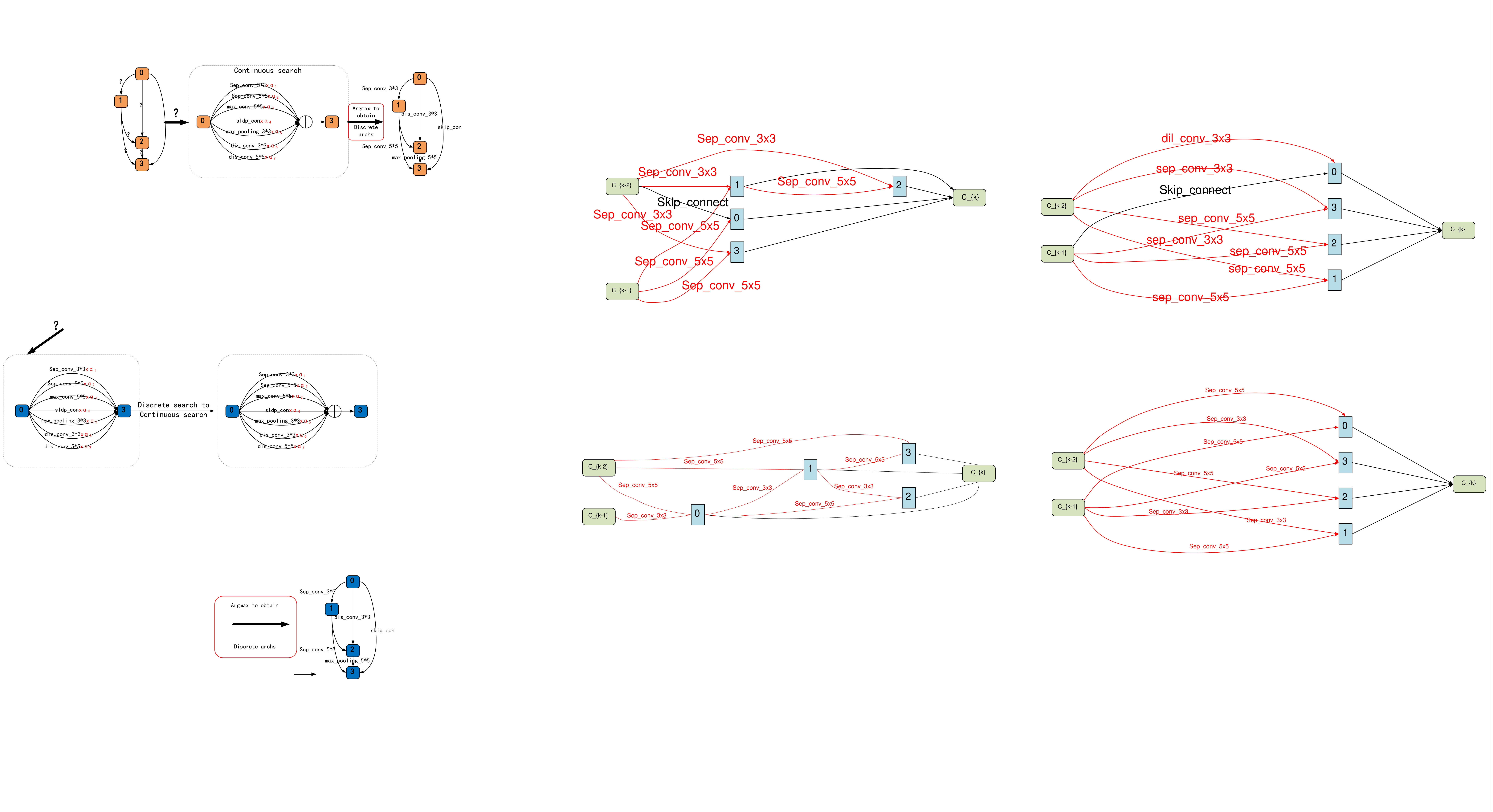}
}
 \subfigure[Normal and reduction cell using ImageNet.]{
\includegraphics[width=8cm,height=2.5cm]{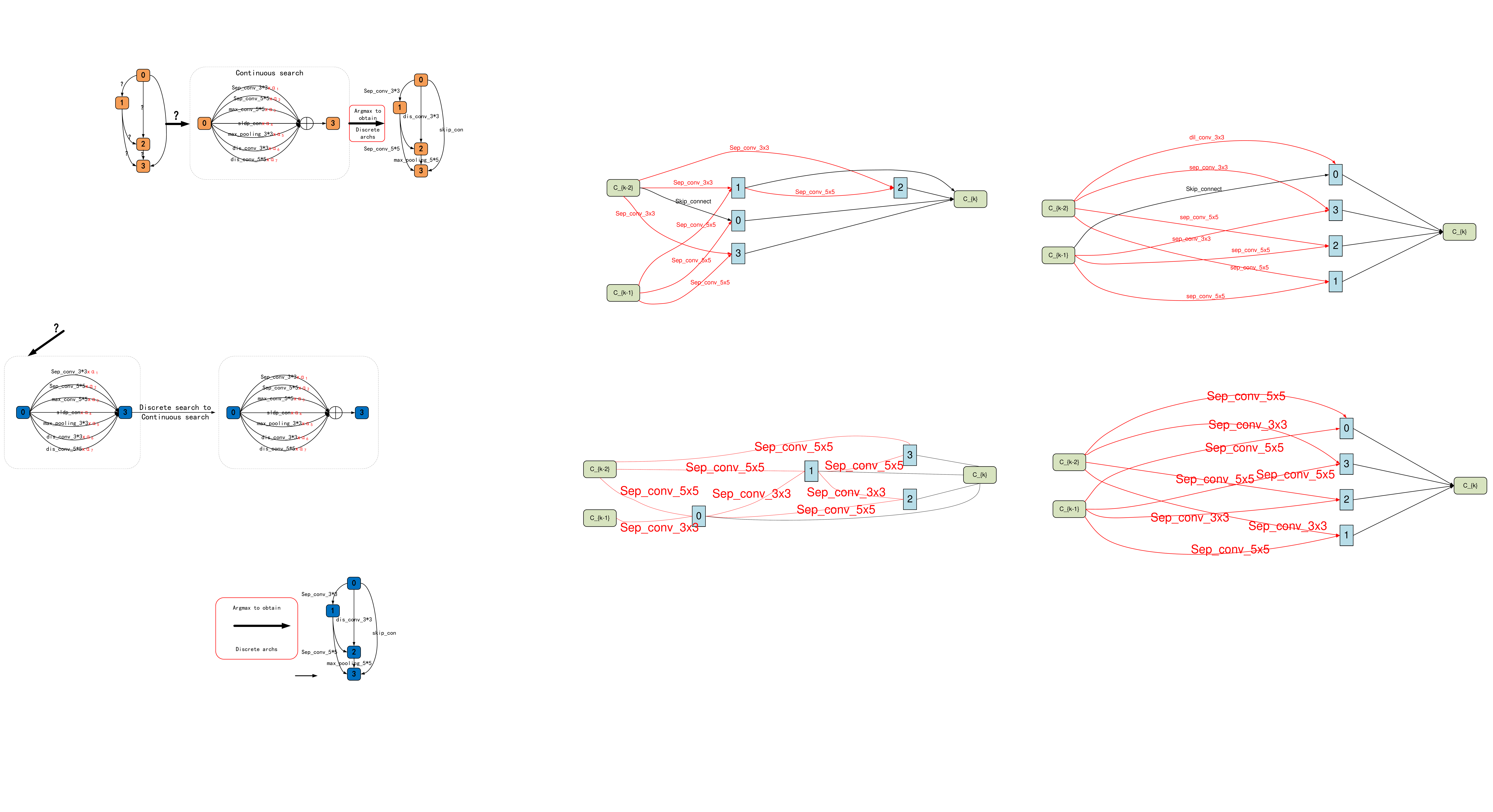}
}

\subfigure[Discovered architectures in MobileNet-like Search Space.]{
\includegraphics[height=2.2cm]{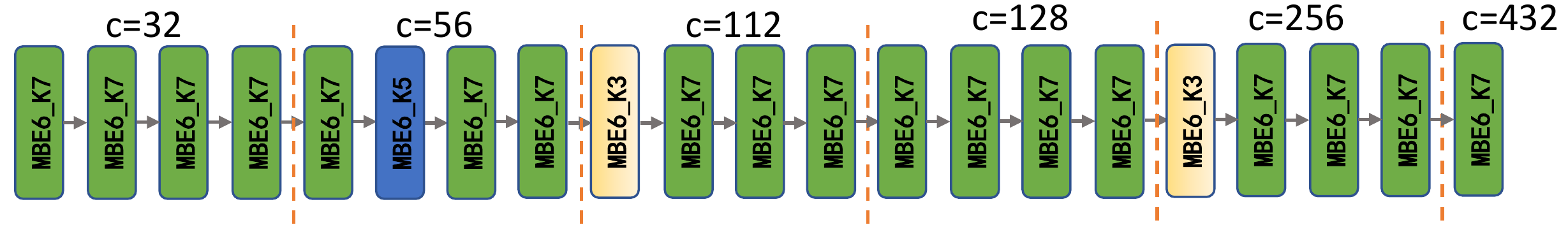}
}
 \caption{The cells discovered by Zero-Cost-PT \cite{xiang2021zero} with different datasets on the DARTS search space.}
 \label{fig:zero-pt}
\end{figure*}

\begin{figure}[t]
\centering
 \subfigure[NAS-Bench-201]{
\includegraphics[width=4cm,height=2cm]{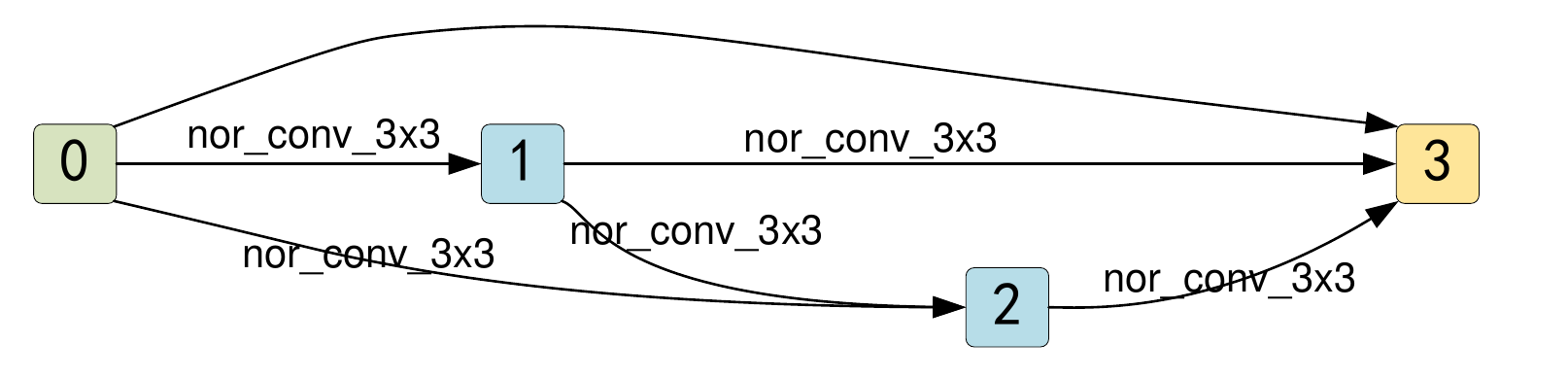}
}
 \subfigure[{Normal Cell on DARTS}]{
\includegraphics[width=4cm,height=2cm]{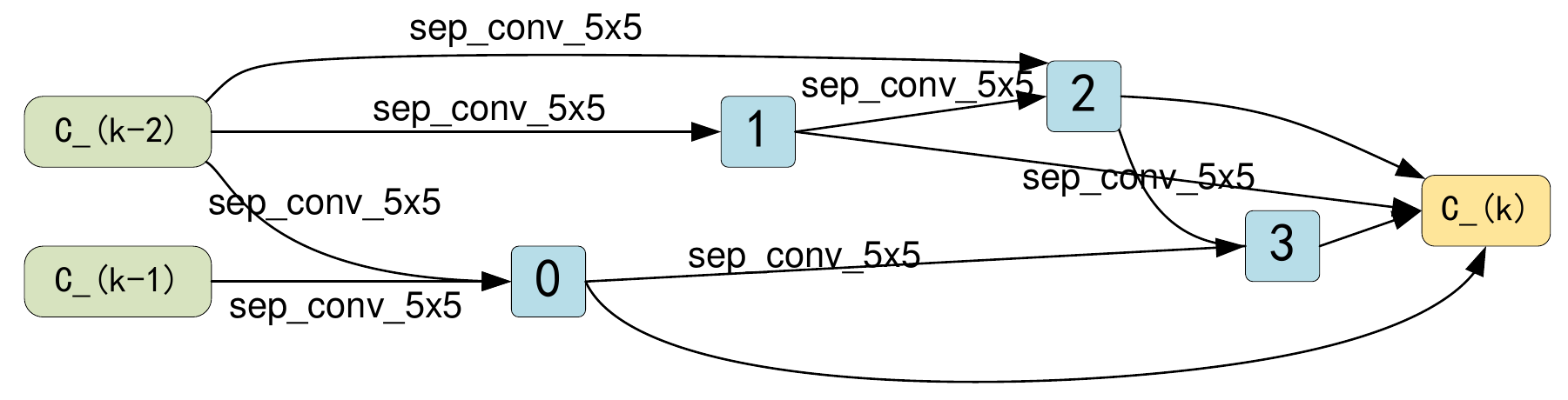}
}
 \caption{The architectures searched by zero-cost NAS on NAS-Bench-201 and DARTS space \cite{ning2021evaluating}.}
 \label{fig:zerocost_archs}
\end{figure}

\subsection{Discussion on Advantages and Limitations}
\label{sect_discuss}
This section discusses the advantages and the limitations of the proposed FreeDARTS framework by summarizing the above experimental findings. An attractive advantage of our FreeDARTS is the efficiency and effectiveness, which completes the entire architecture search in seconds with obtaining competitive results. More important, our FreeDARTS can be designed in a label-agnostic or data-agnostic way, which can still achieve competitive performance. In addition, thanks to the memory efficiency by our FreeDARTS, we can eliminate the \textit{depth gap} and \textit{dataset gap} that we can conduct architecture search and evaluation in the same space and same dataset. More important, our FreeDARTS is hyperparameter-efficient with only one hyperparameter to be tuned. 

Apart from our FreeDARTS, several recent works \cite{mellor2020neural,chen2021neural} also share a similar motivation as us to identify promising architectures at initialization. However, comparing with ``summing-up'' paradigm which is consider for most train-free NAS, FreeDARTS directly calculate the connection sensitivity based on the operation sensitivity. In addition, we have theoretically and empirically verified that FreeDARTS can avoid the parameter-intensive bias which is the main issue for existing zero-cost NAS methods \cite{2022blog,ning2021evaluating,yang2023revisiting}. Without any elaborately considerations, FreeDARTS could achieve more reliable results than differentiable NAS baselines with much less memory and computational consumption. This also implies that explicitly designing a more specific and explainable score function for NAS is a promising direction.

In theoretically analyzing \textit{zero-cost operation sensitivity} for DARTS in Sec.\ref{sec3.2}, we only consider a simple situation that the supernet $f(\alpha) =  h^{(L)} $ with $L$ sequential nodes and there are $E$ candidate edges between two consecutive nodes, which is different from most practical search spaces. In addition, for easy analysis, we only consider those candidate operations containing trainable parameters in the theoretical analysis. In our future work, we aim to consider a more practical and complicated situation in theoretically analyzing our FreeDARTS.

\section{Conclusion and Future Work}

We argue the common practice in neural architecture search by raising the question: \textit{can we properly measure the operation importance in DARTS through a training-free way, with avoiding the parameter-intensive bias?} We approach this question by reformulating differentiable architecture search from an operation pruning-at-initialization perspective, and introducing the \textit{zero-cost operation sensitivity} to score the importance of operations in the supernet at initialization, which is able to alleviate the parameter-intensive bias. By devising an iterative and data-agnostic manner in utilizing ZEROS for NAS, a novel framework called \textit{training free neural architecture search} (\textbf{FreeDARTS}) is accordingly devised, which performs architecture search extremely efficiently and flexibly, achieving competitive or even better results against SOTA. Extensive results on NAS benchmark datasets, the common DARTS space, and the MobileNet space have verified the effectiveness and efficiency of the proposed framework. We bridge the gap between the pruning-at-initialization (\textit{PaI}) and the differentiable architecture search. In addition we hypothesize the information flow is a more appropriate indicator to find a good architecture than the validation performance in differentiable architecture search. Our findings are capable of encouraging the community to further explore NAS through a lens of operation sensitivity.

{\small
\bibliographystyle{IEEEtranN}
\bibliography{FREEDARTS}

% Generated by IEEEtranN.bst, version: 1.14 (2015/08/26)
\begin{thebibliography}{63}
\providecommand{\natexlab}[1]{#1}
\providecommand{\url}[1]{#1}
\csname url@samestyle\endcsname
\providecommand{\newblock}{\relax}
\providecommand{\bibinfo}[2]{#2}
\providecommand{\BIBentrySTDinterwordspacing}{\spaceskip=0pt\relax}
\providecommand{\BIBentryALTinterwordstretchfactor}{4}
\providecommand{\BIBentryALTinterwordspacing}{\spaceskip=\fontdimen2\font plus
\BIBentryALTinterwordstretchfactor\fontdimen3\font minus
  \fontdimen4\font\relax}
\providecommand{\BIBforeignlanguage}[2]{{%
\expandafter\ifx\csname l@#1\endcsname\relax
\typeout{** WARNING: IEEEtranN.bst: No hyphenation pattern has been}%
\typeout{** loaded for the language `#1'. Using the pattern for}%
\typeout{** the default language instead.}%
\else
\language=\csname l@#1\endcsname
\fi
#2}}
\providecommand{\BIBdecl}{\relax}
\BIBdecl

\bibitem[Ren et~al.(2020)Ren, Xiao, Chang, Huang, Li, Chen, and
  Wang]{ren2020comprehensive}
P.~Ren, Y.~Xiao, X.~Chang, P.-Y. Huang, Z.~Li, X.~Chen, and X.~Wang, ``A
  comprehensive survey of neural architecture search: Challenges and
  solutions,'' \emph{arXiv preprint arXiv:2006.02903}, 2020.

\bibitem[Real et~al.(2019)Real, Aggarwal, Huang, and Le]{real2018regularized}
E.~Real, A.~Aggarwal, Y.~Huang, and Q.~V. Le, ``Regularized evolution for image
  classifier architecture search,'' \emph{AAAI}, 2019.

\bibitem[Guo et~al.(2019)Guo, Zhong, Wu, Lin, and Yan]{guo2018irlas}
M.~Guo, Z.~Zhong, W.~Wu, D.~Lin, and J.~Yan, ``Irlas: Inverse reinforcement
  learning for architecture search,'' in \emph{CVPR}, 2019.

\bibitem[Bender et~al.(2018)Bender, Kindermans, Zoph, Vasudevan, and
  Le]{bender2018understanding}
G.~Bender, P.-J. Kindermans, B.~Zoph, V.~Vasudevan, and Q.~Le, ``Understanding
  and simplifying one-shot architecture search,'' in \emph{International
  Conference on Machine Learning}, 2018, pp. 549--558.

\bibitem[Zhang et~al.(2019)Zhang, Ren, and Urtasun]{zhang2018graph}
C.~Zhang, M.~Ren, and R.~Urtasun, ``Graph hypernetworks for neural architecture
  search,'' in \emph{7th International Conference on Learning Representations},
  2019.

\bibitem[White et~al.(2021{\natexlab{a}})White, Zela, Ru, Liu, and
  Hutter]{white2021powerful}
C.~White, A.~Zela, R.~Ru, Y.~Liu, and F.~Hutter, ``How powerful are performance
  predictors in neural architecture search?'' in \emph{Advances in Neural
  Information Processing Systems}, vol.~34, 2021, pp. 28\,454--28\,469.

\bibitem[Mellor et~al.(2021)Mellor, Turner, Storkey, and
  Crowley]{mellor2020neural}
J.~Mellor, J.~Turner, A.~Storkey, and E.~J. Crowley, ``Neural architecture
  search without training,'' in \emph{ICML}, 2021.

\bibitem[Shu et~al.(2021)Shu, Cai, Dai, Ooi, and Low]{shu2021nasi}
Y.~Shu, S.~Cai, Z.~Dai, B.~C. Ooi, and B.~K.~H. Low, ``Nasi: Label-and
  data-agnostic neural architecture search at initialization,'' in
  \emph{International Conference on Learning Representations}, 2021.

\bibitem[Chen et~al.(2021)Chen, Gong, and Wang]{chen2021neural}
W.~Chen, X.~Gong, and Z.~Wang, ``Neural architecture search on imagenet in four
  gpu hours: A theoretically inspired perspective,'' in \emph{ICLR}, 2021.

\bibitem[Abdelfattah et~al.(2021)Abdelfattah, Mehrotra, Dudziak, and
  Lane]{abdelfattah2021zero}
M.~S. Abdelfattah, A.~Mehrotra, {\L}.~Dudziak, and N.~D. Lane, ``Zero-cost
  proxies for lightweight nas,'' in \emph{ICLR}, 2021.

\bibitem[Ning et~al.(2021)Ning, Tang, Li, Zhou, Liang, Yang, and
  Wang]{ning2021evaluating}
X.~Ning, C.~Tang, W.~Li, Z.~Zhou, S.~Liang, H.~Yang, and Y.~Wang, ``Evaluating
  efficient performance estimators of neural architectures,'' in \emph{Advances
  in Neural Information Processing Systems}, vol.~34, 2021.

\bibitem[White et~al.(2021{\natexlab{b}})White, Khodak, Tu, Shah, Bubeck, and
  Dey]{2022blog}
C.~White, M.~Khodak, R.~Tu, S.~Shah, S.~Bubeck, and D.~Dey, ``Neural
  architecture search on imagenet in four gpu hours: A theoretically inspired
  perspective,'' in \emph{ICLR}, 2021.

\bibitem[Yang et~al.(2023)Yang, Yang, Jin, and Chen]{yang2023revisiting}
T.~Yang, L.~Yang, X.~Jin, and C.~Chen, ``Revisiting training-free nas metrics:
  An efficient training-based method,'' in \emph{Proceedings of the IEEE/CVF
  Winter Conference on Applications of Computer Vision}, 2023, pp. 4751--4760.

\bibitem[Yang and Wang(2022)]{yang2022neural}
H.~Yang and Z.~Wang, ``On the neural tangent kernel analysis of randomly pruned
  wide neural networks,'' \emph{arXiv preprint arXiv:2203.14328}, 2022.

\bibitem[Shu et~al.(2022)Shu, Dai, Wu, and Low]{shu2022unifying}
Y.~Shu, Z.~Dai, Z.~Wu, and B.~K.~H. Low, ``Unifying and boosting gradient-based
  training-free neural architecture search,'' in \emph{Advances in Neural
  Information Processing Systems}, 2022.

\bibitem[Liu et~al.(2019)Liu, Simonyan, and Yang]{liu2018darts}
H.~Liu, K.~Simonyan, and Y.~Yang, ``Darts: Differentiable architecture
  search,'' \emph{ICLR}, 2019.

\bibitem[Wang et~al.(2021)Wang, Cheng, Chen, Tang, and Hsieh]{Rethinking2021}
R.~Wang, M.~Cheng, X.~Chen, X.~Tang, and C.-J. Hsieh, ``Rethinking architecture
  selection in differentiable nas,'' in \emph{ICLR}, 2021.

\bibitem[Xiang et~al.(2021)Xiang, Dudziak, Abdelfattah, Chau, Lane, and
  Wen]{xiang2021zero}
L.~Xiang, {\L}.~Dudziak, M.~S. Abdelfattah, T.~Chau, N.~D. Lane, and H.~Wen,
  ``Zero-cost proxies meet differentiable architecture search,'' \emph{arXiv
  preprint arXiv:2106.06799}, 2021.

\bibitem[Dong and Yang(2020)]{BENCH102}
X.~Dong and Y.~Yang, ``Nas-bench-201: Extending the scope of reproducible
  neural architecture search,'' \emph{ICLR}, 2020.

\bibitem[Zela et~al.(2020{\natexlab{a}})Zela, Siems, and
  Hutter]{zela2020nasbench1shot1}
A.~Zela, J.~Siems, and F.~Hutter, ``Nas-bench-1shot1: Benchmarking and
  dissecting one-shot neural architecture search,'' in \emph{ICLR}, 2020.

\bibitem[Howard et~al.(2019)Howard, Sandler, Chu, Chen, Chen, Tan, Wang, Zhu,
  Pang, Vasudevan, et~al.]{howard2019searching}
A.~Howard, M.~Sandler, G.~Chu, L.-C. Chen, B.~Chen, M.~Tan, W.~Wang, Y.~Zhu,
  R.~Pang, V.~Vasudevan \emph{et~al.}, ``Searching for mobilenetv3,'' in
  \emph{Proceedings of the IEEE/CVF International Conference on Computer
  Vision}, 2019, pp. 1314--1324.

\bibitem[Lee et~al.(2019{\natexlab{a}})Lee, Ajanthan, and Torr]{lee2018snip}
N.~Lee, T.~Ajanthan, and P.~Torr, ``Snip: Single-shot network pruning based on
  connection sensitivity,'' in \emph{International Conference on Learning
  Representations}, 2019.

\bibitem[Wang et~al.(2020)Wang, Zhang, and Grosse]{wang2019picking}
C.~Wang, G.~Zhang, and R.~Grosse, ``Picking winning tickets before training by
  preserving gradient flow,'' in \emph{International Conference on Learning
  Representations}, 2020.

\bibitem[Tanaka et~al.(2020)Tanaka, Kunin, Yamins, and
  Ganguli]{tanaka2020pruning}
H.~Tanaka, D.~Kunin, D.~L. Yamins, and S.~Ganguli, ``Pruning neural networks
  without any data by iteratively conserving synaptic flow,'' \emph{Advances in
  Neural Information Processing Systems}, vol.~33, 2020.

\bibitem[Lin et~al.(2021{\natexlab{a}})Lin, Wang, Sun, Chen, Sun, Qian, Li, and
  Jin]{lin2021zen}
M.~Lin, P.~Wang, Z.~Sun, H.~Chen, X.~Sun, Q.~Qian, H.~Li, and R.~Jin,
  ``Zen-nas: A zero-shot nas for high-performance image recognition,'' in
  \emph{Proceedings of the IEEE/CVF International Conference on Computer
  Vision}, 2021, pp. 347--356.

\bibitem[Xu et~al.(2021)Xu, Zhao, Lin, Gao, Xu, and Hongxia]{knas}
J.~Xu, L.~Zhao, J.~Lin, R.~Gao, S.~Xu, and Y.~Hongxia, ``Knas: green neural
  architecture search,'' in \emph{International Conference on Machine
  Learning}, 2021, pp. 11\,613--11\,625.

\bibitem[Zhang and Jia()]{gradsign}
Z.~Zhang and Z.~Jia, ``Gradsign: Model performance inference with theoretical
  insights,'' in \emph{International Conference on Learning Representations}.

\bibitem[Zhou et~al.(2022)Zhou, Sheng, Zheng, Li, Sun, Tian, Chen, and
  Ji]{zhou2022training}
Q.~Zhou, K.~Sheng, X.~Zheng, K.~Li, X.~Sun, Y.~Tian, J.~Chen, and R.~Ji,
  ``Training-free transformer architecture search,'' in \emph{Proceedings of
  the IEEE/CVF Conference on Computer Vision and Pattern Recognition}, 2022,
  pp. 10\,894--10\,903.

\bibitem[Lin et~al.(2021{\natexlab{b}})Lin, Wang, Sun, Chen, Sun, Qian, Li, and
  Jin]{zenscore}
M.~Lin, P.~Wang, Z.~Sun, H.~Chen, X.~Sun, Q.~Qian, H.~Li, and R.~Jin,
  ``Zen-nas: A zero-shot nas for high-performance image recognition,'' in
  \emph{Proceedings of the IEEE/CVF International Conference on Computer
  Vision}, 2021, pp. 347--356.

\bibitem[Dong et~al.(2023)Dong, Li, and Wei]{DisWOT}
P.~Dong, L.~Li, and Z.~Wei, ``Diswot: Student architecture search for
  distillation without training,'' in \emph{CVPR}, 2023.

\bibitem[Zela et~al.(2020{\natexlab{b}})Zela, Elsken, Saikia, Marrakchi, Brox,
  and Hutter]{zela2019understanding}
A.~Zela, T.~Elsken, T.~Saikia, Y.~Marrakchi, T.~Brox, and F.~Hutter,
  ``Understanding and robustifying differentiable architecture search,'' in
  \emph{ICLR}, 2020.

\bibitem[Chen and Hsieh(2020)]{chen2020stabilizing}
X.~Chen and C.-J. Hsieh, ``Stabilizing differentiable architecture search via
  perturbation-based regularization,'' in \emph{ICML}, 2020.

\bibitem[Sciuto et~al.(2019)Sciuto, Yu, Jaggi, Musat, and
  Salzmann]{sciuto2019evaluating}
C.~Sciuto, K.~Yu, M.~Jaggi, C.~Musat, and M.~Salzmann, ``Evaluating the search
  phase of neural architecture search,'' \emph{arXiv preprint
  arXiv:1902.08142}, 2019.

\bibitem[Liang et~al.(2019)Liang, Zhang, Sun, He, Huang, Zhuang, and
  Li]{liang2019darts+}
H.~Liang, S.~Zhang, J.~Sun, X.~He, W.~Huang, K.~Zhuang, and Z.~Li, ``Darts+:
  Improved differentiable architecture search with early stopping,''
  \emph{arXiv preprint arXiv:1909.06035}, 2019.

\bibitem[Du et~al.(2019)Du, Lee, Li, Wang, and Zhai]{du2019gradient}
S.~Du, J.~Lee, H.~Li, L.~Wang, and X.~Zhai, ``Gradient descent finds global
  minima of deep neural networks,'' in \emph{International Conference on
  Machine Learning}.\hskip 1em plus 0.5em minus 0.4em\relax PMLR, 2019, pp.
  1675--1685.

\bibitem[Du et~al.(2018)Du, Zhai, Poczos, and Singh]{du2018gradient}
S.~S. Du, X.~Zhai, B.~Poczos, and A.~Singh, ``Gradient descent provably
  optimizes over-parameterized neural networks,'' \emph{arXiv preprint
  arXiv:1810.02054}, 2018.

\bibitem[Allen-Zhu et~al.(2019)Allen-Zhu, Li, and Song]{allen2019convergence}
Z.~Allen-Zhu, Y.~Li, and Z.~Song, ``A convergence theory for deep learning via
  over-parameterization,'' in \emph{International Conference on Machine
  Learning}.\hskip 1em plus 0.5em minus 0.4em\relax PMLR, 2019, pp. 242--252.

\bibitem[Jacot et~al.(2018)Jacot, Gabriel, and Hongler]{jacot2018neural}
A.~Jacot, F.~Gabriel, and C.~Hongler, ``Neural tangent kernel: Convergence and
  generalization in neural networks,'' \emph{arXiv preprint arXiv:1806.07572},
  2018.

\bibitem[Arora et~al.(2019{\natexlab{a}})Arora, Du, Hu, Li, Salakhutdinov, and
  Wang]{arora2019exact}
S.~Arora, S.~S. Du, W.~Hu, Z.~Li, R.~Salakhutdinov, and R.~Wang, ``On exact
  computation with an infinitely wide neural net,'' \emph{arXiv preprint
  arXiv:1904.11955}, 2019.

\bibitem[Lee et~al.(2019{\natexlab{b}})Lee, Xiao, Schoenholz, Bahri, Novak,
  Sohl-Dickstein, and Pennington]{lee2019wide}
J.~Lee, L.~Xiao, S.~Schoenholz, Y.~Bahri, R.~Novak, J.~Sohl-Dickstein, and
  J.~Pennington, ``Wide neural networks of any depth evolve as linear models
  under gradient descent,'' \emph{Advances in neural information processing
  systems}, vol.~32, 2019.

\bibitem[Han et~al.(2015)Han, Pool, Tran, and Dally]{han2015learning}
S.~Han, J.~Pool, J.~Tran, and W.~J. Dally, ``Learning both weights and
  connections for efficient neural networks,'' in \emph{Proceedings of the 28th
  International Conference on Neural Information Processing Systems-Volume 1},
  2015, pp. 1135--1143.

\bibitem[Wang et~al.(2022)Wang, Qin, Bai, Zhang, and Fu]{wang2022recent}
H.~Wang, C.~Qin, Y.~Bai, Y.~Zhang, and Y.~Fu, ``Recent advances on neural
  network pruning at initialization,'' in \emph{Proceedings of the
  International Joint Conference on Artificial Intelligence, IJCAI, Vienna,
  Austria}, 2022, pp. 23--29.

\bibitem[Verdenius et~al.(2020)Verdenius, Stol, and
  Forr{\'e}]{verdenius2020pruning}
S.~Verdenius, M.~Stol, and P.~Forr{\'e}, ``Pruning via iterative ranking of
  sensitivity statistics,'' \emph{arXiv preprint arXiv:2006.00896}, 2020.

\bibitem[Liu et~al.(2020)Liu, Doll{\'a}r, He, Girshick, Yuille, and
  Xie]{liu2020labels}
C.~Liu, P.~Doll{\'a}r, K.~He, R.~Girshick, A.~Yuille, and S.~Xie, ``Are labels
  necessary for neural architecture search?'' in \emph{European Conference on
  Computer Vision}.\hskip 1em plus 0.5em minus 0.4em\relax Springer, 2020, pp.
  798--813.

\bibitem[Siems et~al.(2020)Siems, Zimmer, Zela, Lukasik, Keuper, and
  Hutter]{siems2020bench}
J.~Siems, L.~Zimmer, A.~Zela, J.~Lukasik, M.~Keuper, and F.~Hutter,
  ``Nas-bench-301 and the case for surrogate benchmarks for neural architecture
  search,'' \emph{arXiv preprint arXiv:2008.09777}, 2020.

\bibitem[Ying et~al.(2019)Ying, Klein, Christiansen, Real, Murphy, and
  Hutter]{ying2019bench}
C.~Ying, A.~Klein, E.~Christiansen, E.~Real, K.~Murphy, and F.~Hutter,
  ``Nas-bench-101: Towards reproducible neural architecture search,'' in
  \emph{ICML}, 2019, pp. 7105--7114.

\bibitem[Zhang et~al.(2021)Zhang, Hou, Zhang, and Sun]{zhang2021neural}
X.~Zhang, P.~Hou, X.~Zhang, and J.~Sun, ``Neural architecture search with
  random labels,'' \emph{arXiv preprint arXiv:2101.11834}, 2021.

\bibitem[Dong and Yang(2019)]{GDAS}
X.~Dong and Y.~Yang, ``Searching for a robust neural architecture in four gpu
  hours,'' in \emph{IEEE Conference on Computer Vision and Pattern
  Recognition}.\hskip 1em plus 0.5em minus 0.4em\relax IEEE Computer Society,
  2019.

\bibitem[Xu et~al.(2020)Xu, Xie, Zhang, Chen, Qi, Tian, and
  Xiong]{xu2019pcdarts}
Y.~Xu, L.~Xie, X.~Zhang, X.~Chen, G.-J. Qi, Q.~Tian, and H.~Xiong, ``Pc-darts:
  Partial channel connections for memory-efficient architecture search,'' in
  \emph{ICLR}, 2020.

\bibitem[Li and Talwalkar(2019)]{li2019random}
L.~Li and A.~Talwalkar, ``Random search and reproducibility for neural
  architecture search,'' \emph{arXiv preprint arXiv:1902.07638}, 2019.

\bibitem[Xie et~al.(2019)Xie, Zheng, Liu, and Lin]{xie2018snas}
S.~Xie, H.~Zheng, C.~Liu, and L.~Lin, ``Snas: stochastic neural architecture
  search,'' \emph{ICLR}, 2019.

\bibitem[Zhou et~al.(2019)Zhou, Yang, Wang, and Pan]{zhou2019bayesnas}
H.~Zhou, M.~Yang, J.~Wang, and W.~Pan, ``Bayesnas: A bayesian approach for
  neural architecture search,'' in \emph{International Conference on Machine
  Learning}, 2019, pp. 7603--7613.

\bibitem[Chen et~al.(2019)Chen, Xie, Wu, and Tian]{chen2019progressive}
X.~Chen, L.~Xie, J.~Wu, and Q.~Tian, ``Progressive differentiable architecture
  search: Bridging the depth gap between search and evaluation,'' in
  \emph{Proceedings of the IEEE International Conference on Computer Vision},
  2019, pp. 1294--1303.

\bibitem[Chen et~al.(2020)Chen, Wang, Cheng, Tang, and Hsieh]{chen2020drnas}
X.~Chen, R.~Wang, M.~Cheng, X.~Tang, and C.-J. Hsieh, ``Drnas: Dirichlet neural
  architecture search,'' \emph{arXiv preprint arXiv:2006.10355}, 2020.

\bibitem[Zoph et~al.(2018)Zoph, Vasudevan, Shlens, and Le]{zoph2018learning}
B.~Zoph, V.~Vasudevan, J.~Shlens, and Q.~V. Le, ``Learning transferable
  architectures for scalable image recognition,'' in \emph{Proceedings of the
  IEEE conference on computer vision and pattern recognition}, 2018, pp.
  8697--8710.

\bibitem[Tan et~al.(2019)Tan, Chen, Pang, Vasudevan, Sandler, Howard, and
  Le]{tan2019mnasnet}
M.~Tan, B.~Chen, R.~Pang, V.~Vasudevan, M.~Sandler, A.~Howard, and Q.~V. Le,
  ``Mnasnet: Platform-aware neural architecture search for mobile,'' in
  \emph{Proceedings of the IEEE/CVF Conference on Computer Vision and Pattern
  Recognition}, 2019, pp. 2820--2828.

\bibitem[Sandler et~al.(2018)Sandler, Howard, Zhu, Zhmoginov, and
  Chen]{sandler2018mobilenetv2}
M.~Sandler, A.~Howard, M.~Zhu, A.~Zhmoginov, and L.-C. Chen, ``Mobilenetv2:
  Inverted residuals and linear bottlenecks,'' in \emph{Proceedings of the IEEE
  conference on computer vision and pattern recognition}, 2018, pp. 4510--4520.

\bibitem[Tan and Le(2019)]{tan2019efficientnet}
M.~Tan and Q.~Le, ``Efficientnet: Rethinking model scaling for convolutional
  neural networks,'' in \emph{International conference on machine
  learning}.\hskip 1em plus 0.5em minus 0.4em\relax PMLR, 2019, pp. 6105--6114.

\bibitem[Stamoulis et~al.(2019)Stamoulis, Ding, Wang, Lymberopoulos, Priyantha,
  Liu, and Marculescu]{stamoulis2019single}
D.~Stamoulis, R.~Ding, D.~Wang, D.~Lymberopoulos, B.~Priyantha, J.~Liu, and
  D.~Marculescu, ``Single-path nas: Designing hardware-efficient convnets in
  less than 4 hours,'' in \emph{Joint European Conference on Machine Learning
  and Knowledge Discovery in Databases}.\hskip 1em plus 0.5em minus 0.4em\relax
  Springer, 2019, pp. 481--497.

\bibitem[Yu et~al.(2020)Yu, Jin, Liu, Bender, Kindermans, Tan, Huang, Song,
  Pang, and Le]{yu2020bignas}
J.~Yu, P.~Jin, H.~Liu, G.~Bender, P.-J. Kindermans, M.~Tan, T.~Huang, X.~Song,
  R.~Pang, and Q.~Le, ``Bignas: Scaling up neural architecture search with big
  single-stage models,'' in \emph{European Conference on Computer
  Vision}.\hskip 1em plus 0.5em minus 0.4em\relax Springer, 2020, pp. 702--717.

\bibitem[Cai et~al.(2019)Cai, Zhu, and Han]{cai2018proxylessnas}
H.~Cai, L.~Zhu, and S.~Han, ``Proxylessnas: Direct neural architecture search
  on target task and hardware,'' \emph{ICLR}, 2019.

\bibitem[Chu et~al.(2019)Chu, Zhang, Xu, and Li]{chu2019fairnas}
X.~Chu, B.~Zhang, R.~Xu, and J.~Li, ``Fairnas: Rethinking evaluation fairness
  of weight sharing neural architecture search,'' \emph{arXiv preprint
  arXiv:1907.01845}, 2019.

\bibitem[Arora et~al.(2019{\natexlab{b}})Arora, Du, Hu, Li, and
  Wang]{arora2019fine}
S.~Arora, S.~Du, W.~Hu, Z.~Li, and R.~Wang, ``Fine-grained analysis of
  optimization and generalization for overparameterized two-layer neural
  networks,'' in \emph{International Conference on Machine Learning}.\hskip 1em
  plus 0.5em minus 0.4em\relax PMLR, 2019, pp. 322--332.

\end{thebibliography}
}

\newpage
\appendices
\onecolumn
\section{Visualization of the Searched Architectures}
Since the inputs of data-agnostic FreeDARTS are all-ones vector, the search results will be the same for architecture search on CIFAR-10 and CIFAR-100. So, we only perform the architecture search on CIFAR-10 and ImageNet. In result, there are 15 architectures searched by our FreeDARTS, FreeDARTS$\dagger$, and FreeDARTS$\ddagger$ with 5 different \textit{random seeds 0-4}. In Figure \ref{fig:archs_darts}, we present the best architectures searched in DARTS space by FreeDARTS(seed4), FreeDARTS$\dagger$ (seed1), and FreeDARTS$\ddagger$ (seed3), and the remaining searched architectures are presented in Figure \ref{fig:archs_remianings}.

\begin{figure*}[h]
\centering
 \subfigure[normal and reduction cell by FreeDARTS with seed0]{
  \begin{minipage}{4cm}
      \includegraphics[width=4cm,height=2.5cm]{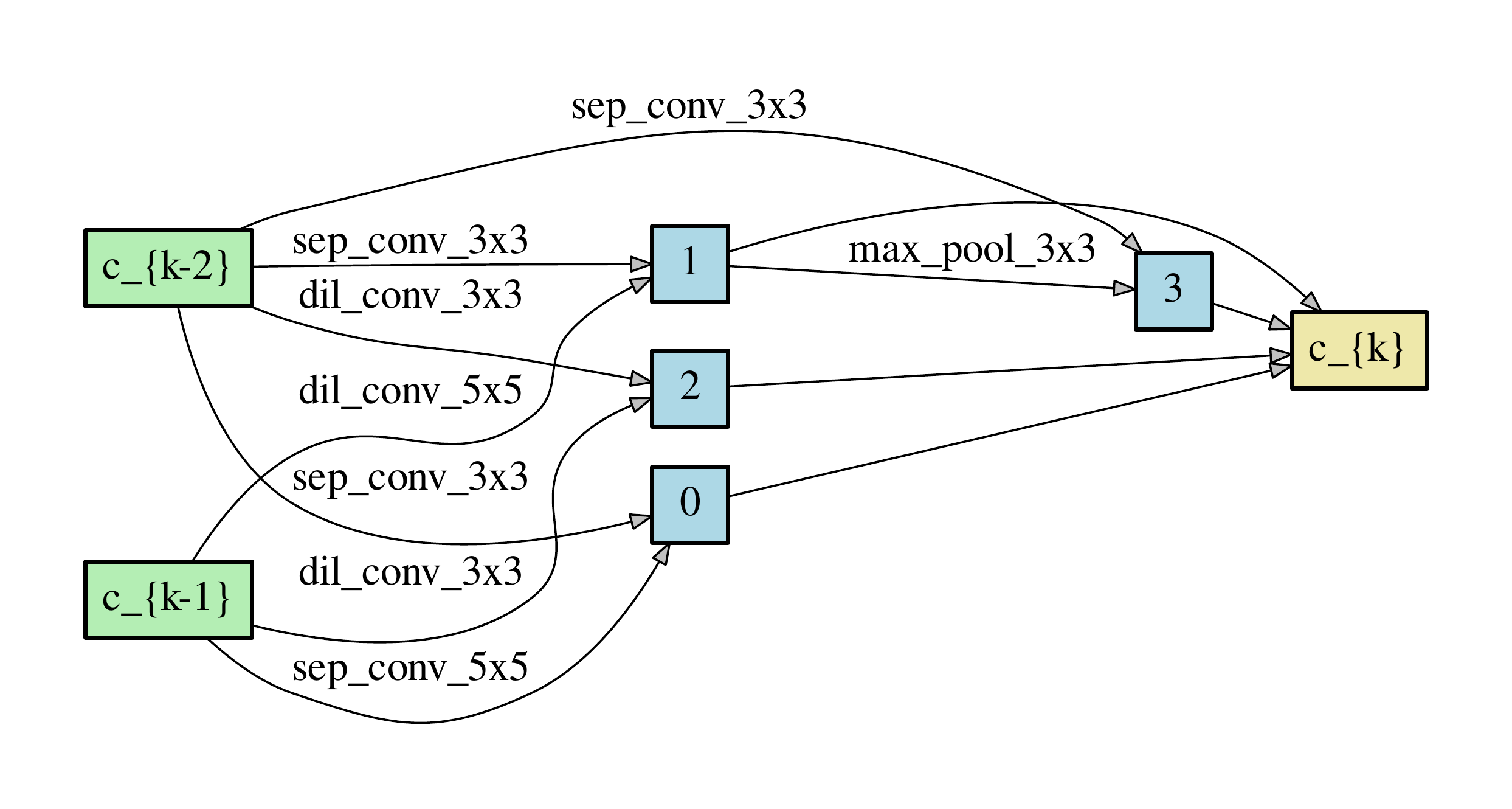}
  \end{minipage}
  \begin{minipage}{4cm}
      \includegraphics[width=4cm,height=2.5cm]{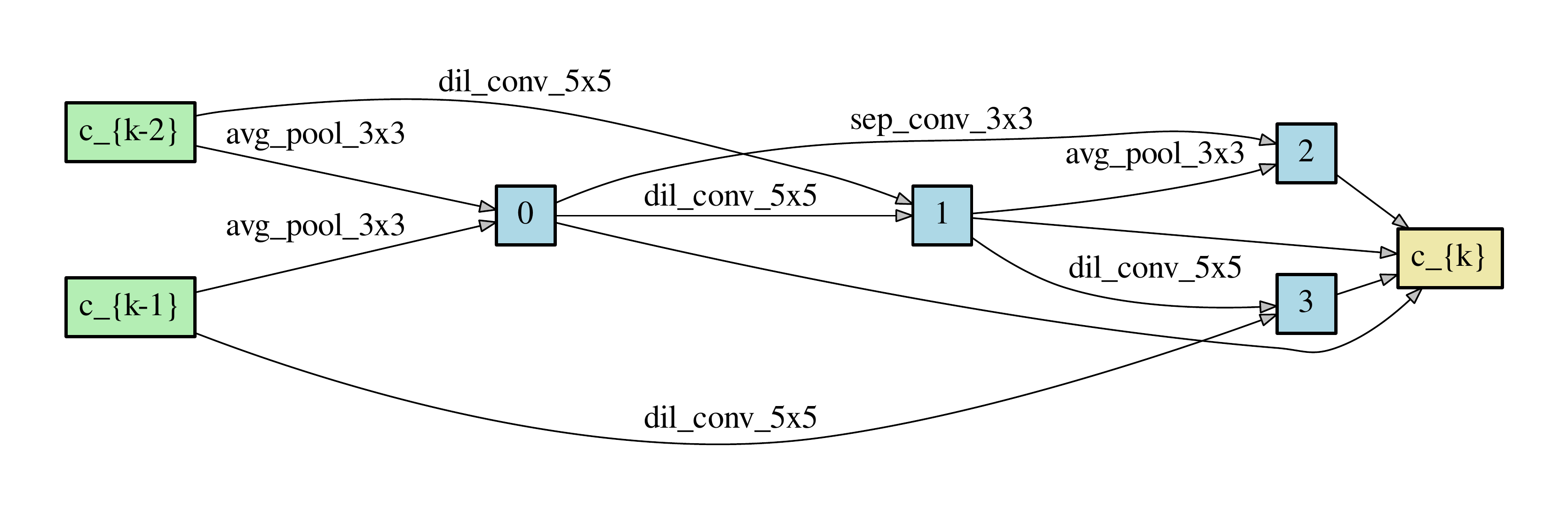}
  \end{minipage} 
  }
   \subfigure[normal and reduction cell by FreeDARTS with seed1]{
  \begin{minipage}{4cm}
      \includegraphics[width=4cm,height=2.5cm]{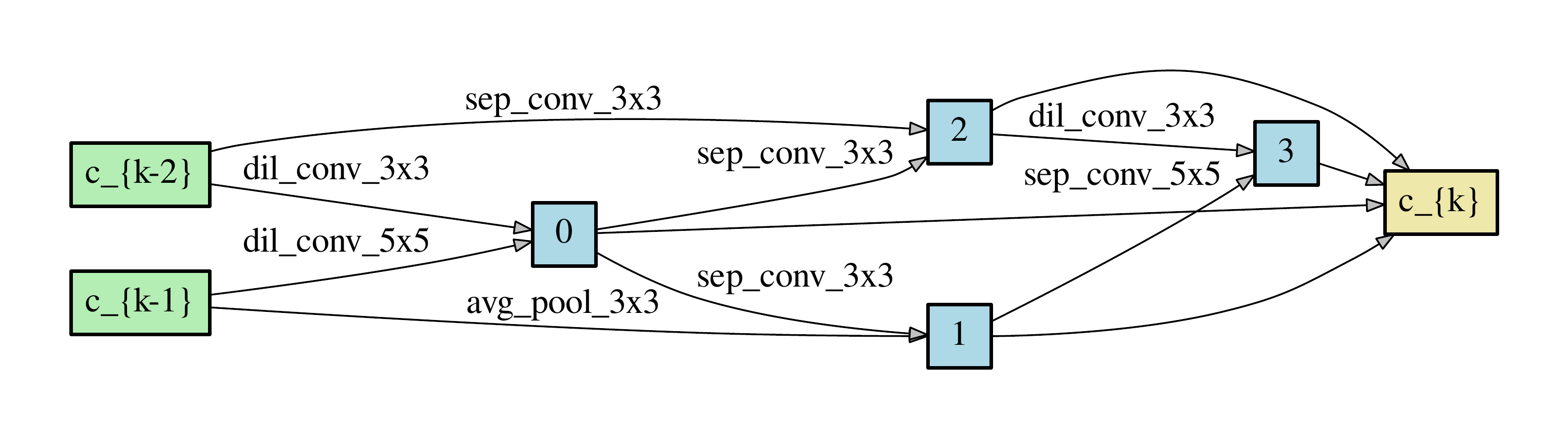}
  \end{minipage}
  \begin{minipage}{4cm}
      \includegraphics[width=4cm,height=2.5cm]{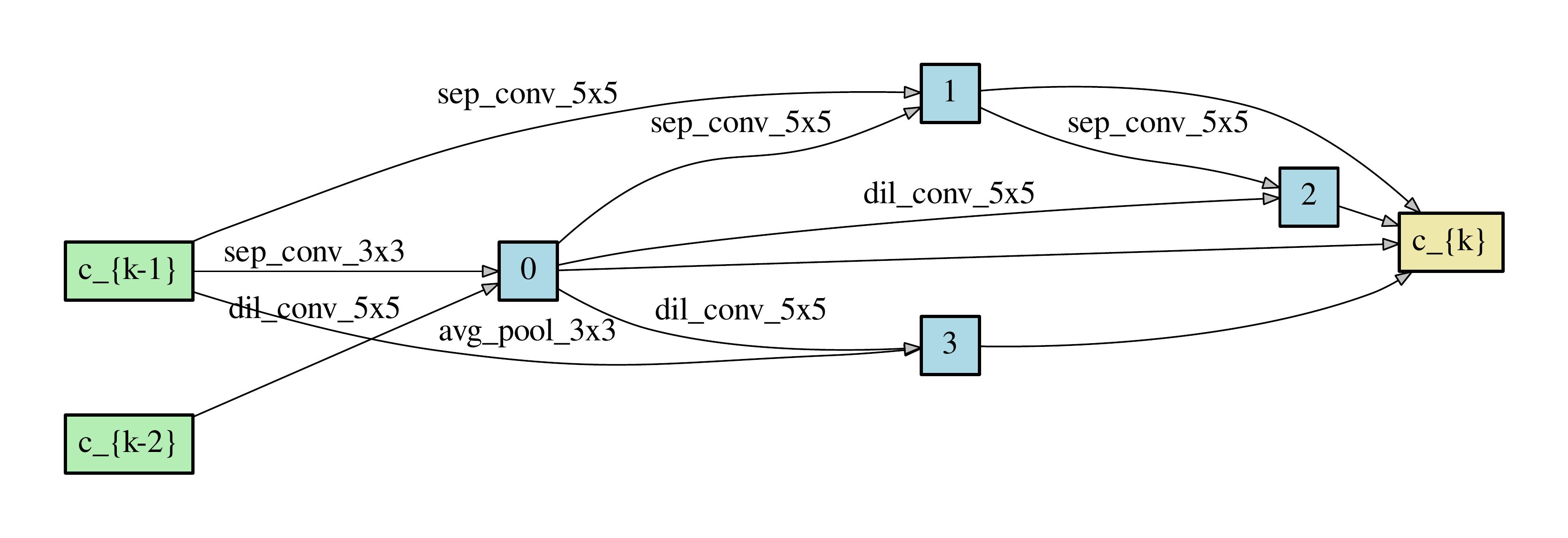}
  \end{minipage} 
  }
  
   \subfigure[normal and reduction cell by FreeDARTS with seed2]{
  \begin{minipage}{4cm}
      \includegraphics[width=4cm,height=2.5cm]{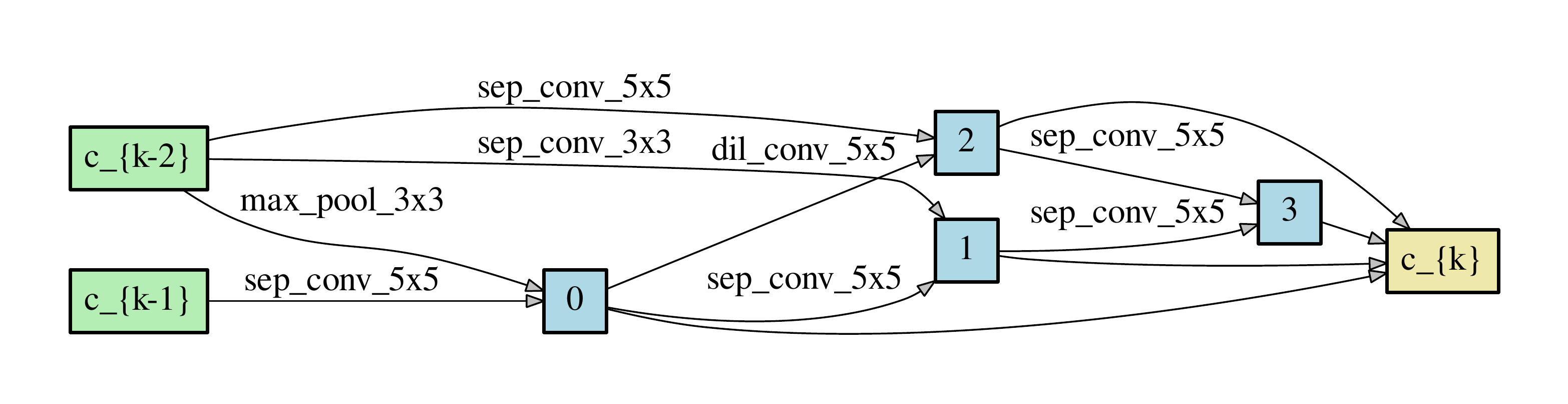}
  \end{minipage}
  \begin{minipage}{4cm}
      \includegraphics[width=4cm,height=2.5cm]{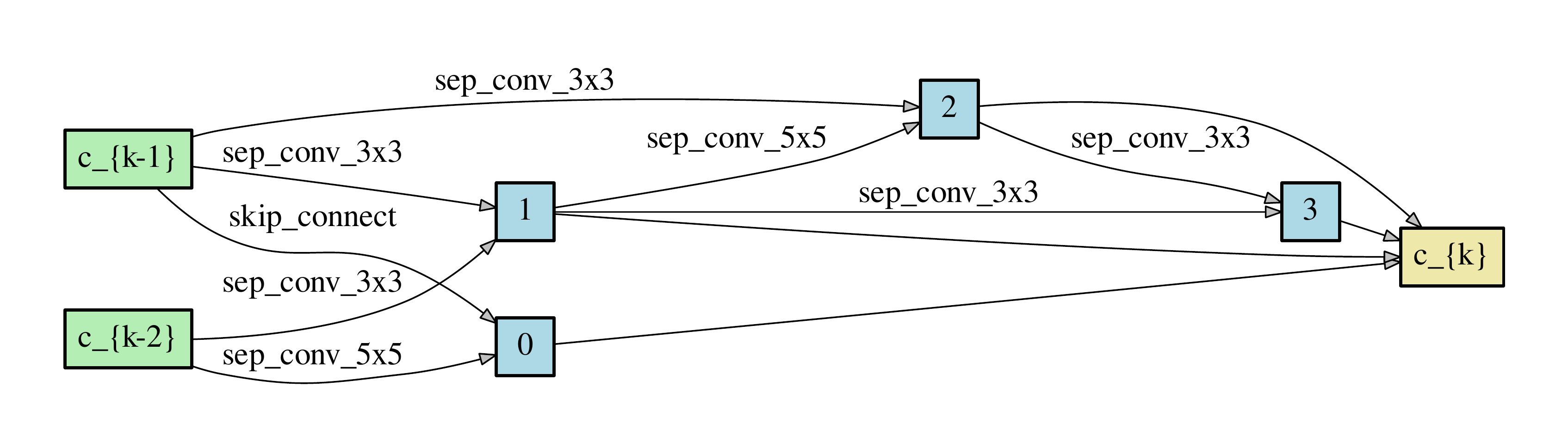}
  \end{minipage} 
  }
   \subfigure[normal and reduction cell by FreeDARTS with seed3]{
  \begin{minipage}{4cm}
      \includegraphics[width=4cm,height=2.5cm]{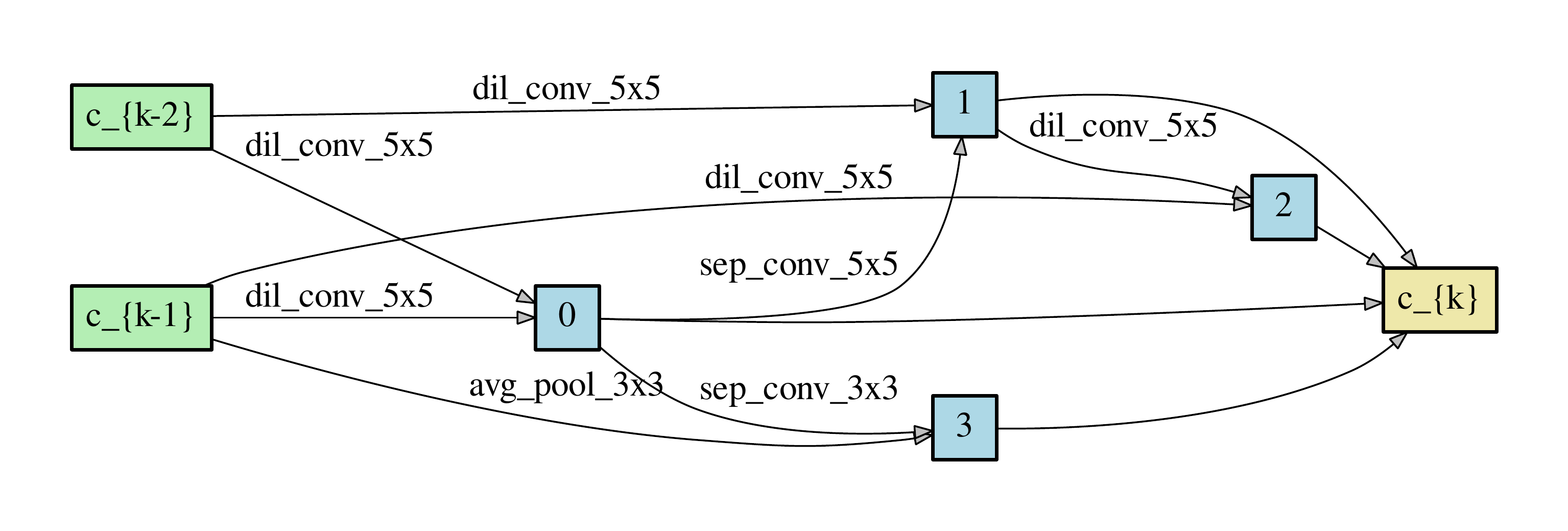}
  \end{minipage}
  \begin{minipage}{4cm}
      \includegraphics[width=4cm,height=2.5cm]{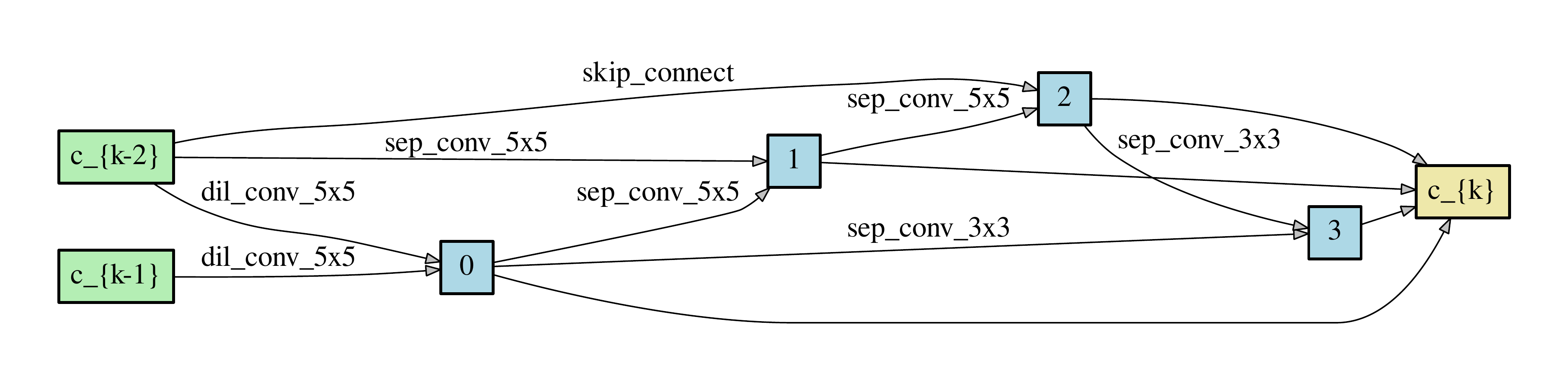}
  \end{minipage} 
  }
  
   \subfigure[normal and reduction cell by FreeDARTS$\dagger$ with seed0]{
  \begin{minipage}{4cm}
      \includegraphics[width=4cm,height=2.5cm]{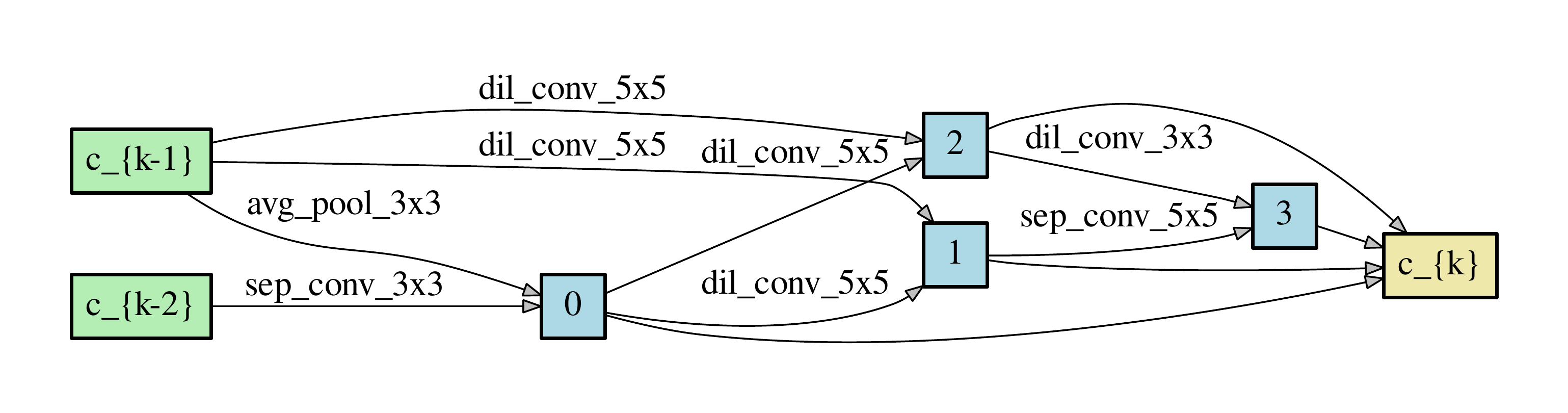}
  \end{minipage}
  \begin{minipage}{4cm}
      \includegraphics[width=4cm,height=2.5cm]{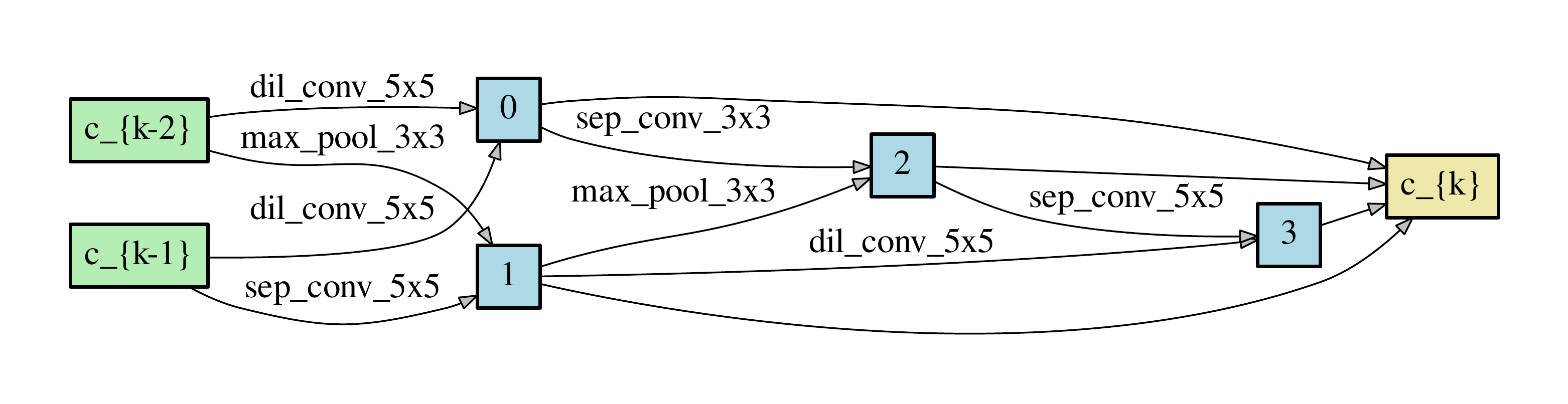}
  \end{minipage} 
  }
   \subfigure[normal and reduction cell by FreeDARTS$\dagger$ with seed2]{
  \begin{minipage}{4cm}
      \includegraphics[width=4cm,height=2.5cm]{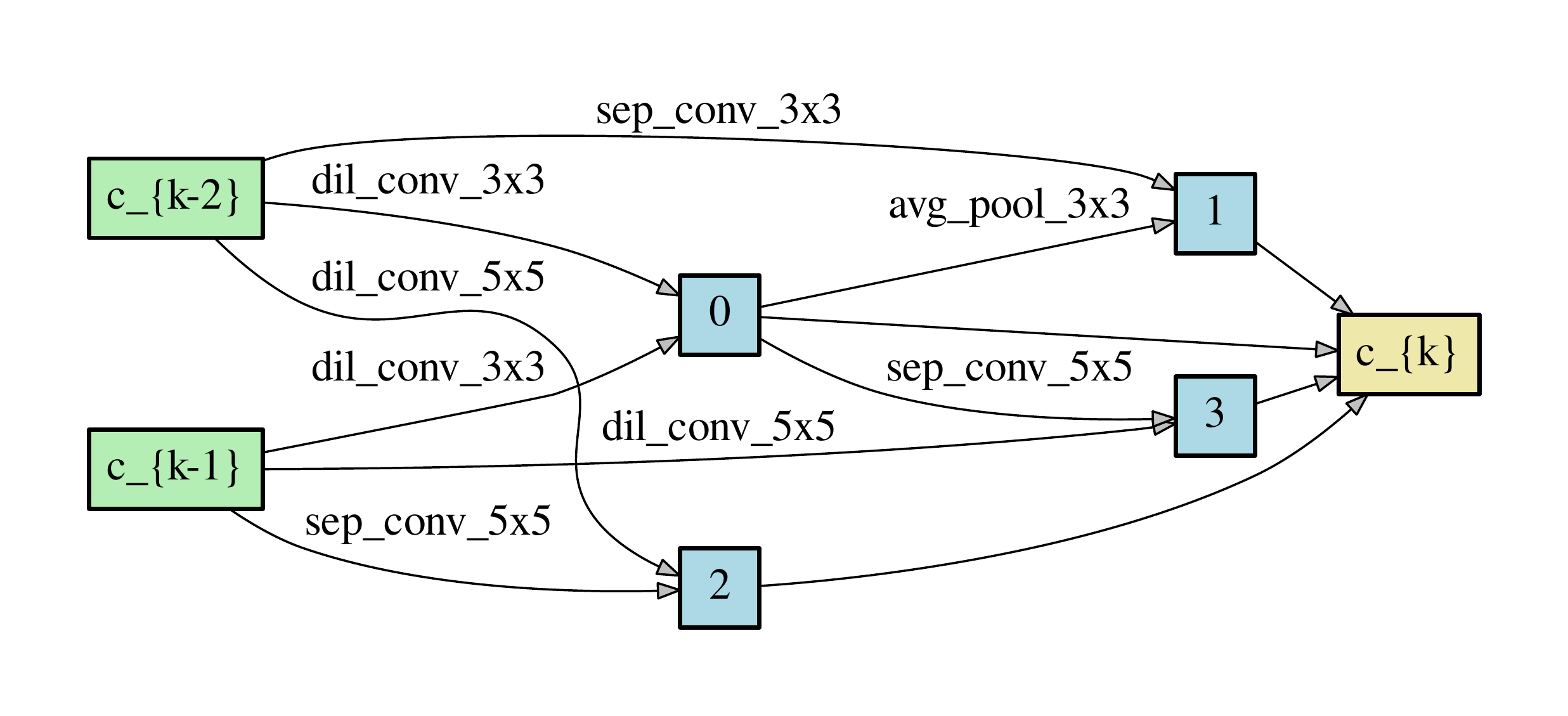}
  \end{minipage}
  \begin{minipage}{4cm}
      \includegraphics[width=4cm,height=2.5cm]{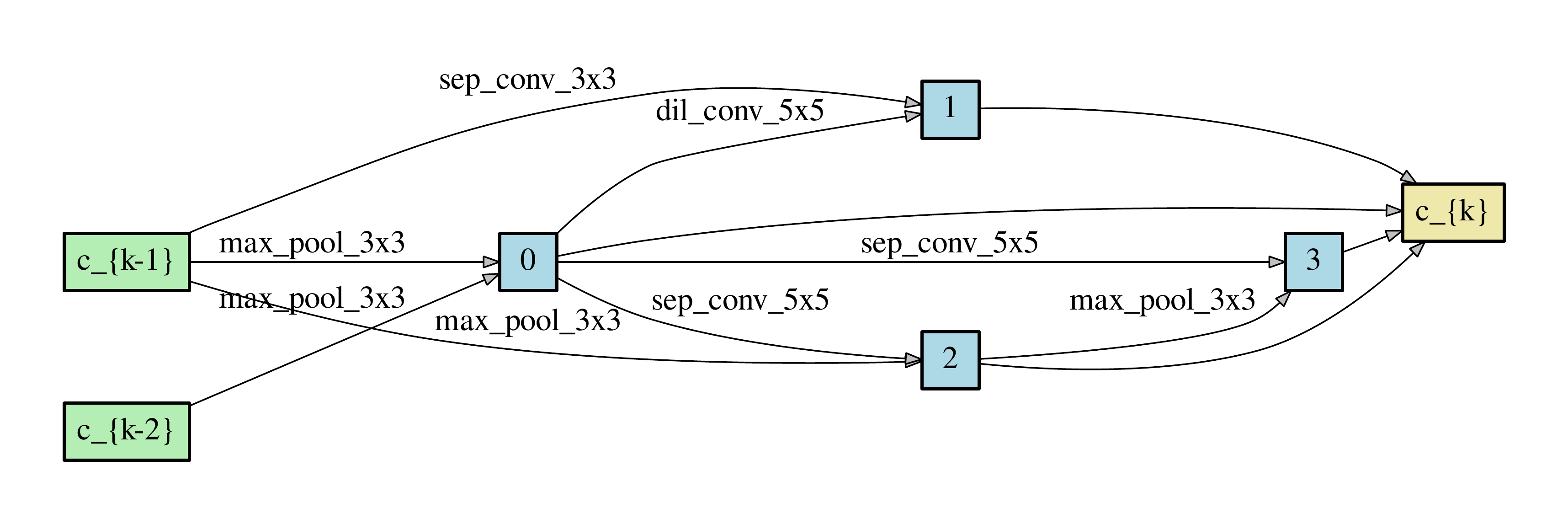}
  \end{minipage} 
  }
  
   \subfigure[normal and reduction cell by FreeDARTS$\dagger$ with seed3]{
  \begin{minipage}{4cm}
      \includegraphics[width=4cm,height=2.5cm]{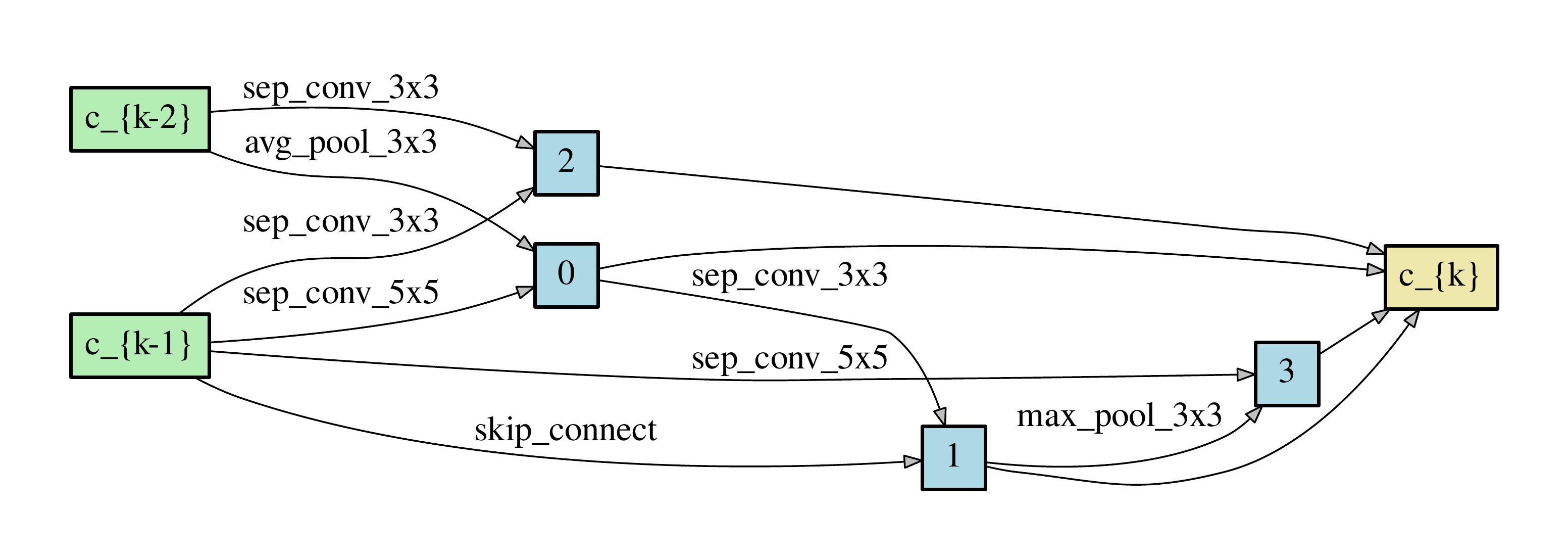}
  \end{minipage}
  \begin{minipage}{4cm}
      \includegraphics[width=4cm,height=2.5cm]{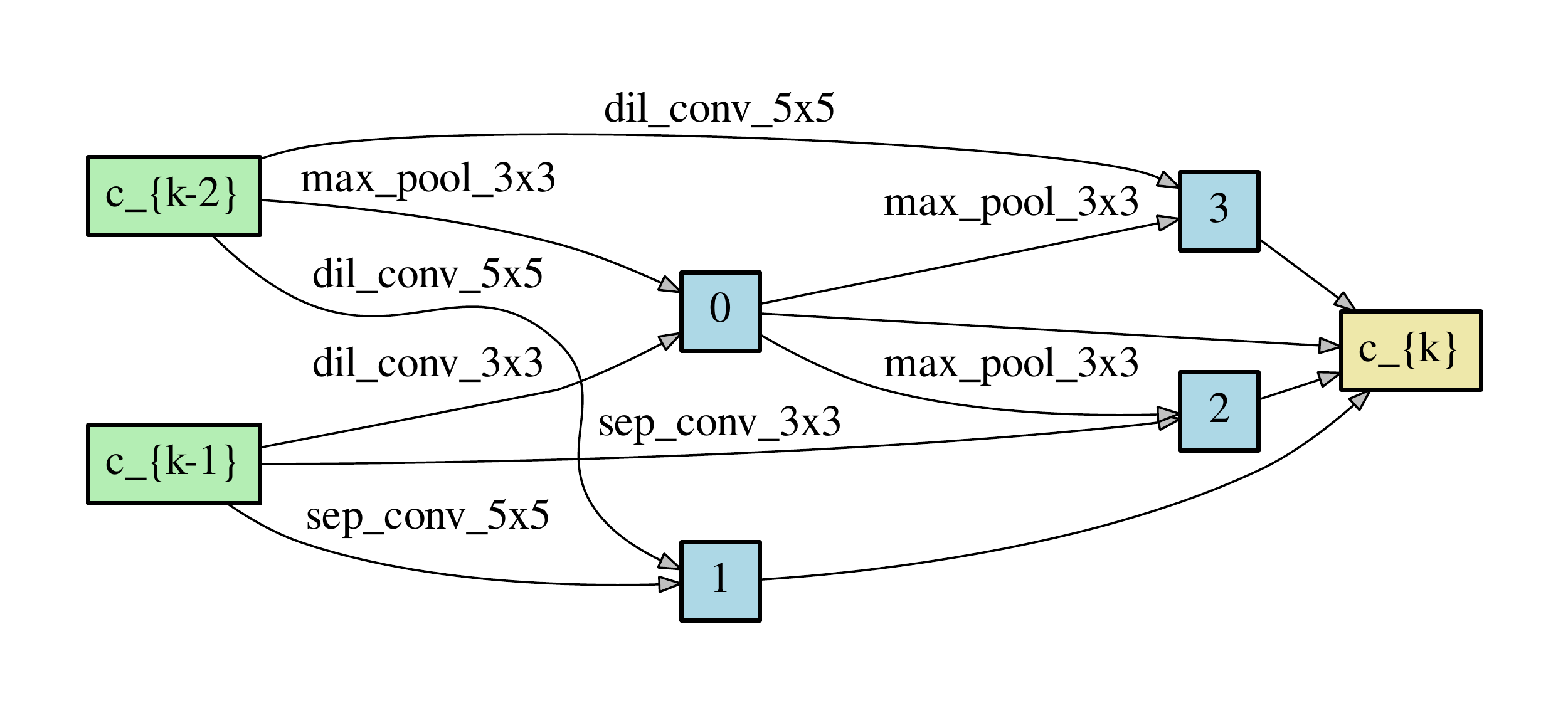}
  \end{minipage} 
  }
   \subfigure[normal and reduction cell by FreeDARTS$\dagger$ with seed4]{
  \begin{minipage}{4cm}
      \includegraphics[width=4cm,height=2.5cm]{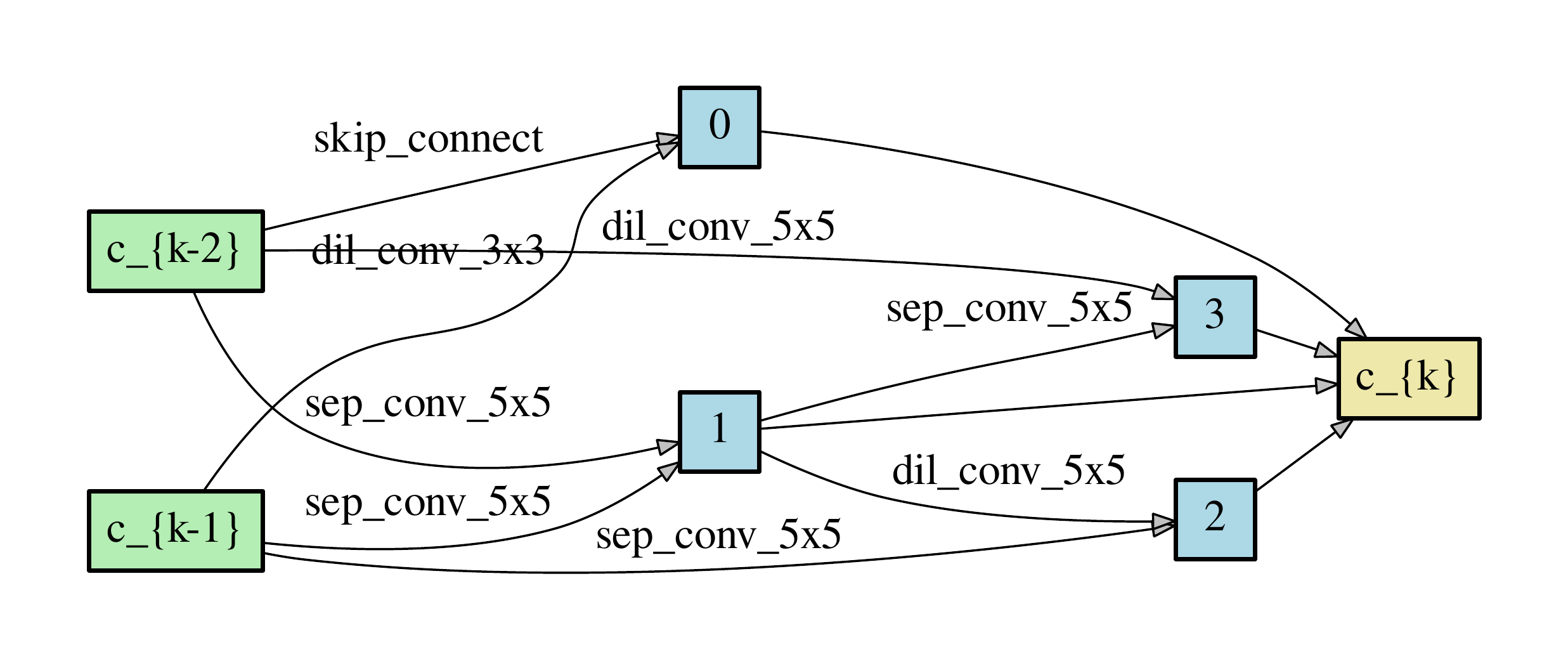}
  \end{minipage}
  \begin{minipage}{4cm}
      \includegraphics[width=4cm,height=2.5cm]{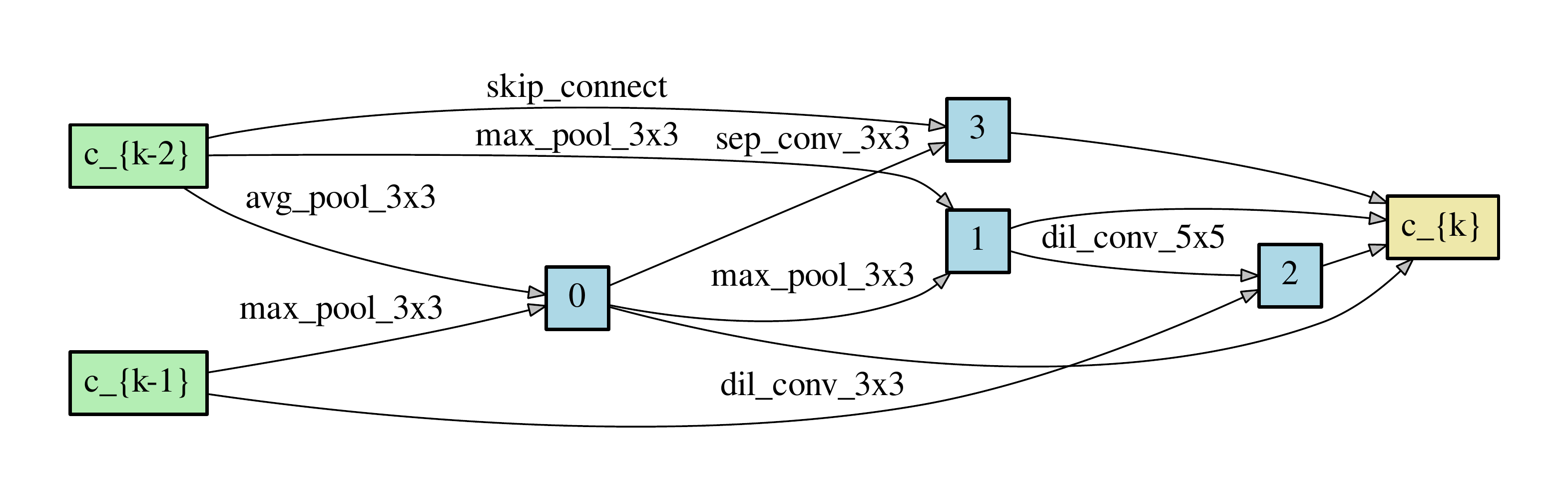}
  \end{minipage} 
  }
  
   \subfigure[normal and reduction cell by FreeDARTS$\ddagger$ with seed0]{
  \begin{minipage}{4cm}
      \includegraphics[width=4cm,height=2.5cm]{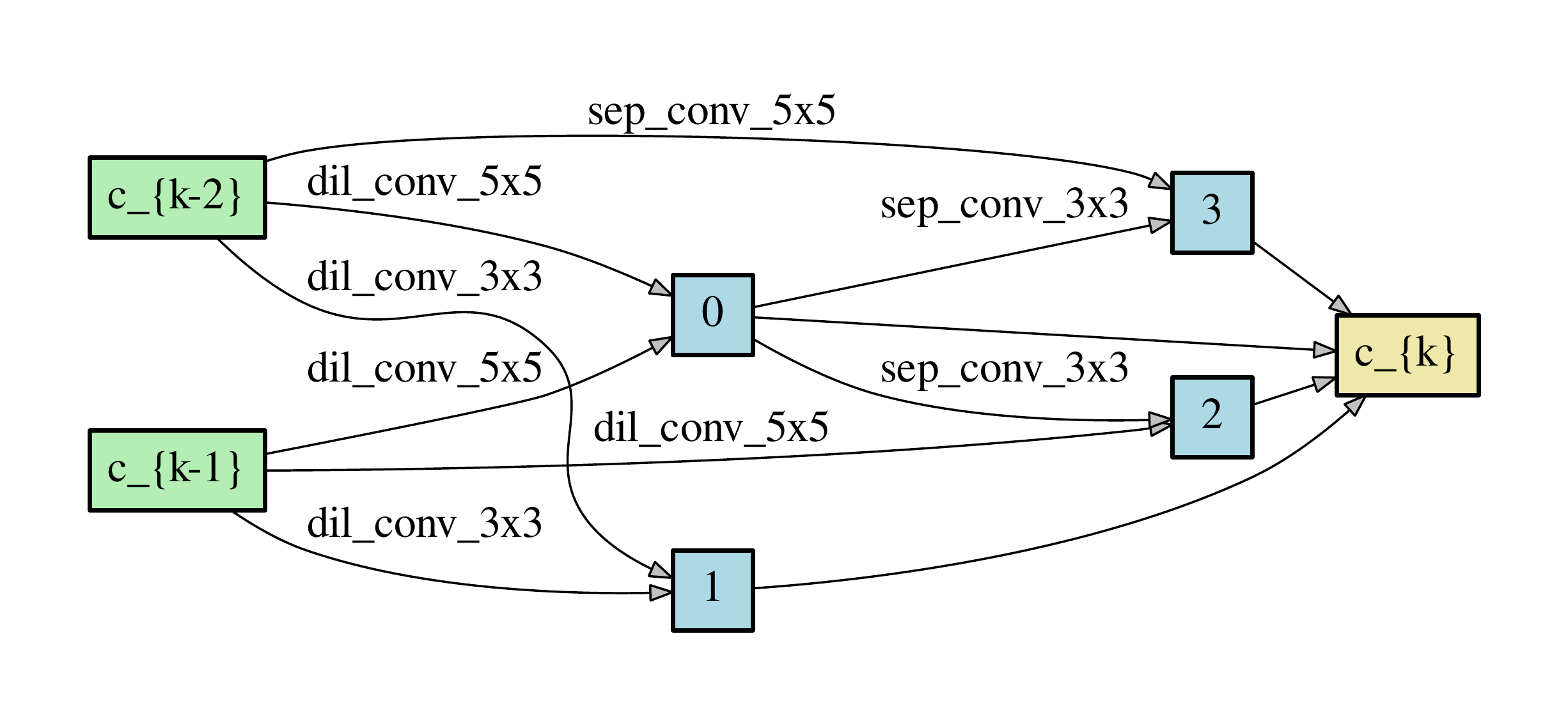}
  \end{minipage}
  \begin{minipage}{4cm}
      \includegraphics[width=4cm,height=2.5cm]{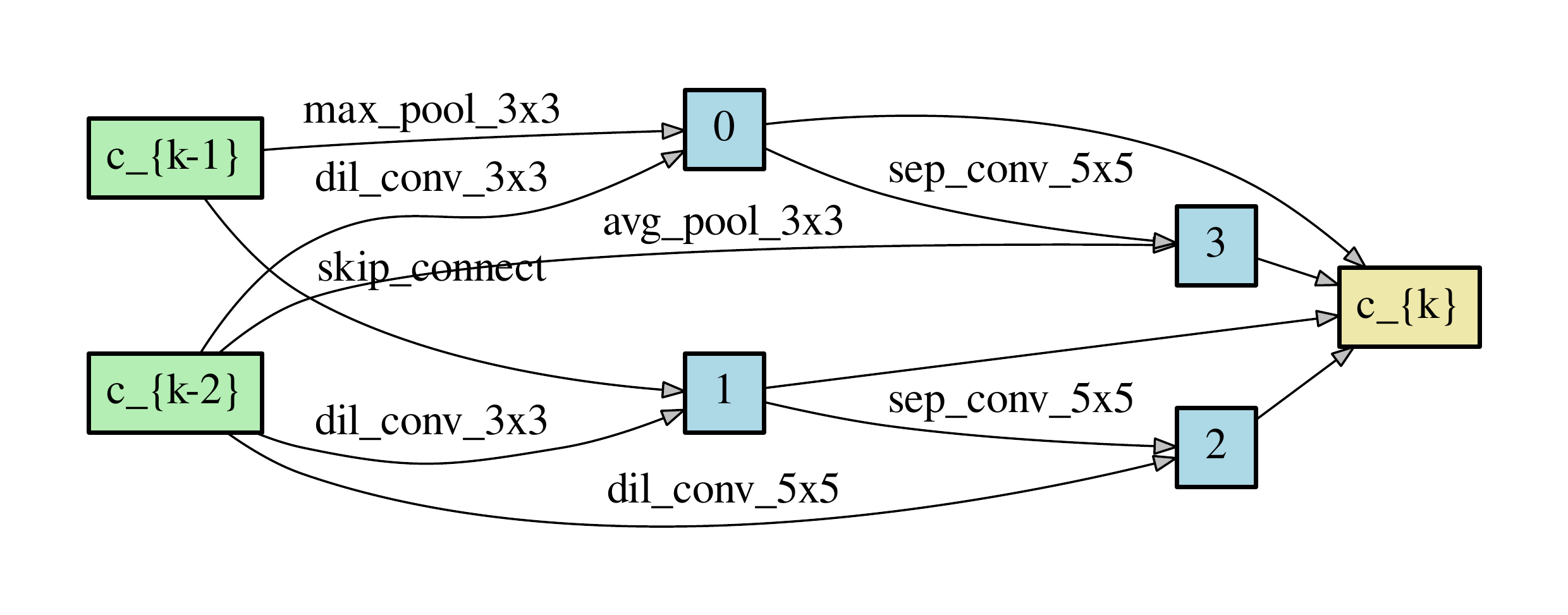}
  \end{minipage} 
  }
   \subfigure[normal and reduction cell by FreeDARTS$\ddagger$ with seed1]{
  \begin{minipage}{4cm}
      \includegraphics[width=4cm,height=2.5cm]{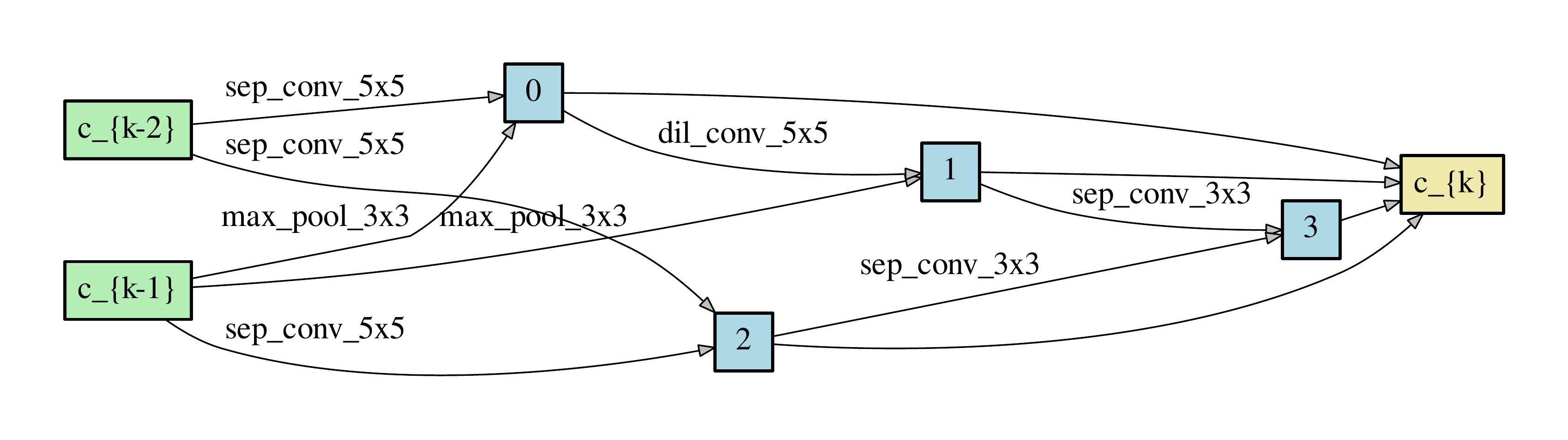}
  \end{minipage}
  \begin{minipage}{4cm}
      \includegraphics[width=4cm,height=2.5cm]{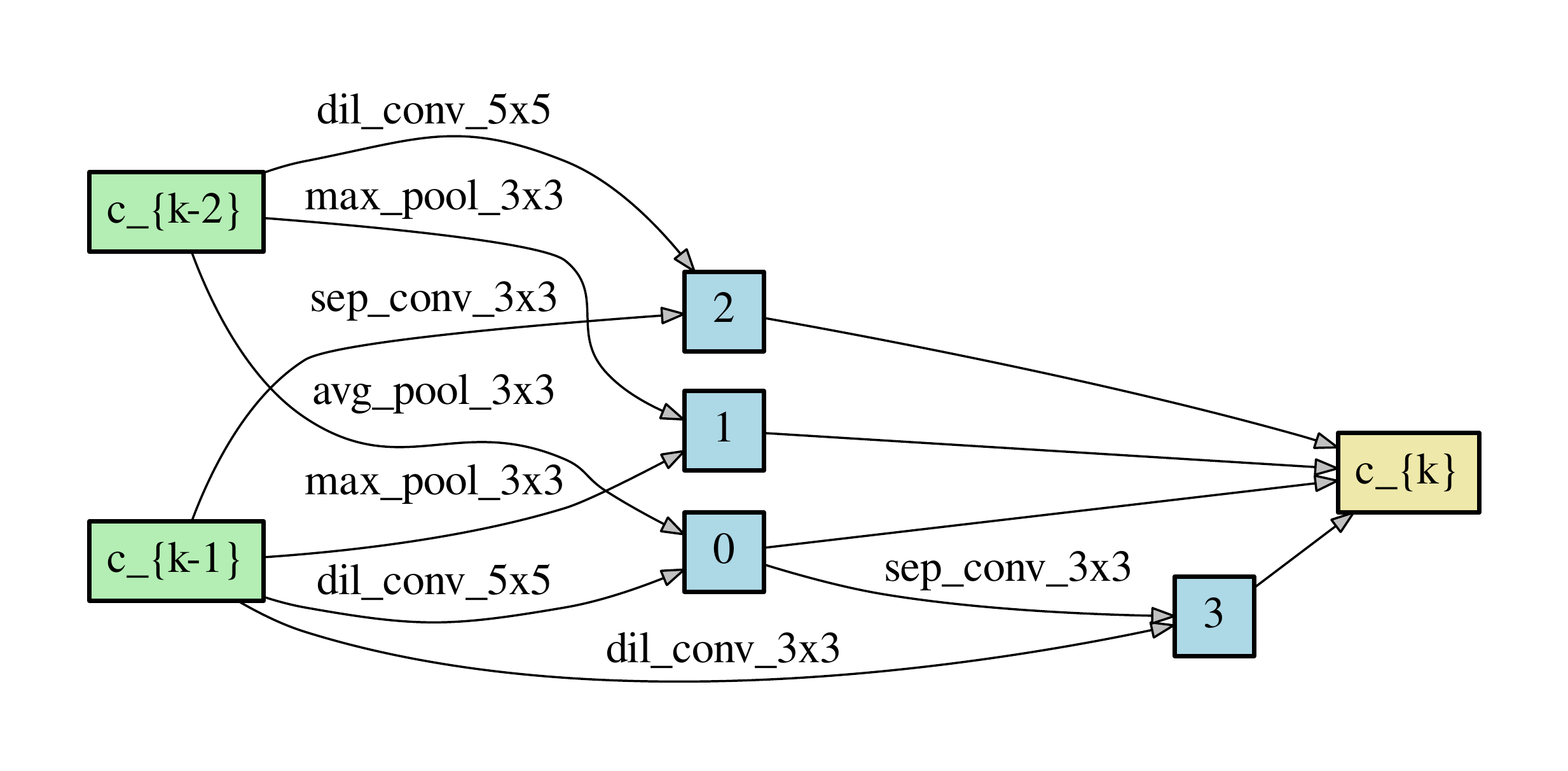}
  \end{minipage} 
  }
  
   \subfigure[normal and reduction cell by FreeDARTS$\ddagger$ with seed2]{
  \begin{minipage}{4cm}
      \includegraphics[width=4cm,height=2.5cm]{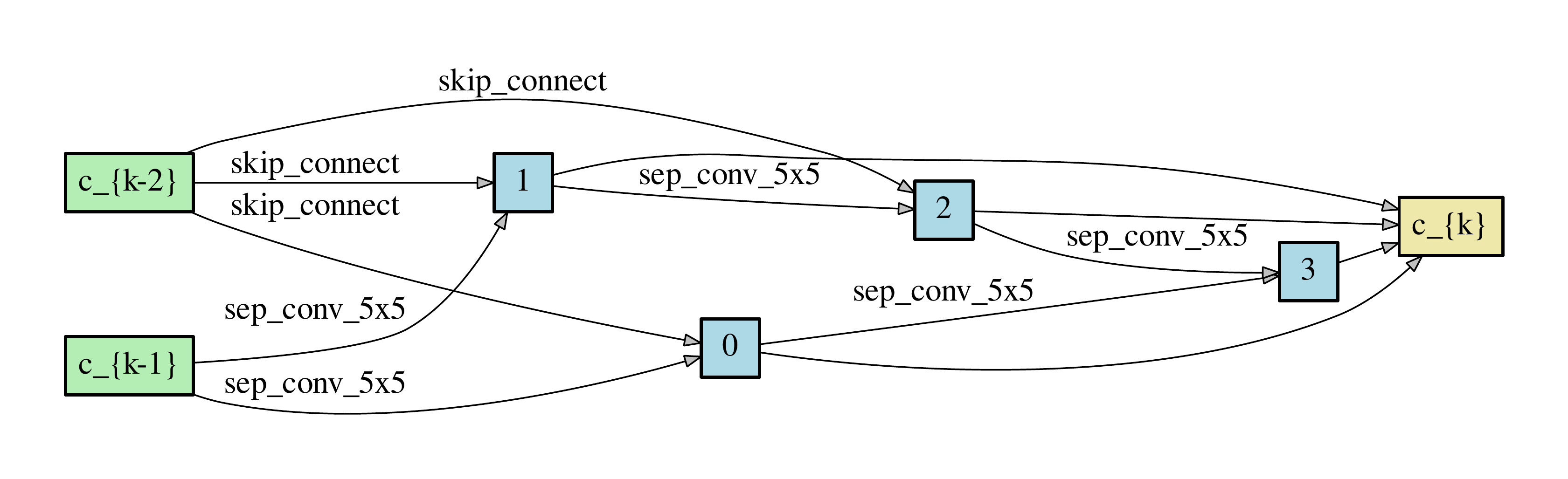}
  \end{minipage}
  \begin{minipage}{4cm}
      \includegraphics[width=4cm,height=2.5cm]{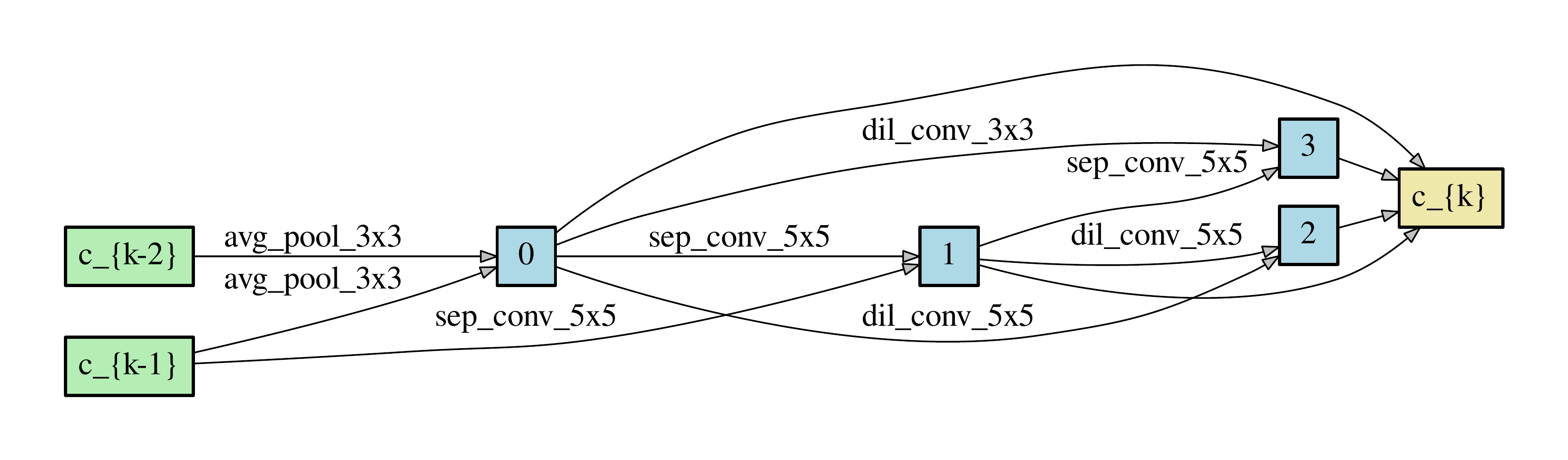}
  \end{minipage} 
  }
   \subfigure[normal and reduction cell by FreeDARTS$\ddagger$ with seed4]{
  \begin{minipage}{4cm}
      \includegraphics[width=4cm,height=2.5cm]{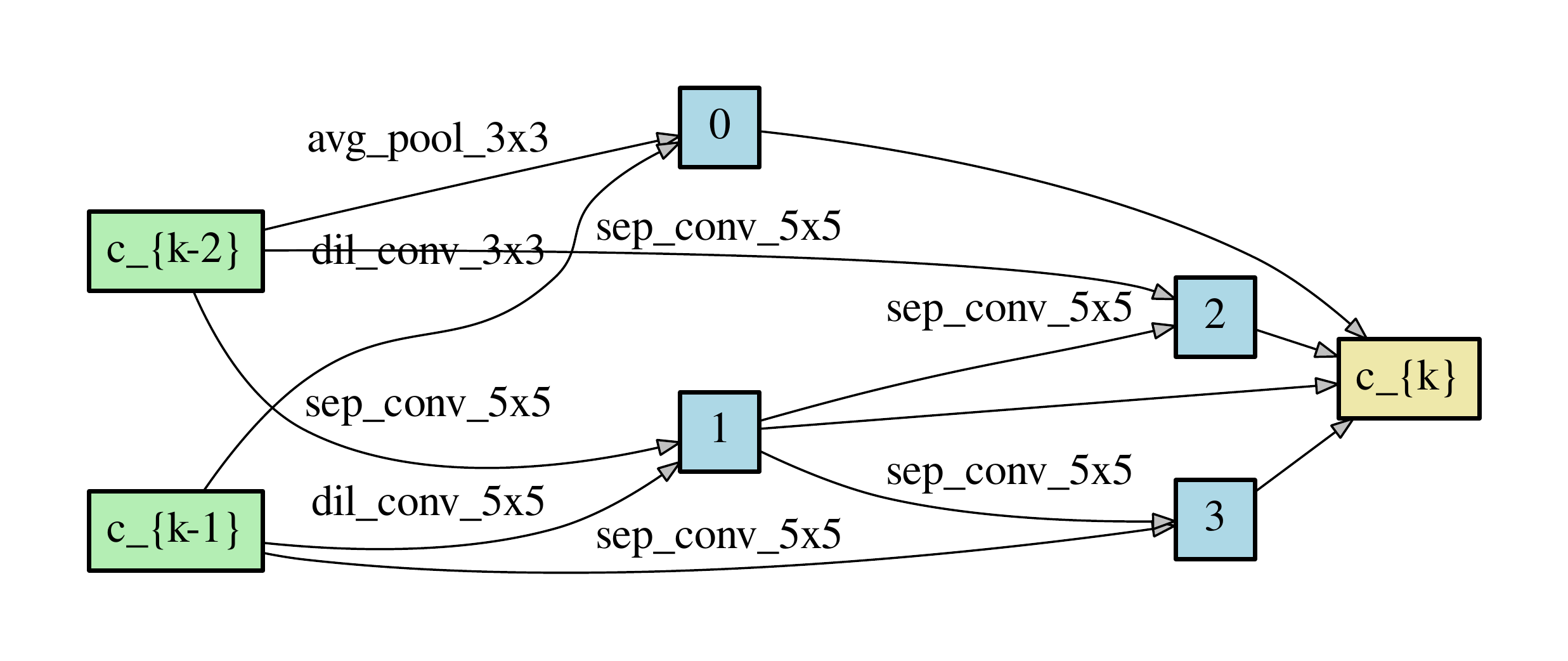}
  \end{minipage}
  \begin{minipage}{4cm}
      \includegraphics[width=4cm,height=2.5cm]{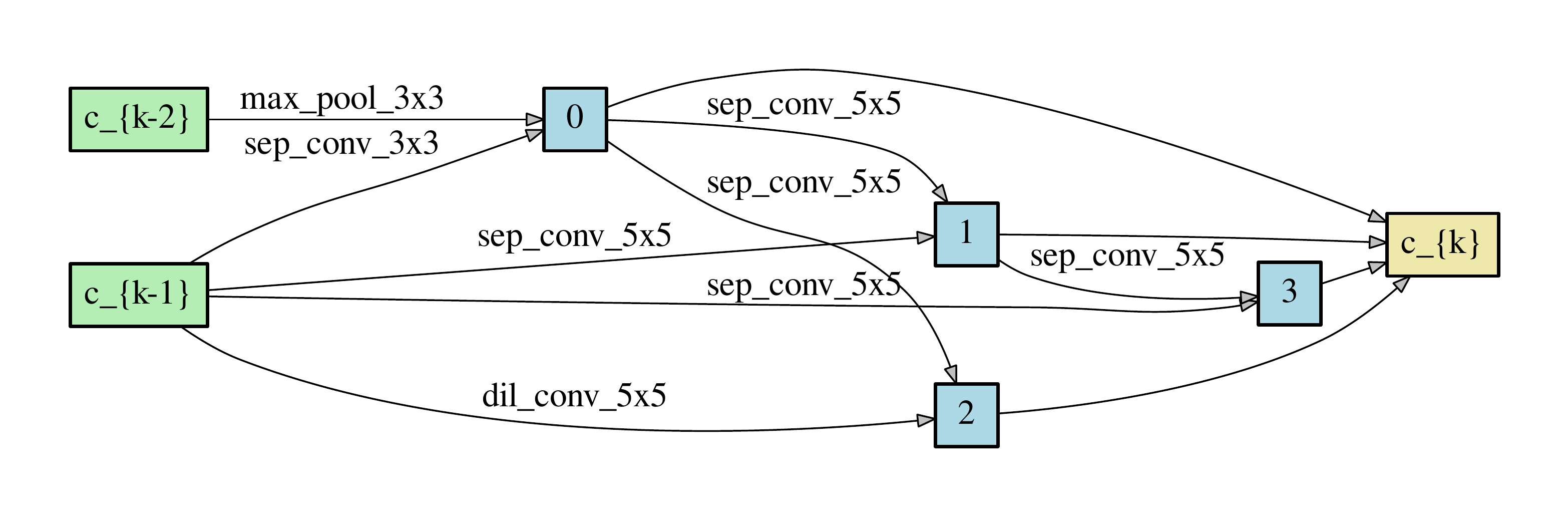}
  \end{minipage} 
  }  
  
 \caption{The cells discovered by FreeDARTS,  FreeDARTS, and FreeDARTS$\ddagger$ with different \textit{random seeds} on the DARTS space.}
 \label{fig:archs_remianings}
\end{figure*}

\section{Proofs}\label{proof_section}

\begin{proof} [Proof of Theorem \ref{lemma_complexity}]

According to \cite{jacot2018neural}, as the width tends to be infinity, the neural tangent kernel follows a recursive relation:
\begin{align} \label{eq:agge}
    \Theta^{(l)}(x,x') = \Theta^{(l-1)}(x,x') \dot \Sigma^{(l)}(x,x') + \Sigma^{(l)}(x,x')  \\
    \hat \Theta^{(l)}(x,x') = \hat \Theta^{(l-1)}(x,x') \dot {\hat \Sigma}^{(l)}(x,x') + \hat \Sigma^{(l)}(x,x')
\end{align}
where we use $\hat \Theta$ to denote neural tangent kernel for network $\hat f$. Then NNGP matrix $\Sigma$ and $\dot \Sigma$ are defined as follows:
\begin{align*}
    {\Sigma}^{(l)}({x,x'}) =  \mathbb{E} \left[ \sigma\left(  h^{(l-1)}(x) \theta^{(l)} \right) \sigma\left(  h^{(l-1)}(x') \theta^{(l)} \right) \right] \\
    \dot {\Sigma}^{(l)}({x,x'}) =  \mathbb{E} \left[ \dot \sigma\left(  h^{(l-1)}(x) \theta^{(l)} \right) \dot \sigma\left(  h^{(l-1)}(x') \theta^{(l)} \right) \right]
\end{align*}
Assume all the previous layers are the same for two neural network, thus at the limit, we have
\begin{align*}
    {\Theta}^{(l-1)}({x,x'}) = \hat {\Theta}^{(l-1)}({x,x'})
\end{align*}

By central limit theorem, we have 
\begin{align*}
    \hat {\Sigma}^{(l)}({x,x'}) &= \rho {\Sigma}^{(l)}({x,x'})
\end{align*}
Similarly, we have,
\begin{align*}
    \hat {\dot \Sigma}^{(l)}({x,x'}) &= \rho {\dot \Sigma}^{(l)}({x,x'})
\end{align*}
Plugging above equations together, we arrive at a conclusion that
\begin{align*}
   \hat \Theta^{(h)}(x,x') = \rho \Theta^{(h)}(x,x')
\end{align*}
Recall that
\begin{align*}
    \mathcal{M}_\textup{Trace}(\Theta)=\sqrt{\left \| \Theta  \right \|_{\textup{tr}}/n} 
\end{align*}
We then have
\begin{align*}
     \lim_{ m \rightarrow \infty }  \mathcal{M}_\textup{Trace}(\hat f)  =  \sqrt{\rho} \lim_{ m \rightarrow \infty } \mathcal{M}_\textup{Trace}(f) 
\end{align*}

\end{proof}

\begin{proof}[Proof of Theorem \ref{lemma:trace}]

For ZEROS score, we have 
\begin{align*}
   \mathcal{F}({\alpha_{k}})  & =  \left | \frac{1}{n}\sum_{i=1}^n \frac{\partial \ell_i(W, \alpha)} { \partial \alpha_{k}} \cdot \alpha_k \right |  = |\alpha_k | \cdot \left | \frac{1}{n} \sum_{i=1}^n \frac{\partial \ell_i(W, \alpha)} { \partial f_i }  \frac{\partial f_i(W, \alpha)} { \partial \alpha_k} \right | \\
  & = |\alpha_k | \cdot \left | \frac{1}{n} \sum_{i=1}^n \frac{\partial \ell_i(W, \alpha)} { \partial f_i }  \frac{\partial f_i}{\partial h_i} \frac{\partial h_i}{\partial \alpha_k} \right |  \le \frac{|\alpha_k|}{n} \cdot \sum_{i=1}^n \left |   \frac{\partial \ell_i(W, \alpha)} { \partial f_i } \frac{\partial f_i}{\partial h_i} \frac{\partial h_i}{\partial \alpha_k}  \right |  \\
  & \le \frac{|\alpha_k|}{n} \cdot \sum_{i=1}^n \left |   \frac{\partial \ell_i(W, \alpha)}  { \partial f_i }\right |  \left \|  \frac{\partial f_i}{\partial h_i}  \right \|_2   \left  \| h_{ik} \right \|_2  \le \frac{|\alpha_k| \beta B}{n} \cdot \sum_{i=1}^n  \| h_{ik}   \|_2  \\
   & \le \frac{|\alpha_k| \beta B}{n} \cdot \sqrt{ n \sum_{i=1}^n  \| h_{ik}   \|^2_2 } \le  {|\alpha_k| \beta B}  \mathcal{M}_\mathrm{trace}(\Sigma_k)   \\
   & \le   {|\alpha_k|  \beta B }    \mathcal{M}_\mathrm{trace}(\Theta_k)
\end{align*}
where $\mathcal{M}_\textup{Trace}(\Sigma_k)=\sqrt{\left \| \Sigma_{k}  \right \|_{\textup{tr}}/n }$. We have used $ \left \|  \frac{\partial f_i}{\partial h_i}  \right \|_2  \le B $, and $ [\Sigma_k]_{ij} = \langle h_{ik}, h_{jk} \rangle $ is the NNGP. Last inequality has used the property of $ \Theta \succ \Sigma $ according to Eq. (\ref{eq:agge}).

\end{proof}

\begin{proof}[Proof of Theorem \ref{theorem:generalization}]

By Theorem 5.1 of \cite{arora2019fine}, the population risk $\mathcal{L}_{\mathcal{D}} = \mathbb{E}_{(x,y)\sim \mathcal{D}(x,y)}[\ell(f_T(x; \theta),y)]$ that is bounded as follows:
\begin{equation}
\begin{aligned}
   \mathcal{L}_\mathcal{D} \le \sqrt{\frac{2 { y^\top  {\Theta}^{-1}(X, X) y} }{n}}+O \bigg(\sqrt{\frac{\log \frac{n}{\lambda_0 \delta}}{n}} \bigg).
\end{aligned}
\end{equation}
Then we find that,
\begin{align*}
 &  \| \Theta  \|_{\mathrm{tr}}  \cdot \| \Theta^{-1}\|_{\mathrm{tr}}   =  \sum_{i=1}^n \lambda_i   \sum_{i=1}^n \frac{1}{\lambda_i}    \le \frac{\lambda_{\max}}{\lambda_{\min}}  n^2 \\
\end{align*} 
where we have used the decomposition for NTK $\Theta = V^\top \Lambda V$, with $V = [v_1, v_2, \cdots, v_n] $, and $\Lambda = \mathrm{diag}( \lambda_1, \lambda_2, \cdots, \lambda_n)$.
Thus we have,
\begin{align*}
    \sqrt{\frac{2 { y^\top  {\Theta}^{-1}(X, X) y} }{n}} \le \sqrt{ \frac{2 \| y\|^2_2 \| \Theta^{-1} \|_\mathrm{tr} }{n}} \le 
    \sqrt{ 2 \frac{\lambda_{\max} n \| y\|^2_2}{\lambda_{\min} \|\Theta \|_\mathrm{tr}}  } = \sqrt{ \frac{2 \lambda_{\max}  \| y\|^2_2 } {\lambda_{\min}}}    /\mathcal{M}_\mathrm{trace}(\Theta,\alpha)
\end{align*}

Finally, the NTK of supernet can be expressed as,
\begin{align*}
    \Theta(\theta,\alpha) & =  \sum_{l=1}^L \sum_{k=1}^{M} \frac{\partial f(\alpha)}{ \partial  \theta^{(l)}_{k} } \left(\frac{\partial f(\alpha)}{ \partial \theta_{k} } \right)^\top = \sum_{l=1}^L \sum_{k=1}^{M} \alpha^2_{lk} \Theta_{lk}
\end{align*}
therefore, we have the following result to complete the proof:
\begin{align*}
\mathcal{M}^2_\mathrm{Trace}(\Theta, \alpha)=\sum_{l=1}^{L}\sum_{k=1}^{M} \alpha^2_{lk} \mathcal{M}^2_{\mathrm{Trace}}(\Theta_{lk})
\end{align*}

\end{proof}

%\begin{proof}[Proof of Corollary \ref{remark_grasynflow}]
%For SynFlow, we also follow the proof of Lemma \ref{lemma_unify} to consider the input in our $\mathcal{F}_\mathrm{SynFlow}$ is the same input $\mathrm{x}$ as $\mathcal{F}_\mathrm{SNIP}$. In this way, the derivation is similar to SNIP when the sum function $\ell$ is also $\beta$-Lipschitz continuous.
%\end{proof}

\end{document}